\documentclass{article}
\usepackage{verbatim}
\usepackage{authblk}
\usepackage[utf8]{inputenc}
\usepackage{amssymb}
\usepackage{amsmath}
\usepackage{amsthm}
\usepackage{xspace}
\usepackage{xcolor}
\usepackage{algorithm}
\usepackage{algpseudocode}
\usepackage{graphicx}
\usepackage{mathtools}
\usepackage{url}

\newtheorem{lemma}{Lemma}
\newtheorem{definition}{Definition}
\newtheorem{theorem}{Theorem}
\newtheorem{corollary}{Corollary}
\newenvironment{proofof}[1]{\begin{proof}[Proof of~#1]}{\end{proof}}

\newcommand{\ooea}{(1+1)~EA\xspace}

\newcommand{\mupoea}{($\mu$+1)~EA\xspace}
\newcommand{\muoneea}{\mupoea}
\newcommand{\muporls}{($\mu$+1)~RLS\xspace}

\newcommand{\om}{\textsc{OneMax}\xspace}
\newcommand{\onemax}{\om}
\newcommand{\twomax}{\textsc{TwoMax}\xspace}
\newcommand{\sufsamp}{\textsc{RidgeWithBranches}\xspace}
\newcommand{\truncatedtwomax}{\textsc{TruncatedTwoMax$_k$}\xspace}

\newcommand{\R}{\ensuremath{\mathbb{R}}}
\newcommand{\N}{\ensuremath{\mathbb{N}}}
\newcommand{\Bin}{\ensuremath{\mathrm{Bin}}} 
\newcommand{\Bernoulli}{\ensuremath{\mathrm{Bernoulli}}} 

\newcommand{\ie}{i.\,e.\xspace}
\newcommand{\eg}{e.\,g.\xspace}

\newcommand{\wop}{w.\,o.\,p.\xspace}

\newcommand{\abs}[1]{\lvert #1\rvert}
\newcommand{\onenorm}[1]{\lvert #1\rvert_{1}}
\newcommand{\ones}[1]{\lvert #1\rvert_{1}}
\newcommand{\card}[1]{\abs{#1}}

\usepackage{tikz}
\usepackage{pgfplots}
\pgfplotsset{compat = 1.14}
\usepgfplotslibrary{fillbetween}
\usepgfplotslibrary{statistics}
\usepackage{pgfplotstable} 
\usetikzlibrary{decorations.markings}
\usetikzlibrary{calc}

\newcommand{\plotdatapath}[1]{figures/data/#1}

\newcommand{\plotwitherr}[6]{
\addplot[forget plot,name path=hierr,mark=none,draw=none] table[x=#3,y=EH/#4] {\plotdatapath{err-#6}};%
\addplot[forget plot,name path=loerr,mark=none,draw=none] table[x=#3,y=EL/#4] {\plotdatapath{err-#6}};%
\addplot[forget plot,opacity=0.1,#1,on layer=pre main] fill between[of=hierr and loerr];%
\addplot[#1,#2] table[x=#3,y=#4] {\plotdatapath{#6}};
}
\newcommand{\plotwitherx}[6]{\plotwitherr{#1}{#2,dotted,forget plot}{#3}{#4}{#5}{#6}}

\newcommand{\boxplotheight}{0.32\linewidth}

\newcommand{\peter}[1]{\textcolor{black}{ #1}}

\DeclareMathOperator{\Prob}{Pr}

\DeclareMathOperator{\polylog}{polylog}

\DeclareMathOperator{\PO}{PO}
\DeclareMathOperator{\LSO}{LSO}
\newcommand{\sufpar}{\textsc{TwoGradients}}
\newcommand{\prob}[1]{\Pr\left(#1\right)}


\usepackage{booktabs} 

\begin{document}

\title{
On Steady-State Evolutionary Algorithms and Selective Pressure: Why Inverse Rank-Based Allocation of Reproductive Trials is Best}

\author[1]{Dogan~Corus\thanks{d.corus@sheffield.ac.uk}}
\author[1]{Andrei~Lissovoi\thanks{a.lissovoi@sheffield.ac.uk}}
\author[1]{Pietro~S.~Oliveto\thanks{p.oliveto@sheffield.ac.uk}}
\author[2]{Carsten~Witt\thanks{cawi@imm.dtu.dk}}

\affil[1]{The University of Sheffield}
\affil[2]{Technical University of Denmark}

\renewcommand\Authands{ and }
				
				%
%
%
%
%

\begin{abstract}

We analyse the impact of the selective pressure for the global optimisation capabilities of steady-state EAs. For the standard bimodal benchmark function \twomax we rigorously prove that using uniform parent selection leads to exponential runtimes with high probability to locate both optima for the standard ($\mu$+1)~EA and ($\mu$+1)~RLS with any polynomial population sizes. 
On the other hand, we prove that selecting the worst individual as parent leads to efficient global optimisation with overwhelming probability for reasonable population sizes. Since always selecting the worst individual may have detrimental effects for escaping from local optima, we consider the performance of stochastic parent selection operators with low selective pressure for a function class called  \textsc{TruncatedTwoMax} where one slope is shorter than the other. An experimental analysis shows that the EAs equipped with inverse tournament selection, where the loser is selected for reproduction and small tournament sizes, globally optimise \twomax efficiently and effectively escape from local optima of  \textsc{TruncatedTwoMax} with high probability. Thus they identify both optima efficiently while uniform (or stronger) selection fails in theory and in practice. We then show the power of inverse selection on function classes from the literature where populations are essential by providing rigorous proofs or experimental evidence that it outperforms uniform selection equipped with or without a restart strategy. We conclude the paper by confirming our theoretical insights with an empirical analysis of the different selective pressures on standard benchmarks of the classical MaxSat and Multidimensional Knapsack Problems.
\end{abstract}


\maketitle

\section{Introduction}
In natural evolution, the better individuals adapt to the environment, 
the more likely it is that they are selected for reproduction -- that they generate offspring~\cite{Darwin1859}.
Indeed, 
canonical {\it generational} genetic algorithms (GAs) rely on this concept to efficiently evolve populations of candidate solutions~\cite{Goldberg1989,Holland1992}.
Generational evolutionary algorithms (EAs) model natural evolution by using the current population to create a completely new population of the same size in each generation (i.e., all individuals die by the end of their generation). The more individuals of the current population are fit {\color{black} (i.e., have high objective function value)}, the more likely they are to generate offspring for the next generation. These offspring will be part of the next population independently of their fitness but the probability they will reproduce, in turn, depends on how fit they are compared to the rest of the population.
All the commonly used selection operators in generational EAs and GAs (such as {\color{black}e.g.}, fitness proportional, tournament, ranking and comma selection) rely on this described property of natural evolution to increase the average fitness of the population over time.
Roughly speaking, the probability that fit individuals are selected for reproduction and produce a surviving offspring in generational EAs and GAs is commonly referred to as {\it selective pressure}. The higher the selective pressure, the faster diversity is lost due to the best individuals taking over the population (i.e., {\it takeover time}).
On the other hand, it is well understood that sufficiently high selective pressure is required for the population of a generational EA to evolve effectively~\cite{LehreEtAl2018}. In particular, if the selective pressure is not high enough, then generational EAs will require exponential time to optimise any function with unique optimum~\cite{Lehre2010}.

More recently introduced evolutionary algorithms, typically referred to as {\it steady-state GAs} \cite{Whitley1989,Syswerda1989} (or {\it steady-state EAs} according to whether they use crossover or not~\cite{WittFamily}), rely on {\it elitist} selection (often referred to also as {\it cut, truncation} or {\it plus} selection)  between consecutive generations. 
Differently to canonical ones, steady-state algorithms (EAs or GAs) evolve overlapping populations. They create an offspring population of smaller size (i.e.\ $\lambda$) compared to the parent population (of size $\mu>\lambda$) and the worst individuals from the union of both populations (i.e. of total size $\mu$+$\lambda$) are removed to obtain a new population of the same size as that of the parent population for the next generation.
This considerable change in the evolutionary process means that the best genotypes do not disappear from the population in the next generation
even if they do not reproduce. Such a difference implies that, if elitism is used, it is not {\it crucial} that better individuals have higher probabilities of reproducing for a population to evolve effectively.
In particular, from the point of view of the  {\it survival of the fittest}~\cite{Darwin1868,Spencer}, elitism makes the choice of how parents are selected for reproduction irrelevant (i.e.,  the probability that the best $\mu-\lambda$ individuals will be in the next generation is 1). 
From this point of view, the choice of the parent selection mechanism becomes arbitrary. 
While when steady-state evolutionary algorithms were originally introduced in the genetic algorithm community individuals were ranked and selected with a bias towards the top~\cite{Whitley1989,Syswerda1989}, indeed nowadays
parents are commonly selected uniformly at random for reproduction in steady-state EAs and GAs\footnote{This is probably due to the fact that the German {\it evolution strategies}  community, where continuous optimisation though was the main focus, used uniform parent selection in ($\mu+\lambda$)~ESs~\cite{BackHoffmeisterSchwefel1991}. 
While for the mentioned reasons this clearly makes sense, very successful steady-state GAs may still apply some parent selective pressure such as the binary tournament selection used in Chu and Beasley's GA. ~\cite{Chu1998}.}~\cite{WittFamily, FriedrichOSWECJ09, DangEtAlTEVC2018, CorusOlivetoTEVC2018, Sutton2018, Lengler2018}.

From a different perspective, the parent selection operator may considerably impact the overall performance of the algorithm (i.e., the expected time for the global optimum to be found).
In particular, we argue that the selective pressure commonly used in steady-state EAs is too high 
because, the higher the probability of selecting better individuals, the quicker the population is taken over by the best one, hence the faster {\it premature convergence}~\cite{Goldberg1989,SudholtBookChapter2019} occurs.
Such a problem was already argued shortly after the introduction of steady-state GAs to the community~\cite{DebGoldberg1991}.   
However, the only solutions considered to avoid premature convergence with overlapping populations were either to increase the population
size or to use multiple populations (niching).

In this paper we analyse the impact of parent selection on the performance of steady-state EAs for global optimisation.
To this end, we consider the performance of the well studied ($\mu$+1)~EA for the standard bimodal benchmark  function \twomax. The function has been used in the theory of  evolutionary computation to evaluate the exploration capabilities of EAs in discrete search spaces because, since the optima are located very far away from each other, it is very challenging to evolve a population that contains both of them~\cite{FriedrichOSWECJ09, SudholtBookChapter2019, OlivetoSudholtZarges2018, CovantesOsunaSudholt2017}.
In particular, \twomax has often been used to evaluate the performance of diversity enhancing mechanisms for the ($\mu$+1)~EA because, without such mechanisms, the algorithm has been proven to be unable to identify both optima of the function in polynomial time with high probability if the population size is at most sublinear (i.e. $\mu=o(n/\log n)$)~\cite{FriedrichOSWECJ09}.

However, no proof is available that the algorithm would not be efficient using larger population sizes. Our first result is the proof that it is not. In particular, we show that the ($\mu$+1)~EA and the ($\mu$+1)~RLS with standard uniform parent selection cannot identify both optima of \twomax in polynomial time with 1 - $o(1)$ probability for any population of polynomial size.
The previous analysis from the literature assumed pessimistically that the population has nearly optimised both branches when the takeover occurs.
Our analysis reveals that the takeover occurs much earlier: not long after initialisation. Hence, the negative effects of premature convergence are emphasised\textcolor{black}{: while \twomax displays local and global optima at opposite extremities of the search space, since the takeover occurs very early it may negatively affect the algorithm on much larger classes of multimodal functions.}

On the other hand, we also prove that it is not necessary to use diversity enforcing mechanisms for the two algorithms to identify both optima of \twomax: decreasing the selective pressure suffices!
As a proof of concept we decrease it to a minimum by always selecting the worst individual of the population as parent (breaking ties uniformly at random).
We prove that such a decrease leads the algorithms to the efficient global optimisation of \twomax for reasonable population sizes
i.e., the expected runtime is $O(\mu n \log n)$ with 
$\mu=\Omega(n^{1+\epsilon})$ for the ($\mu$+1)~RLS and 
$\mu=\Omega(n^{1+2\epsilon})$ for the ($\mu$+1)~EA.

However, while we are arguing for low selective pressures, in general we do not recommend selecting the worst individual for reproduction as it may have serious detrimental effects for the global optimisation capabilities of steady-state EAs.
For example, if the worst individual of the population is stuck on a local optimum, then it will be selected for reproduction until it manages to escape.
For some local optima this may be prohibitive, and even require super-polynomial time. To tackle this problem  we
consider the usage of stochastic parent selection operators that do not deterministically select the worst individual of the population. 
To this end, we consider tournament selection with (small) tournament sizes where the {\it loser} of the tournament is chosen for reproduction\peter{\footnote{We use inverse tournament for simplicity of implementation. The same results can be achieved with Inverse Rank Selection.}}.
Thus, we increase the selective pressure while still keeping it lower than that of uniform selection.
To evaluate the performance of the algorithm at escaping from local optima, we modify \twomax such that one slope is shorter than the other and call the resulting function class \textsc{TruncatedTwoMax$_k$} where the parameter $k$ defines the height of the local optima.

A corollary of our first result shows that the \mupoea with standard uniform selection will only identify one of the peaks with high probability and this will be the lower one with probability at least $1/2 - o(1)$.
An experimental analysis confirms that {\it Inverse Elitist} selection is very effective for the global optimisation of \twomax and that for \textsc{TruncatedTwoMax} with a probability of approximately 1/2 it gets stuck on the lower branch unless an unlikely takeover happens before. On the other hand, the experiments reveal that {\it Inverse Tournament Selection} is able to escape from the local optima, hence is efficient for both \twomax and \textsc{TruncatedTwoMax} with approximately linear population sizes.

Although the \twomax benchmark function is commonly used to evaluate the exploration capabilities of population based EAs, a single trajectory algorithm with a restart strategy suffices to optimise the function very efficiently.
Hence, we also evaluate the performance of the {\it Inverse Elitist} and {\it Inverse Tournament Selection} on benchmark functions that have previously been used in the literature to highlight problem characteristics where populations are essential.
In particular, problem classes where single trajectory EAs fail with overwhelming probability (w.o.p.), 
hence restart strategies are also ineffective. 
To this end, we pick the multimodal  function \textsc{SufsAmp} (which we generalise and prefer to call \sufsamp) 
previously used in the literature to highlight fitness landscape characteristics where the use of offspring populations is essential
~\cite{JansenSufSamp}
and a bimodal function class, that we call \sufpar, which has been used in the literature to 
highlight circumstances where parent populations are essential~\cite{WittFamily}. 
For both functions a single trajectory \ooea gets stuck on a local optimum w.o.p. and restarts will not help to reduce the runtime to polynomial in the problem size.
On the other hand, the respective population based EAs (i.e., (1+$\lambda$)~EA for \sufsamp and the 
\mupoea for  \sufpar) can efficiently identify the global optimum by avoiding the local optima.
However, if the fitness values of the local and global optima are swapped appropriately, then the opposite effects can be shown: the single trajectory algorithms are effective while the population-based algorithms get stuck w.o.p.
Our contributions are the following. Concerning \sufsamp, we rigorously prove that the \mupoea with {\it Inverse Tournament Selection} efficiently identifies all of the many optima of the function, hence also the global one independent of which optimum is set to have the maximal value.
Concerning \sufpar~we rigorously prove that the \mupoea with inverse binary tournaments can find 
the global optimum efficiently just like the \mupoea with uniform selection.
Additionally, we show experimentally that, while the \ooea w.o.p. gets trapped in one optimum and the uniform selection \mupoea on the other, 
the  {\it Inverse Tournament Selection} algorithms identify both optima with reasonable probability, hence can optimise the function efficiently independent of which local optimum is defined to be the optimal one. 

We conclude the paper by comparing experimentally the inverse selection operators against standard uniform selection for well studied benchmark instances of two NP-Hard problems, MaxSat and the Multidimensional knapsack problem (MKP).
While for the MaxSat instances, independent runs of single trajectory algorithms outperform the population based EAs, such a strategy is detrimental for MKP\footnote{For MaxSat we implemented a \mupoea with deterministic crowding where offspring only compete with their parent. The difference in performance with $\mu$ parallel (1+1)~EAs should be minimal and not noticeable.}. 
On the other hand, for both problems the inverse selection algorithms outperform uniform selection with statistical significance i.e., better solutions are found with higher probability.

The rest of the paper is structured as follows.
In the next section we introduce the \mupoea and \muporls together with the different selection operators and the other algorithms that we will consider in the paper. We also present the \twomax and \truncatedtwomax function classes and some tools for the analyses.
In Section \ref{sec:high} we prove that standard uniform selection or higher selective pressure does not allow the efficient global optimisation of \twomax independent of the population size.
In Section \ref{sec:low} we prove that the algorithm becomes efficient if the worst individual is selected for reproduction.
In Section \ref{sec:mid} we present an experimental analysis providing evidence that inverse tournament selection with small tournament sizes is preferable to the previously considered selection operators for the optimisation of \twomax and \truncatedtwomax.
In Section \ref{sec:norestarts} we show the superiority of the inverse selection operators over uniform selection for problems where populations are essential and where restart strategies of single trajectory algorithms fail w.o.p.
In Section \ref{sec:npcomplete} we confirm our theoretical intuitions with an experimental analysis of the considered algorithms for the two NP-Hard problems.
We finish with a discussion and some conclusions. 

\section{Preliminaries}
We will consider the well-studied \mupoea described in Algorithm~\ref{alg:mupoea}. The algorithm starts by selecting $\mu$ individuals chosen uniformly at random. Then at each iteration an individual is selected from the population (parent selection) and an offspring is generated by mutating each bit of the selected individual with probability {\color{black}$p$}
\textcolor{black}{(in this paper we will use the standard mutation rate $p=1/n$).} Then, the worst individual is removed from the population breaking ties randomly. 
We will also consider its random local search variant called \muporls that flips exactly one bit per 
mutation (instead of using the mutation rate {\color{black}p}).

\begin{algorithm}[t]
\caption{\mupoea}
\label{alg:mupoea}
\begin{algorithmic}
\State Sample $\mu$ search points from $\{0,1\}^n$ independently and uniformly at random and put them into the multiset~$P$.
\For{$t \gets 1, 2, \dots$}
\State Pick $x\in P$ according to the parent selection operator.
\State Create $y$ by flipping each bit of $x$ independently with probability $1/n$.
\State Remove $\arg\min\{f(z)\,\mid\,z\in P\cup\{y\}\}$ {\color{black} from $P$}, breaking ties randomly.
\EndFor
\end{algorithmic}
\end{algorithm}

Our aim is to analyse how the parent selection operator affects the global optimisation performance of the algorithm. 
\textcolor{black}{Different definitions have been provided in the literature to refer to the {\it global optimisation} of multimodal optimisation problems.
Some researchers use the term to refer to the goal of identifying one single global optimum, while others refer to the identification of multiple optimal solutions and even to a collection of global and local optima. We refer to~\cite{PreussBook} for an overview of the available definitions in the literature.
While the insights provided in this paper apply to any of these definitions, in this paper we refer to global optimisation as the task of identifying one global optimum.
In particular, all the benchmark problems considered in this paper have only one global optimum (for \twomax either optimum may be designated to be the global one but, in order for an algorithm to identify it with high probability, it is necessary to visit both).}
We will consider the following operators enumerated from highest to lowest selective pressure: 
\begin{enumerate}
\item {\bf $K$-Tournament selection:} Select $K$ individuals uniformly at random from the population and choose the winner (i.e., best) of the tournament as parent;
\item {\bf Uniform Selection:} choose the parent uniformly at random from the population;
\item {\bf Inverse $K$-Tournament selection:} Select $K$ individuals uniformly at random and choose the loser (i.e., worst) of the tournament as parent;   
\item {\bf Inverse Elitist Selection:} choose the parent with worst fitness from the population breaking ties uniformly at random.
\end{enumerate}


\begin{algorithm}[t]
\caption{\mupoea with deterministic crowding}
\label{alg:crowding}
\begin{algorithmic}
\State Sample $\mu$ search points from $\{0,1\}^n$ independently and uniformly at random and put them into the multiset~$P$.
\For{$t \gets 1, 2, \dots$}
\State Pick $x\in P$ uniformly at random.
\State Create $y$ by flipping each bit of $x$ independently with probability $1/n$.
\State {\bf If} $f(y) \geq f(x)$  {\bf then} remove  $x$ from $P\cup\{y\}$, {\bf else} keep $P$.
\EndFor
\end{algorithmic}
\end{algorithm}

Apart from comparing different selective pressures for the \mupoea, we will also consider the use of diversity mechanisms from the literature for which good global exploration capabilities have been proven.
In particular, if the use of problem specific knowledge is not allowed, then {\it deterministic crowding} is the best diversity mechanism that has been analysed for the \twomax benchmark problem we consider in the first part of this paper.
The \mupoea using deterministic crowding is described in Algorithm~\ref{alg:crowding}. It optimises {\color{black}\twomax} in $O(\mu n \log n)$ fitness evaluations with probability at least $1-2^{-\mu}$. The difference with the standard \mupoea is that offspring only compete with their parent rather than with the whole population.
The algorithm closely resembles $\mu$ parallel (1+1)~EAs since the individuals explore the landscape independently. 
\textcolor{black}{Indeed a (1+1)~EA with a simple restart strategy optimises \twomax with the same probability if $\mu$ is the number of performed restarts or parallel runs.}
The (1+1)~EA is obtained from Algorithm \ref{alg:mupoea} by setting $\mu=1$ and preferring the offspring to the parent when breaking ties. 
Other theoretically studied diversity mechanisms are either ineffective (genotype diversity), detrimental (fitness diversity) or require problem knowledge to be made efficient (fitness sharing and clearing\footnote{Fitness sharing has only been proven to be efficient for \twomax if the knowledge that it is a unitation function is  exploited by the sharing function~\cite{FriedrichOSWECJ09,OlivetoSudholtZarges2018}. Without problem knowledge, hence using the Hamming distance for the sharing function, it has been experimentally shown that fitness sharing does not make the \mupoea efficient for \twomax~\cite{OlivetoSudholtZarges2018}. Similar argumentations also hold for Clearing which requires large population sizes and an appropriate niching radius to optimise the function efficiently~\cite{CovantesOsunaSudholt2017}.}) for \twomax~\cite{FriedrichOSWECJ09, SudholtBookChapter2019, OlivetoSudholtZarges2018, CovantesOsunaSudholt2017} hence we do not consider them in this paper. 

We are interested in the optimisation (here: maximisation) of the \twomax function,
which is defined as $\twomax:\{0,1\}^n\to\R, x\mapsto \left\lvert n/2-\ones{x}\right\rvert$. This function 
has two global optima, 
the all-ones and the all-zeros string. Leading to each optimum, there is
a so-called \emph{branch} of perfect fitness-distance correlation: 
if $\ones{x}>n/2$ then the fitness 
is $n/2$ minus the Hamming distance to the all-ones string and we say that $x$ is on the right-hand branch; analogously 
for $\ones{x}<n/2$, the Hamming distance to the all-zeros string is considered and we say that $x$ is on the so-called left-hand branch. A search point with $n/2$ ones, having 
fitness~$0$ is said to be in the \emph{valley}. The goal of our analyses is to evaluate whether the EAs can identify both optima of \twomax in expected polynomial {\it time} (i.e., the number of fitness function evaluations is polynomial with respect to the problem size $n$).
The function is depicted in Figure~\ref{fig:fitness-functions}.

\begin{figure}[t]
\centering
\begin{tikzpicture}
\begin{axis}[ylabel=Fitness,xlabel=$|x|_1$,
	width=0.9\linewidth,
	height=0.40\linewidth,
	xtick={0,15,50,100},
	xticklabels={$0\vphantom{/2}$,$n/2-k$,$n/2$,$n\vphantom{/2}$},
	ytick={0,35,50},
	yticklabels={0,$k$,$n/2$},
	unbounded coords=jump,
	legend pos=south east,
	reverse legend,
]

\addplot[no marks,cyan,line width=2.5pt] coordinates {(0,0) (15,0) (15,nan) (15,35) (50,0) (100, 50)};
\addplot[no marks,orange,line width=1pt] coordinates {(0,50) (50,0) (100, 50)};


\end{axis}
\end{tikzpicture}
	\caption{\twomax (orange) and \truncatedtwomax (thick blue) functions.}
	\label{fig:fitness-functions}
\end{figure}

We also consider the following generalisation 
of \twomax:
\begin{equation*}
\truncatedtwomax(x) 
= \left\{ 
\begin{array}{rl}
\twomax(x) & \text{if } |x|_1 \geq n/2 -k, \\
0 & \text{otherwise.}
\end{array} \right.
\end{equation*}
The parameter $k$ defines the height of the left branch, from 0 (i.e., no left branch) to $n/2$ (i.e., \twomax).
The lower the parameter value, the easier is the optimisation of the function because
the likelihood that individuals on the local optima take over the population decreases.
The function is depicted in Figure~\ref{fig:fitness-functions}.


In our analyses we will need the following useful estimates for the probability function 
as well as the density of the binomial distribution with parameters $n$ and~$p$.

\begin{lemma}[Theorems~1.2,~1.5 and~1.6 in \cite{bollobas}]
\label{lem:bollobas-one-two}
Let $X\sim\Bin(n,p)$, $q=1-p$ and suppose $np\ge 1$. For all $h \ge 3/(nq)$ it holds that
\[
\prob{X = np + h} \le \frac{1}{\sqrt{2\pi p q n}} e^{-\frac{h^2}{2pqn} + \frac{h}{qn} + \frac{h^3}{p^2n^2}}
\]
Also, for all $h\ge 0$ where $np+h$ is an integer less than~$n$, it holds that
\[
\prob{X = np + h} \ge \frac{1}{\sqrt{2\pi p q n}} \
e^{-\frac{h^2}{2pqn} - \frac{h^3}{2q^2n^2} - \frac{h^4}{3p^3n^3} -\frac{h}{2pn} - \frac{1}{12(np+h)}  
- \frac{1}{12(nq-h)}}
\]
Finally, if $pqn=\omega(1)$, $h>0$, $h=\omega(\sqrt{pqn})$, $h=o((pqn)^{2/3})$ as well as $np+h\le n$ then
\[
\prob{X\ge np + h} = (1\pm o(1)) \frac{\sqrt{pqn}}{h\sqrt{2\pi}} e^{-\frac{h^2}{2pqn}}
\]
\end{lemma}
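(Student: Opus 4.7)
Since these three estimates are restatements of classical results from Bollob\'as's book, the ``proof'' in the paper will presumably be ``see the cited reference.'' Nonetheless, sketching how I would prove them from scratch: the plan is to apply Stirling's formula directly to each factorial in the binomial PMF
\[
\Pr[X = np+h] = \binom{n}{np+h} p^{np+h} q^{nq-h}
\]
and then Taylor-expand a logarithm. Using Stirling in the form $k! = \sqrt{2\pi k}\,(k/e)^k \exp(\theta_k/(12k))$ with $\theta_k \in (0,1)$ for each of $n!$, $(np+h)!$, $(nq-h)!$, the polynomial part collapses to a leading factor $1/\sqrt{2\pi pqn}$ (the $\sqrt{n/((np+h)(nq-h))}$ arising from the square roots differs from $1/\sqrt{pqn}$ only by a multiplicative $(1+O(h^2/(pqn)))$ which is absorbed into the exponent), times $\exp(G(h))$ with
\[
G(h) = -(np+h)\ln\!\left(1 + \tfrac{h}{np}\right) - (nq-h)\ln\!\left(1 - \tfrac{h}{nq}\right),
\]
and an additive Stirling correction whose magnitude is exactly $\tfrac{1}{12(np+h)} + \tfrac{1}{12(nq-h)} + O(1/n)$. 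This correction accounts for the terms $\tfrac{1}{12(np+h)}$ and $\tfrac{1}{12(nq-h)}$ appearing in the lower bound.

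The second step is to Taylor-expand $G(h)$ using $\ln(1+x) = x - x^2/2 + x^3/3 - \cdots$. After multiplying out, the linear terms $-h$ and $+h$ cancel, the quadratic contributions combine via $\tfrac{1}{np} + \tfrac{1}{nq} = \tfrac{1}{pqn}$ to give the main exponent $-h^2/(2pqn)$, and the cubic terms give contributions of order $h^3/(p^2 n^2)$ and $h^3/(q^2 n^2)$. For the upper bound (1.2), one keeps the positive remainders and discards the negative ones; the condition $h \ge 3/(nq)$ is just enough to dominate the $h/(qn)$ error coming from the $(np+h)$ prefactor after dividing through by $np$. For the lower bound (1.5), the expansion must be carried one order further so that every negative correction can be matched by a term explicitly subtracted in the exponent.

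For the tail estimate (1.6), I would not try to integrate the density directly but use the ratio argument
\[
\frac{\Pr[X = k+1]}{\Pr[X = k]} = \frac{(n-k)p}{(k+1)q},
\]
which at $k = np + h + j$ equals $1 - (h+j)/(pqn) \cdot (1 + o(1))$ under the stated conditions. Hence $\Pr[X \ge np+h]$ is a near-geometric sum with ratio $\approx e^{-h/(pqn)}$ starting from the density at $np+h$, giving total mass $(1+o(1)) \cdot pqn/h$ times $\Pr[X=np+h]$. Inserting the approximation for the density proved above yields the quoted expression. The scaling conditions $h = \omega(\sqrt{pqn})$ and $h = o((pqn)^{2/3})$ are precisely what is needed to ensure, respectively, that $e^{-h/(pqn)}$ is bounded away from $1$ (so the geometric-sum approximation is valid) and that the cubic correction $h^3/(p^2 n^2)$ in the exponent is $o(1)$ (so it disappears into the $1 \pm o(1)$ factor).

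The main obstacle is not conceptual but bookkeeping: one has to track the Stirling correction, the Taylor expansion truncation error, and the tail-sum error simultaneously, matching them to the precise form of the stated bounds. The asymmetry between the upper bound (1.2) and the lower bound (1.5), in particular the appearance of $h^4/(3p^3 n^3)$ and $h/(2pn)$ only in the lower bound, reflects choices in which direction the error is absorbed; verifying these signs is the most delicate part.
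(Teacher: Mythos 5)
The paper offers no proof of this lemma at all: it is imported verbatim as Theorems~1.2, 1.5 and~1.6 of the cited Bollob\'as reference, and your sketch (Stirling applied to each factorial in the binomial PMF, Taylor expansion of the resulting logarithm with the quadratic terms combining via $\tfrac{1}{np}+\tfrac{1}{nq}=\tfrac{1}{pqn}$, and a ratio/near-geometric-sum argument for the tail) is precisely the standard argument used there, so you have correctly reconstructed the approach. The only imprecision worth noting is the role of $h=\omega(\sqrt{pqn})$: the geometric ratio $e^{-h/(pqn)}$ actually tends to~$1$ under the stated hypotheses, and what this condition really buys is that the quadratic-in-$j$ correction $j^2/(2pqn)$ is negligible over the effective range $j=O(pqn/h)$ of the tail sum, so that the sum evaluates to $(1+o(1))\,pqn/h$.
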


As a corollary from the previous lemma, we obtain  the following
asymptotically tight estimate for the case of the symmetrical binomial distribution, \ie,  
$p=1/2$: 
\begin{corollary}
\label{cor:prob-symm-binomial}
Let $X\sim\Bin(n,1/2)$. Then for all $h=o(n^{2/3})$ such that $n/2+h\in \{n/2,\dots,n\}$ it holds that 
\[
\prob{X = n/2 + h} = (1\pm o(1)) \frac{\sqrt{2}}{\sqrt{\pi n}} e^{-\frac{2h^2}{n}}
\]
as well as
\[
\prob{X\ge n/2 + h} = (1\pm o(1)) \frac{\sqrt{n}}{h\sqrt{8\pi}} e^{-\frac{2h^2}{n}}
\]
\end{corollary}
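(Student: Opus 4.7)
The plan is to obtain the corollary by directly specialising Lemma~\ref{lem:bollobas-one-two} to the symmetric case $p=q=1/2$. Under this substitution $pqn=n/4$, so the prefactor $\frac{1}{\sqrt{2\pi pqn}}$ collapses to $\frac{\sqrt{2}}{\sqrt{\pi n}}$ and the leading quadratic term $-\frac{h^2}{2pqn}$ in every exponent of the lemma becomes exactly $-\frac{2h^2}{n}$. Hence the work reduces to verifying that every remaining correction term in each exponent is $o(1)$ under the hypothesis $h=o(n^{2/3})$, so that each exponential factor becomes $(1\pm o(1))e^{-2h^2/n}$.

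For the upper bound of the point estimate, the correction terms $\frac{h}{qn}$ and $\frac{h^3}{p^2n^2}$ simplify to $\frac{2h}{n}$ and $\frac{4h^3}{n^2}$; the first is $o(1)$ because $h=o(n)$, the second is $o(1)$ because $h^3=o(n^2)$. For the matching lower bound, I would check that $\frac{h^3}{2q^2n^2}$, $\frac{h^4}{3p^3n^3}$, $\frac{h}{2pn}$ and the two $\frac{1}{12(\cdot)}$ contributions (each of size $O(1/n)$) are all $o(1)$; the only non-routine one is $\frac{h^4}{n^3}$, which is $o(n^{-1/3})$ since $h^4=o(n^{8/3})$. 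Combining the two bounds yields the claimed point estimate. For the tail estimate I would substitute $p=q=1/2$ directly into the third part of the lemma: the condition $pqn=\omega(1)$ holds since $n\to\infty$, the hypothesis $h=o((pqn)^{2/3})$ is inherited verbatim, and the resulting prefactor and exponent come out to exactly $\frac{\sqrt{n}}{h\sqrt{8\pi}}$ and $-\frac{2h^2}{n}$ as stated.

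The only subtlety, and what I expect to be the main obstacle, is at the boundary of the parameter range rather than in the algebra. The upper bound in Lemma~\ref{lem:bollobas-one-two} requires $h\ge 3/(nq)=6/n$, which is automatic for every integer $h\ge 1$; the case $h=0$ must be handled separately, using the lower bound of the lemma in one direction and the trivial estimate that any probability is at most $1$ in the other. For the tail the third part of the lemma additionally carries the hidden assumption $h=\omega(\sqrt{pqn})=\omega(\sqrt{n})$, which is implicitly inherited from the regime in which the asymptotic formula is meaningful (for smaller $h$ the right-hand side would exceed $1/2$). Once these boundary cases are dispensed with, nothing beyond routine substitution remains.
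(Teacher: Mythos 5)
Your proposal is correct and matches the paper's intended derivation exactly: the paper states the corollary without an explicit proof, precisely because it follows by substituting $p=q=1/2$ into Lemma~\ref{lem:bollobas-one-two} and checking that all correction terms in the exponents are $o(1)$ when $h=o(n^{2/3})$, which is what you do. Your observation that the tail estimate implicitly carries the extra hypothesis $h=\omega(\sqrt{n})$ from the third part of the lemma is a fair and accurate caveat that the paper leaves unstated (and which is satisfied in all of the paper's applications, e.g.\ $\ell^*=\Theta(\sqrt{n\log\mu})$ with $\mu=\omega(1)$).
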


In particular, the corollary will be used to analyse the maximum order statistic with respect to the number 
of one-bits in a uniform 
population of~$\mu$ individuals. Related results can be found in \cite{LaillevaultGECCO15}; however, 
we find the bounds based on \cite{bollobas} presented above more convenient to use.

The following two theorems,  an additive variant of the 
classical Chernoff-Hoeffding inequality and the artificial fitness levels theorem with tail 
probabilities, will be used in Section~\ref{sec:low} for the analysis of the \mupoea.
\begin{theorem}[Th.~1.11 in \cite{DoerrBookChapter}]\label{thm:cheradd}
Let $X_1,\ldots,X_n$ be independent random variables taking values in $[0,1]$. 
Let $X=\sum_{i=1}^{n}X_i$. Then for all $\lambda\geq 0$, 
$\prob{X\geq E[X] + \lambda}\leq \exp(-\frac{2\lambda^2}{n})$ and $\prob{X\leq
E[X] - \lambda}\leq \exp(-\frac{2\lambda^2}{n})$.
\end{theorem}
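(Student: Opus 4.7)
The plan is to derive this classical concentration bound via the standard Chernoff machinery combined with Hoeffding's lemma. I would establish the upper tail first, and then obtain the lower tail at no extra cost by applying the same argument to $X'_i := 1 - X_i$, which also take values in $[0,1]$ and satisfy $\sum_i X'_i = n - X$, so that $\prob{X \le \E[X] - \lambda} = \prob{\sum_i X'_i \ge \E[\sum_i X'_i] + \lambda}$ reduces to an upper-tail bound on the same scale.

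For the upper tail, for every $t > 0$ I would apply Markov's inequality to the non-negative random variable $e^{t(X - \E[X])}$, obtaining
\[
\prob{X \ge \E[X] + \lambda} \;\le\; e^{-t\lambda}\,\E\!\left[e^{t(X-\E[X])}\right].
\]
Independence then factorises the moment generating function into $\prod_{i=1}^n \E\!\left[e^{t(X_i - \E[X_i])}\right]$. The crux is to bound each factor via Hoeffding's lemma: for a centred random variable $Y$ supported in an interval of length $L$, $\E[e^{tY}] \le \exp(t^2 L^2 / 8)$. Since $Y_i := X_i - \E[X_i]$ lies in an interval of length $1$, every factor is at most $e^{t^2/8}$, so the bound becomes $\exp\!\bigl(n t^2 / 8 - t \lambda\bigr)$. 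Minimising the quadratic in $t$ at $t = 4\lambda/n$ yields exactly $\exp(-2\lambda^2/n)$, as required.

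The main obstacle is the proof of Hoeffding's lemma itself, which I would handle by a chord argument. Convexity of $y \mapsto e^{ty}$ on $[a,b]$ gives the pointwise bound $e^{ty} \le \tfrac{b-y}{b-a}\,e^{ta} + \tfrac{y-a}{b-a}\,e^{tb}$; taking expectations and using $\E[Y]=0$ reduces the claim, after the substitutions $u := t(b-a)$ and $p := -a/(b-a) \in [0,1]$, to showing that $\varphi(u) := \log\!\bigl((1-p) + p\,e^u\bigr) - p u$ satisfies $\varphi(u) \le u^2/8$. A direct computation gives $\varphi(0) = \varphi'(0) = 0$ and $\varphi''(u) = q(1-q)$ where $q := p e^u/((1-p) + p e^u) \in [0,1]$, so $\varphi''(u) \le 1/4$ uniformly. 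Integrating twice from $0$ delivers the inequality and closes the argument; no further effort is needed to pass from $[0,1]$-valued to the Bernoulli case more commonly cited in runtime analysis.
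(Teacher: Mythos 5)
This statement is imported from the literature (Theorem~1.11 in the cited book chapter) and the paper offers no proof of its own, so there is nothing paper-internal to compare against. Your argument is the standard and correct derivation of the additive Chernoff--Hoeffding bound: the exponential moment method with Markov's inequality, factorisation by independence, Hoeffding's lemma giving each factor $e^{t^2/8}$ because $X_i-\E[X_i]$ has range of length $1$, and the optimal choice $t=4\lambda/n$ yielding $\exp(-2\lambda^2/n)$; the reduction of the lower tail to the upper tail via $X_i'=1-X_i$ and the chord/second-derivative proof of Hoeffding's lemma are both sound. No gaps.
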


Finally, we  make use of the following artificial fitness levels theorem with tail probabilities.
\begin{theorem}[Theorem~2 in \cite{WittIPL2014}]
\label{theo:fitness-tail}
Consider an algorithm~$\mathcal{A}$ maximising some 
function~$f$ and a partition of the search space into non-empty sets $A_1,\dots,A_m$. Assume 
that the sets form an $f$-based partition, \ie, 
for $1\le i<j\le m$  and all $x\in A_i$, $y\in A_j$ it holds $f(x)<f(y)$. 
We say that $\mathcal{A}$  is in $A_i$ or on level~$i$ if the best search point created so far 
is in $A_i$. 

If $p_i$ is a lower bound 
on the probability that a step of~$\mathcal{A}$ leads from level~$i$ to some higher level, 
independently 
of previous steps, then for any $\delta>0$, 
the first hitting time of~$A_m$, starting from level~$k$, is at most
\[
\sum_{i=k}^{m-1} \frac{1}{p_i} + \delta
\]
with probability at least $1-e^{-\frac{\delta}{4}\cdot \min\{\frac{\delta}{s}, h\}}$, for any 
finite 
$s\ge \sum_{i=k}^{m-1} \frac{1}{p_i^2}$ 
and $h = \min\{p_i \mid i=k,\dots,m-1\}$. 
\end{theorem}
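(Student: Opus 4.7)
The plan is to couple the first hitting time of $A_m$ with a sum of independent geometric random variables and then apply a Bernstein-type tail inequality. For each level $i \in \{k, \dots, m-1\}$, let $T_i$ denote the number of iterations that $\mathcal{A}$ spends on level $i$ before moving to a strictly higher level. Since $p_i$ is a lower bound on the per-step improvement probability that holds independently of the history, a standard coupling gives that $T_i$ is stochastically dominated by $Y_i \sim \Geometric(p_i)$, and the $Y_i$ may be chosen mutually independent. Hence the first hitting time $T$ of $A_m$ starting from level $k$ is stochastically dominated by $Y := \sum_{i=k}^{m-1} Y_i$, so it suffices to prove the claimed tail bound for $Y$.

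I would then compute the first two moments. We have $E[Y] = \sum_{i=k}^{m-1} 1/p_i$, matching the deterministic term in the theorem, and $\operatorname{Var}(Y) = \sum_{i=k}^{m-1} (1-p_i)/p_i^2 \le s$. The target bound $\Pr[Y \ge E[Y] + \delta] \le e^{-(\delta/4)\min\{\delta/s,\, h\}}$ has the classical two-regime shape of Bernstein's inequality: for $\delta \le s h$ it reduces to $\exp(-\delta^2/(4s))$, a sub-Gaussian tail driven by the variance proxy $s$; for $\delta > s h$ it reduces to $\exp(-\delta h / 4)$, a sub-exponential tail driven by the slowest geometric rate $h = \min_i p_i$. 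The goal is to produce both regimes from a single Chernoff argument.

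To carry out the Chernoff step, I would use the MGF $E[e^{tY_i}] = p_i e^t / (1 - (1-p_i)e^t)$, which is finite for $t < -\ln(1-p_i)$. A Taylor expansion of the log-MGF around $t = 0$ yields an estimate of the form $E[\exp(t(Y_i - 1/p_i))] \le \exp(t^2/p_i^2)$ whenever $t$ stays bounded away from $-\ln(1-p_i)$ by a constant factor. By independence, $E[\exp(t(Y - E[Y]))] \le \exp(t^2 s)$, and Markov's inequality gives $\Pr[Y - E[Y] \ge \delta] \le \exp(t^2 s - t\delta)$. Optimising over $t$ subject to the convergence constraint $t \le c\cdot h$ for a suitable constant $c$, the unconstrained optimum $t^\star = \delta/(2s)$ is admissible precisely when $\delta \le 2cs h$; otherwise one clamps $t = c\cdot h$. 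The two choices yield the two regimes above, and the constant $1/4$ absorbs slack from the Taylor remainder and the clamping.

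The principal obstacle is the MGF control in the previous step: unlike bounded summands, geometric random variables are only sub-exponential, so the admissible range of $t$ in the Chernoff bound is itself governed by $h$. This is exactly why $h$ appears inside the $\min$: once $\delta$ is too large relative to $s/h$, the variance-based choice of $t$ leaves the region of convergence of some $E[e^{tY_i}]$, forcing the sub-exponential rate to take over. Keeping the Taylor remainder under control uniformly in $t \in (0, c h)$, while being careful that the remainder bound $t^2/p_i^2$ is compatible with the MGF constraint $t(1-p_i) < 1$, is the delicate calculation that pins down the admissible constants in the final exponent.
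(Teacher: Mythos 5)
This statement is imported verbatim as Theorem~2 of \cite{WittIPL2014} and the paper offers no proof of its own, so there is no internal argument to compare against; your proposal must be judged against the original source. Your route --- stochastic domination of the hitting time by a sum of independent geometric variables $Y_i\sim\Geometric(p_i)$, a moment-generating-function bound of the form $\E[\exp(t(Y_i-1/p_i))]\le \exp(O(t^2/p_i^2))$ valid for $t$ bounded away from $-\ln(1-p_i)$, and the two-regime Chernoff optimisation (unconstrained optimum $t^\star=\Theta(\delta/s)$ when $\delta\le sh$, clamped at $t=\Theta(h)$ otherwise) --- is precisely the argument used there, and it is sound; the only remaining work is the explicit Taylor-remainder bookkeeping that pins the exponent's constant to $1/4$, which you correctly flag as the delicate step rather than glossing over it.
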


For the remainder of the paper we will use 
\emph{with overwhelming probability} (w.o.p.) to mean with probability $1-2^{-n^{\Omega(1)}}$.


\section{Too High Selective Pressure Fails}\label{sec:high}
It is known from previous work that the standard \mupoea  (using uniform parent selection) with high probability 
fails to find both optima of \twomax if $\mu=o(n/\log n)$~\cite{FriedrichOSWECJ09}. 
More precisely, with probability $1- o(1)$ 
there will be within any polynomial amount of time no step where the current population contains 
both optima. The ideas behind the analysis in \cite{FriedrichOSWECJ09} can briefly be summarized as follows:
\begin{itemize}
\item With overwhelming probability the initial population contains none of the two optima.
\item As every generation creates exactly one new individual, there is a first point in time where 
exactly one of the optima, say the all-ones string, is created and added to the population.
\item The take-over time for the all-ones string is $O(\mu\log \mu)=o(n)$ if the all-zeroes string is not found 
in between. With probability $1-o(1)$ the take-over time is still $o(n)$.
\item 
Pessimistically assuming all individuals from the branch leading to the all-zeros string to have only 
a single-one bit, the probability of creating the all-zeros string by mutation is still at most~$1/n$. 
With high probability this step does not happen during the take-over time.
\item 
Hence, with high probability after $o(n)$ steps the whole population consists of copies 
of one optimum only, and in no step before has 
the other optimum been in the population. The probability of creating 
the opposite optimum by mutation is $1/n^n$, which w.o.p. 
does not happen within 
any polynomial number of steps.
\end{itemize}

Our analysis will considerably improve the result from \cite{FriedrichOSWECJ09}. First of all, we show 
that even with arbitrarily large polynomial population sizes~$\mu$ the algorithm will fail to find both optima. Second, 
we show that the whole population will be on one branch of \twomax in very early stages of the optimisation, \ie, 
soon after initialisation. Hence, diversity is lost much earlier than in the final improving steps
that are pessimistically considered in \cite{FriedrichOSWECJ09}.

In a nutshell, the proof idea for the following theorem is to consider the uniform initialization 
procedure of the \mupoea
and to analyse the probability of so-called outliers. More formally, an outlier is an individual~$x^*$  
from the initial population that has 
the largest distance $\card{\ones{x^*}-n/2}$ from the points in the valley $n/2$ and also has 
a significantly larger distance to the valley than all other individuals. Again, an argument 
about the take-over time then shows that with high probability the whole population will be descendants of the outlier after 
$O(\mu\log \mu n^{\epsilon})$ steps. Here we use the family tree technique by Witt \cite{WittFamily} to show 
that offspring of other individuals would have to 
make an exceptionally large progress within the take-over time to catch up to the level of the best. After the 
population has been taken over by descendants of the outlier, 
 steps that 
mutate individuals to accepted individuals from the opposite branch are superpolynomially unlikely  
due to the large distance of the outlier's fitness to the valley.

\begin{theorem}
\label{theo:negative-result-standard-muone}
Let $\mu\le n^{k}$ for an arbitrary constant $k\ge 0$ as well 
as $\mu=\omega(1)$ and consider the \mupoea on \twomax. 
Let $\epsilon>0$ be an arbitrary constant. 
Then with probability $1-o(1)$, all individuals are on one branch 
before the $\twomax$-value reaches $n/2+c\sqrt{n}(\log n+\log \mu)$, for 
some sufficiently large constant~$c>0$, and 
the worst $\twomax$-value  in the population 
is $n/2+\Omega(n^{1/2-\epsilon})$ 
at the time one branch dies out. This implies time $2^{\Omega(n^{1/2-\epsilon})}$ 
with probability at least $1-2^{-\Omega(n^{1/2-\epsilon})}$ to create
an individual from that branch again.
%
%
\end{theorem}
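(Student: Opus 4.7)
The plan is to follow the three-phase outline given in the sketch preceding the theorem: (i) identify an ``outlier'' in the initial population whose fitness is separated by a substantial gap from all other individuals; (ii) argue that within a short take-over window the outlier's descendants occupy the whole population while no other lineage can catch up in fitness; (iii) bound the probability of ever crossing back to the opposite branch from such a dominated state.

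For phase~(i), each of the $\mu$ initial individuals has $\ones{x}-n/2$ distributed as a symmetric centred binomial, so Corollary~\ref{cor:prob-symm-binomial} applies. A tail bound at threshold $h_1=c_1\sqrt{n\log\mu}$ combined with the density estimate shows that with probability $1-o(1)$ the maximum $\twomax$-value $M$ in the initial population exceeds $c_1\sqrt{n\log\mu}$, while a union bound at threshold $h_2=c_2\sqrt{n(\log n+\log\mu)}$ shows that $M\le c_2\sqrt{n(\log n+\log\mu)}$ with probability $1-o(1)$. A second-moment/union-bound calculation on spacings between adjacent order statistics (using that the density of the symmetric binomial at the max scale is $\Theta(\mu^{-1}/\sqrt{n})$) yields that with probability $1-o(1)$ the runner-up fitness $M'$ (over both branches) satisfies $M-M'=\Omega(n^{1/2-\epsilon/3})$. \wlo the outlier $x^*$ realising $M$ lies on the right branch and dominates every other initial individual by $\Omega(n^{1/2-\epsilon/3})$ in fitness.

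For phase~(ii), I would analyse a window of length $T=C\mu\log\mu\cdot n^{\epsilon/3}$ for a suitable constant~$C$. For every non-outlier initial individual~$y$, the family-tree technique of~\cite{WittFamily} bounds the maximum root-to-leaf path length in $y$'s family tree by $O(T/\mu)=O(\log\mu\cdot n^{\epsilon/3})$ with probability $1-2^{-\Omega(n^{\epsilon/3})}$, using Theorem~\ref{thm:cheradd} applied to the number of steps in which a member of the tree is selected under uniform parent selection. Summing $\Bin(n,1/n)$ bit-flip counts along any such path via another application of Theorem~\ref{thm:cheradd} bounds the Hamming distance from $y$ to any of its descendants by $L=O(\log\mu\cdot n^{\epsilon/3})$ throughout the window. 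Since $\twomax$ is $1$-Lipschitz in Hamming distance, every non-outlier descendant has fitness at most $M'+L=M-\Omega(n^{1/2-\epsilon/3})$, strictly below the fitness of every descendant of $x^*$ (which is at least $M-L$). By elitism, non-outlier individuals are removed preferentially; under uniform parent selection, the size $s_t$ of the outlier's lineage satisfies $\E[s_{t+1}-s_t\mid s_t=s]=s/\mu$ as long as non-outliers remain, so a standard geometric-sum argument gives full take-over in $O(\mu\log\mu)$ steps with high probability, comfortably inside the window. At the moment the last losing-branch individual dies, every surviving individual is a descendant of $x^*$ with fitness at least $M-L\ge n/2+\Omega(n^{1/2-\epsilon})$ and at most $M+L\le n/2+c\sqrt{n}(\log n+\log\mu)$, yielding both bounds in the statement.

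For phase~(iii), once the population resides entirely on the right branch with every individual at fitness at least $n/2+\Omega(n^{1/2-\epsilon})$, an accepted opposite-branch offspring requires a single mutation to flip at least $\Omega(n^{1/2-\epsilon})$ specific bits, an event of probability at most $n^{-\Omega(n^{1/2-\epsilon})}=2^{-\Omega(n^{1/2-\epsilon}\log n)}$; a union bound over $2^{c'n^{1/2-\epsilon}}$ steps gives the claimed $2^{-\Omega(n^{1/2-\epsilon})}$ failure probability. The main technical obstacle I anticipate is phase~(ii): I must simultaneously control (a) the uniform upper bound on family-tree depth across all $\mu-1$ non-outlier lineages throughout the $T$-step window; (b) the logistic take-over dynamics under uniform parent selection, which offers no explicit advantage to the outlier's lineage and relies entirely on fitness-based elitist removal; and (c) calibrating the exponents (\eg $\epsilon/3$ versus $\epsilon$) so that the Hamming drift $L$ remains small enough to preserve both the fitness gap $M-M'$ and the claimed worst-fitness lower bound $n/2+\Omega(n^{1/2-\epsilon})$ exactly at the moment the losing branch dies.
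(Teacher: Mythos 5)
Your proposal follows the same three-stage architecture as the paper's own proof: an outlier in the uniformly initialised population established via order statistics of the symmetric binomial with a gap of $\Omega(n^{1/2-\epsilon})$ to the runner-up (the paper's Lemmas~\ref{lem:d-one-greater-xstar}, \ref{lem:order-statistic-gap} and~\ref{lem:order-statistic-gap-aux}), a take-over window combined with the family-tree technique to show that no other lineage can close that fitness gap before dying out (Lemma~\ref{lem:die-out-because-of-gap}), and a large-jump bound for ever recrossing the valley. The difficulties you flag -- the union bound over all family-tree paths, converting the expected $O(\mu\log\mu)$ take-over time into a bound holding with overwhelming probability via repeated phases, and calibrating the $n^{\Theta(\epsilon)}$ exponents against the $\Omega(n^{1/2-\epsilon})$ gap -- are precisely the points the paper's proof works through, and they resolve as you anticipate.
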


To prove Theorem~\ref{theo:negative-result-standard-muone}, we will essentially look into random variables~$X_i$, 
where $1\le i\le \mu$,  
following the Binomial distribution with parameters $n$ and~$1/2$. These variables describe the number 
of one-bits in the $\mu$ uniformly initialized individuals. Since we are only interested in the distance of this 
number to the valley $n/2$, we introduce 
$D_i\coloneqq \abs{ \onenorm{x_i} - n/2}$ for $i\in\{1,\dots,\mu\}$ and relate these 
to~$X_i$. 
The probability distribution 
of $D_i$ equals the one of $X_i-n/2$, conditioned on $X_i\ge n/2$. Also, $1/2 \le  \Prob(X_i\ge n/2)\le 1/2+O(1/\sqrt{n})$, 
so for $a\ge 0$ 

\[(2-o(1)) \prob{X_i\ge n/2+ a} \le   \prob{D_i \ge a} \le (2+o(1))\prob{X_i\ge n/2 +a}.\]

In the following, we will analyse the probability of an outlier, \ie, an individual 
that has a $D$-value significantly larger than the standard deviation 
$\sigma=\sqrt{n/4}$  of the binomial distribution $\Bin(n,1/2)$.
Let 
$D_{(1)}\ge \dots \ge D_{(\mu)}$ be  the order statistics of the~$D_i$ (for convenience, index $(1)$ 
denotes the largest order statistic here, in contrast to the customary use in the literature). We shall 
also use the random variables $(i)$, where $i\in\{1,\dots,\mu\}$, to denote the index of the $i$th order statistic.

We recall the assumptions  $\mu\le n^k$ for some constant $k\in\N$ and $\mu=\omega(1)$
from the main theorem, where the latter is mostly for convenience of proof. Our first result 
shows a lower bound on $D_{(1)}$, the largest order statistic, that holds with high probability.

\begin{lemma}
\label{lem:d-one-greater-xstar}
There is a $\ell^*=\sqrt{n \bigl((1/2-1/(8k))\ln \mu-\Theta(\log\log \mu)\bigr)}$ such that 
$\prob{D_{(1)}\ge \ell^*} = 1-o(1).$
\end{lemma}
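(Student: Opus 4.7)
The plan is to treat $D_{(1)}$ as the maximum of $\mu$ i.i.d.\ copies and show that the event $\{D_{(1)}<\ell^*\}$ is vanishingly unlikely because each single $D_i$ already has a non-negligible chance of reaching $\ell^*$. By independence of the initial population,
\[
\prob{D_{(1)} < \ell^*} \;=\; \bigl(1-\prob{D_1\ge \ell^*}\bigr)^\mu \;\le\; \exp\bigl(-\mu\prob{D_1\ge \ell^*}\bigr),
\]
so it suffices to prove $\mu\prob{D_1\ge \ell^*}=\omega(1)$, from which the desired $1-o(1)$ lower bound on $\prob{D_{(1)}\ge \ell^*}$ follows immediately.

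To estimate $\prob{D_1\ge \ell^*}$, I would use the identity $\prob{D_1\ge a}\ge (2-o(1))\prob{X_1\ge n/2+a}$ stated right before the lemma together with the third estimate of Corollary~\ref{cor:prob-symm-binomial}. I first verify its applicability: since $\mu=\omega(1)$, the value $\ell^*=\Theta(\sqrt{n\ln\mu})$ is $\omega(\sqrt{n})$, and since $\mu\le n^k$, it is also $O(\sqrt{n\log n})=o(n^{2/3})$, so both regularity conditions hold. The corollary then gives
\[
\prob{X_1\ge n/2+\ell^*} \;=\; (1\pm o(1))\,\frac{\sqrt{n}}{\ell^*\sqrt{8\pi}}\,e^{-2(\ell^*)^2/n}.
\]

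Next I substitute $\ell^*=\sqrt{n\bigl((1/2-1/(8k))\ln\mu-\Theta(\log\log\mu)\bigr)}$ and evaluate the exponential:
\[
e^{-2(\ell^*)^2/n} \;=\; e^{-(1-1/(4k))\ln\mu+\Theta(\log\log\mu)} \;=\; \mu^{-1+1/(4k)}\cdot(\log\mu)^{\Theta(1)}.
\]
Combining with the prefactor $\sqrt{n}/\ell^*=\Theta(1/\sqrt{\log\mu})$ yields
\[
\mu\prob{D_1\ge \ell^*} \;=\; \Theta\!\left(\frac{\mu^{1/(4k)}}{\sqrt{\log\mu}}\right)\cdot (\log\mu)^{\Theta(1)}.
\]
Because $k$ is a fixed constant and $\mu=\omega(1)$, the factor $\mu^{1/(4k)}$ tends to infinity, and the $\Theta(\log\log\mu)$ correction inside $\ell^*$ can be absorbed by choosing its hidden constant large enough to dominate the $1/\sqrt{\log\mu}$ loss; hence $\mu\prob{D_1\ge \ell^*}=\omega(1)$ and the lemma follows.

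The main obstacle is really bookkeeping: picking the implicit constant in the $\Theta(\log\log\mu)$ correction so that the polylogarithmic prefactor in the binomial tail and the $1/\ell^*$ prefactor are both safely swallowed, and checking that $\ell^*$ lies in the moderate-deviation regime $\omega(\sqrt{n}) \le \ell^* \le o(n^{2/3})$ where Corollary~\ref{cor:prob-symm-binomial} is sharp. Once these technicalities are verified, the union-bound-style argument via independence is straightforward.
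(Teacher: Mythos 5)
Your proposal is correct and follows essentially the same route as the paper: both estimate $\prob{D_i\ge\ell^*}$ via the tail bound of Corollary~\ref{cor:prob-symm-binomial} (after checking the moderate-deviation regime $\omega(\sqrt{n})\le\ell^*\le o(n^{2/3})$), obtain a per-individual probability of order $\mu^{-1+1/(4k)}$ up to polylogarithmic factors, and conclude by independence that $(1-\prob{D_1\ge\ell^*})^\mu\le e^{-\mu^{\Omega(1/k)}}=o(1)$. The only cosmetic difference is that the paper defines $\ell^*$ implicitly as the solution of an equation and then derives its asymptotic form, whereas you substitute the asymptotic form directly and absorb the constants, which is equally valid.
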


\begin{proof}
Let $\ell^*$ be the solution for~$h$ 
of the equality 
\[
\frac{\sqrt{n}}{h\sqrt{8\pi}}e^{-2h^2/n} = \frac{\mu^{-1+1/(4k)}}{2}.\]
We first prove the asymptotic expression given for~$\ell^*$ in the statement of the lemma. 
We write \[\ell^*=\sqrt{n ((1/2-1/(8k))\ln \mu- c(n)\ln \ln\mu)}\] 
for a function $c(n)$, \ie, $c(n)$ stands for the $\Theta(\log\log \mu)$-term in the claimed 
expression   
divided by $\ln\ln \mu$. 
Then, 
\begin{align*}
 \frac{\sqrt{n}}{\ell^*\sqrt{8\pi}}e^{-2(\ell^*)^2/n}  &  = 
\frac{\sqrt{n} \mu^{-1+1/(4k)} e^{2c(n)\ln\ln \mu}}{\ell^*\sqrt{8\pi} }   = 
	\frac{\sqrt{n}(\ln\mu)^{2c(n)} }{\sqrt{n \bigl((1/2-1/(8k))\ln \mu-c(n)\ln\ln \mu\bigr)} \sqrt{8\pi}}  \mu^{-1+1/(4k)} \\
	& = \frac{1}{\sqrt{8\pi\left(\left( \frac{1}{2}-\frac{1}{8k}\right)(\ln \mu)^{1-4c(n)}-\frac{c(n)\ln\ln \mu}{(\ln \mu)^{4c(n)}}\right)} }  \mu^{-1+1/(4k)} 
  \end{align*}
	Now, 
	the fraction equals $1/2$ for some choice of $c(n)$ that converges to the constant~$1/4$, as can be seen 
	from solving 
	\[
	\sqrt{8\pi\left(\frac{1}{2}-\frac{1}{8k}\right)(\ln \mu)^{1-4c(n)}} = 2
	\]
	for $c(n)$ and exploiting that $\frac{c(n)\ln\ln \mu}{(\ln \mu)^{4c(n)}}=o(1)$ thanks to our assumption $\mu=\omega(1)$.

By Lemma~\ref{cor:prob-symm-binomial}, we have for any of the random variables~$X_i \sim \Bin(n,1/2)$ 
that 
$\prob{X_i\ge n/2+\ell^*} \ge (1-o(1))\mu^{-1+1/(4k)}/2$ and therefore
$\prob{D_i \ge \ell^*} \ge (1-o(1))\mu^{-1+1/(4k)}$. Using the independence of the random variables, the probability 
at there is at least one~$i\in\{1,\dots,\mu\}$ with this property is therefore at least
\[
1-\left(1-\mu^{-1+1/(4k)}\right)^\mu = 1-o(1).
\]
\end{proof}

While the previous lemma established the existence of a high extremal 
value for the $D_i$, the next lemma shows that the maximum order 
statistic with high probability actually is an outlier, more precisely, 
that it is almost $\sqrt{n}$ larger than the second-largest $D$-value.

\begin{lemma}
\label{lem:order-statistic-gap}
With probability $1-o(1)$, there exists $\delta^*=O(n^{1/2-\epsilon})$ for an arbitrary constant~$\epsilon>0$ 
such that $D_{(1)}-D_{(2)}\geq \delta^*$.
\end{lemma}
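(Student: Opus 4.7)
My plan is to show directly that $\Pr[D_{(1)} - D_{(2)} < \delta^*] = o(1)$ for $\delta^* := n^{1/2-\epsilon}$, by conditioning on the value of the maximum $M := D_{(1)}$ and averaging over its distribution. Given $M = h$, the other $\mu-1$ samples are independent copies of $D_1$ conditioned on $\{D_1 \le h\}$, so by Bernoulli's inequality
\[
\Pr[D_{(2)} \ge h - \delta^* \mid M = h] \;\le\; (\mu-1)\cdot\frac{F_D(h) - F_D(h - \delta^*)}{F_D(h)},
\]
where $F_D$ and $f_D$ denote the CDF and probability mass function of $D_1 = |X - n/2|$ with $X \sim \Bin(n, 1/2)$.

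The key step is to combine the estimate $F_D(h) - F_D(h - \delta^*) \le \delta^*\, f_D(h - \delta^*)$ (from monotonicity of $f_D$ on the positive axis) with the density-tail relation $f_D(h) = \Theta(h/n)\cdot(1 - F_D(h))$, which follows by dividing the density and tail formulas in Corollary~\ref{cor:prob-symm-binomial}. In the ``typical'' range for $M$---the interval around $\tilde h \approx \sqrt{n \ln \mu / 2}$ where $\mu(1 - F_D(\tilde h)) = \Theta(1)$---this yields
\[
\Pr[D_{(2)} \ge h - \delta^* \mid M = h] \;=\; O\!\left(\delta^* \cdot \tfrac{h}{n}\right) \;=\; O\!\left(n^{-\epsilon}\sqrt{\log\mu}\right) \;=\; o(1),
\]
because $\delta^* \cdot h/n \le n^{1/2-\epsilon}\sqrt{\log\mu / n} = o(1)$ for $\mu \le n^k$.

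To finish, I check that $M$ lies in this typical range with probability $1 - o(1)$: Lemma~\ref{lem:d-one-greater-xstar} handles the lower tail $M \ge \ell^*$, while a standard union bound with $\mu(1 - F_D(h_{\max})) = o(1)$ handles the upper tail $M \le h_{\max}$ for an $h_{\max}$ chosen slightly above $\tilde h$. The unconditional bound then follows by averaging. The main technical obstacle is that the conditional estimate above is only $o(1)$ for $h$ in this typical range, while near $\ell^*$ the uniform conditional bound degrades; however, the marginal density of $M$ at such $h$ is already exponentially small in $\mu$ (since $F_D(\ell^*)^{\mu-1} \le e^{-\mu^{1/(4k)}}$), so the total contribution is negligible. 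An alternative cleaner route is to work directly with the joint density of $(D_{(1)}, D_{(2)})$, bound $\Pr[D_{(1)} - D_{(2)} < \delta^*] \le \mu(\mu-1)\delta^* \sum_h f_D(h)^2 F_D(h)^{\mu-2}$, and use the change of variables $u = \mu(1 - F_D(h))$ to make the suppression of the bulk regime by the factor $F_D(h)^{\mu-2} \approx e^{-u}$ explicit.
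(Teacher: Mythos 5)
Your argument is correct, but it is organised differently from the paper's proof, even though both ultimately rest on the same two facts from Corollary~\ref{cor:prob-symm-binomial}: the Gaussian-type tail estimate and the observation that the point mass at $h$ is a $\Theta(h/n)$ fraction of the tail beyond $h$. The paper never conditions on the value of the maximum. It first proves a separate uniqueness statement (Lemma~\ref{lem:order-statistic-gap-aux}, showing no second individual hits $D_{(1)}$ exactly) and then bounds $\prob{D_{(1)}-\delta\le D_i<D_{(1)}}$ for each $i$ by pushing the window down to the worst case $[\ell^*-\delta,\ell^*)$ via monotonicity of the window probability, where $\ell^*$ is the high-probability \emph{lower bound} on the maximum from Lemma~\ref{lem:d-one-greater-xstar}; a union bound over $i$ finishes. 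You instead condition on the location $h$ of the maximum, bound the conditional probability that the runner-up lands within $\delta^*$ by $\mu\cdot\delta^*f_D(h-\delta^*)/F_D(h)=O(\delta^*\,(h/n)\cdot\mu(1-F_D(h)))$, and then localise $M$ to the scale $\tilde h\approx\sqrt{(n/2)\ln\mu}$ where $\mu(1-F_D(\tilde h))=\Theta(1)$, handling both tails of $M$ separately. A side effect of your decomposition is that you do not need a separate uniqueness lemma (your closed window $[h-\delta^*,h]$ absorbs ties), and you explicitly deal with the regime $h$ near $\ell^*$, where the naive per-individual bound $\mu^{1/(4k)}n^{-\epsilon}$ is not $o(1)$ for small~$\epsilon$ — a regime the paper's worst-case-at-$\ell^*$ reduction treats only cursorily (``sufficiently large $k$''). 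The price is that you must justify the conditional-i.i.d. step given $M=h$, which for a discrete distribution is cleanest via the joint-density/union-over-pairs route you sketch at the end; with that route made explicit, your proof is complete.
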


To prove Lemma~\ref{lem:order-statistic-gap}, we need to show for all $D_i\neq D_{(1)}$ that they are 
significantly smaller than $D_{(1)}$. However, this does not imply the lemma yet since 
there may be more than one variable taking the maximum value. Hence, 
we first prove the following auxiliary statement, showing that 
the largest order statistic is unique with high probability.

\begin{lemma}
\label{lem:order-statistic-gap-aux}
Assume $D_{(1)}\ge \ell^*$ and let $i$ be an index such that $D_i=D_{(1)}$. 
Then with probability $1-o(1)$ it holds that $D_{j}<D_{i}$ for all $j\neq i$. 
\end{lemma}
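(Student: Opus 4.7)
The plan is to establish the unconditional bound $\Pr(D_{(1)} = D_{(2)},\, D_{(1)} \ge \ell^*) = o(1)$; combined with Lemma~\ref{lem:d-one-greater-xstar} this yields the conditional uniqueness claim. The starting point is a union bound over the value $v$ taken by the maximum: writing $f(v) := \Pr(D = v)$ and $F(v) := \Pr(D \le v)$, the event that at least two of the $\mu$ i.i.d.\ samples equal $v$ while all remaining ones are $\le v$ has probability at most $\binom{\mu}{2} f(v)^2 F(v)^{\mu - 2}$, so
\[
\Pr(D_{(1)} = D_{(2)},\, D_{(1)} \ge \ell^*) \;\le\; \binom{\mu}{2} \sum_{v \ge \ell^*} f(v)^2 F(v)^{\mu - 2}.
\]

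Next I would invoke Corollary~\ref{cor:prob-symm-binomial} to extract two asymptotic estimates on the range of interest: $f(v) = \Theta(e^{-2v^2/n}/\sqrt{n})$ and $g(v) := \Pr(D > v) = \Theta((\sqrt{n}/v)\, e^{-2v^2/n})$, which together imply the crucial density-to-tail relation $f(v) = \Theta(v/n)\, g(v)$. The natural scale is $v_0 := \sqrt{(n/2)\ln \mu}$, at which $\mu g(v_0) = \Theta(1)$; this is where $D_{(1)}$ concentrates, and $v_0 > \ell^*$ by a $\Theta(1)$ factor.

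I would then split the sum at $v_0$. For $v \ge v_0$, use $F(v)^{\mu - 2} \le 1$ and the monotonicity $f(v) \le f(v_0)$ (valid for $v \ge 1$) to obtain $\sum_{v \ge v_0} f(v)^2 \le f(v_0)\, \Pr(D \ge v_0) = \Theta(1/(\mu^2 \sqrt{n \ln \mu}))$, which multiplied by $\binom{\mu}{2}$ is $O(1/\sqrt{n \ln \mu}) = o(1)$. For $v \in [\ell^*, v_0]$, combining $F(v)^{\mu - 2} \le e^{-(\mu - 2) g(v)}$ with the density-tail relation rewrites the summand as
\[
\binom{\mu}{2} f(v)^2 F(v)^{\mu - 2} \;=\; O\!\left(\tfrac{v^2}{n^2}\, \bigl(\mu g(v)\bigr)^{2}\, e^{-(\mu - 2) g(v)}\right) \;=\; O\!\left(\tfrac{v^2}{n^2}\right),
\]
where the last equality uses that $t^2 e^{-ct}$ is bounded by an absolute constant. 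Summing over the $O(\sqrt{n \ln \mu})$ integers $v \le v_0$ gives $O(v_0^3/n^2) = O((\ln \mu)^{3/2}/\sqrt{n}) = o(1)$, since $\mu \le n^k$.

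The main obstacle is that crude bounds based on $\ell^*$ alone, such as $\binom{\mu}{2} f(\ell^*)\, \Pr(D \ge \ell^*)$, are only polylogarithmic in size and so fail to be $o(1)$. The argument goes through precisely because one exploits the concentration of $D_{(1)}$ at the \emph{larger} value $v_0$ together with the uniform boundedness of $(\mu g)^{2} e^{-\mu g}$: the latter absorbs a spurious $\sqrt{\log\mu}$ factor that would otherwise appear when estimating the density only at $\ell^*$.
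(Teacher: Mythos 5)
Your proof is correct, but it follows a genuinely different route from the paper's. You bound the \emph{unconditional} probability $\Pr(D_{(1)}=D_{(2)}\wedge D_{(1)}\ge\ell^*)$ by a union bound over index \emph{pairs} and over the common value~$v$, crucially retaining the factor $F(v)^{\mu-2}$ (all other samples lie below~$v$) and splitting the sum at the concentration point $v_0=\sqrt{(n/2)\ln\mu}$ of the maximum; dividing by $\Pr(D_{(1)}\ge\ell^*)=1-o(1)$ from Lemma~\ref{lem:d-one-greater-xstar} then gives the conditional claim. The paper instead bounds the point mass $\Prob(D=v)\le\Prob(D=\ell^*)=O(\polylog\mu/(\mu n^{1/4}))$ uniformly for $v\ge\ell^*$ (using the same density-to-tail ratio $\Theta(h/n)$ from Corollary~\ref{cor:prob-symm-binomial} that you exploit as $f(v)=\Theta(v/n)\,g(v)$), handles the conditioning by a stopping-time argument over the sequential generation of the $D_i$, and union-bounds over the $\mu$ \emph{single} indices colliding with the maximum's value. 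That union over indices rather than pairs is precisely how the paper sidesteps the failure of the naive pair bound $\binom{\mu}{2}f(\ell^*)\Prob(D\ge\ell^*)=\Theta(\sqrt{\log\mu}\,\mu^{1/(2k)}/\sqrt{n})$, which you correctly diagnose as only polylogarithmically bounded when $\mu=n^{k}$; your fix is instead to keep $F(v)^{\mu-2}\le e^{-(\mu-2)g(v)}$ and absorb $(\mu g)^2e^{-(\mu-2)g}$ into a constant. Your version is somewhat longer (two regimes of~$v$) but fully unconditional and avoids the paper's slightly delicate sequential-revelation step; the paper's is shorter but leans on an informal reading of ``$\Prob(D_i=D_{(1)})$'' for the eventual maximum. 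Both yield $o(1)$ failure probability, yours of order $(\log n)^{3/2}/\sqrt{n}$ versus the paper's $\polylog(\mu)/n^{1/4}$.
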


\begin{proof}
The main idea of the proof is to exploit that the first bound (on the probability value) and the second bound (on the 
distribution) in Corollary~\ref{cor:prob-symm-binomial} 
are within a ratio of $\Theta(h/n)$, making it unlikely for any other individual to hit exactly the value of~$D_{(1)}$. 
For a moment, we consider the random $D_i$ without the condition 
$D_{(1)}\ge \ell^*$. 
According to Lemma~\ref{lem:d-one-greater-xstar}, we have 
 $\ell^*=\Theta(\sqrt{n\log \mu})$.
 We already know that $\Prob(D=\ell^*)=(2+o(1))\Prob(X_i\ge n/2+\ell^*)$ and obtain 
from the first inequality in Corollary~\ref{cor:prob-symm-binomial} that
\begin{align*}
\Prob(D_i = \ell^*) = O\,\Bigl(\frac{\polylog \mu}{\mu^{1-1/(4k)}\sqrt{n}}\Bigr) 
 = 
O\,\Bigl(\frac{\polylog \mu}{\mu n^{1/4}}\Bigr) ,
\end{align*}
where we used $\mu^{1/(4k)}\le n^{1/4}$. Also, for values $k\ge \ell^*$ 
we clearly have $\Prob(D=k)\le \Prob(D=\ell^*)$ by the monotonicity of the binomial 
distribution.

We now consider the generation of the $\mu$ random 
$D_i$-values as a stochastic process, assuming that 
$D_i$ is generated at time~$i$. Note that our condition 
$D_{(1)}\ge \ell^*$ is equivalent to that there exists a (random) $i\in\{1,\dots,\mu\}$ such that 
$D_i\ge \ell^*$. Let $T$ denote the smallest such~$i$. For $j<T$ is clearly 
holds $D_j<\ell^*\le D_{(1)}$. After time~$T$, the 
condition $D_{(1)}\ge \ell^*$ is satisfied so that each of the $D_{T+1},\dots,D_{\mu}$ is conditioned 
on being at most $D_{(1)}$. Since even $\prob{D_j\le \ell^*} = 1-o(1)$, the conditional probability 
is only by a factor $1+o(1)$ larger than the unconditional one.
 Hence, it is sufficient to show that 
the values $D_{T+1},\dots,D_{\mu}$ unconditionally  with high probability are less than $D_{(1)}$. 

By the above calculations, 
for all $i>T$, we get 
\[
\Prob(D_i=D_{(1)}) = 
 O\,\Bigl(\frac{\polylog \mu}{\mu n^{1/4}}\Bigr).
\]
Now a union bound over the $\mu-T$ variables yields the desired result.
\end{proof}

We now prove the outlier property of the largest order statistic. 
\begin{proofof}{Lemma~\ref{lem:order-statistic-gap}}
We recall that $\ell^*=\Theta(\sqrt{n\log \mu})$. 
By using the second inequality of Corollary~\ref{cor:prob-symm-binomial},
we note that for $\delta^*=O(n^{1/2-\epsilon})$ and any~$i\in\{1,\dots,\mu\}$ it holds that
$C\prob{D_i\geq\ell^*}=(1\pm o(1))\prob{D_i\geq\ell^*-\delta^*}$,
where
\begin{equation*}
C=\left(1-\frac{\delta^*}{\ell^*}\right)^{-1}
\exp\left(\frac{4\ell^*\delta}{n}-\frac{2\delta^2}{n}\right)
=1+O(n^{-\epsilon})\enspace.
\end{equation*}
Now,
\begin{align*}
&\prob{D_{(1)}-\delta\leq D_i < D_{(1)} \mid D_{(1)}\geq\ell^*\wedge 
D_i<D_{(1)}}\leq\prob{D_i\geq\ell^*-\delta\wedge D_i<\ell^*\mid D_i<D_{(1)}}\\
&\leq\left(1-o(1)\right)^{-1}\left(1-\prob{D_i\geq\ell^*}
-\left(1-\prob{D_i\geq\ell^*-\delta\right)\right)}\\
&=(1+o(1))(C-1)\prob{D_i\geq\ell^*} =O(1/\mu)\mu^{1/(4k)} n^{-\epsilon} = O(\mu^{-1}n^{-\epsilon/2})\enspace, 
\end{align*}
where the first inequality follows from exploiting the condition
that $D_{(1)}\geq\ell^*$
and the fact that $\Prob([x-a,x])$ is decreasing in $x$
for any positive constant $a$ and $x-a$ being larger than the mode.
In the second inequality we used $\Prob(A\mid B)\leq\Prob(A)/\Prob(B)$
and
\begin{equation*}
\Prob(D<D_{(1)})\geq
\Prob(D_{(1)}\geq\ell^*)\Prob(D<D_{(1)}\mid D_{(1)}\geq\ell^*)=1-o(1)\enspace,
\end{equation*}
which follows from Lemma~\ref{lem:d-one-greater-xstar} and Lemma~\ref{lem:order-statistic-gap-aux}.
The equalities then follow from bound on $C$ derived above, the 
fact $\prob{D_i\ge \ell^*}\le \mu^{-1+1/(4k)}$ and sufficiently large~$k$.

Hence, we have that the unconditional probability
\begin{equation*}
\Prob\left(D_{(1)}-\delta\leq D < D_{(1)}\right)
\leq\frac{o(1/\mu)}{(1-o(1))}
\end{equation*}

Now, by a union bound, the probability that there exists some $j\neq (1)$ such that 
$D_{(1)}-\delta\leq D_j < D_{(1)}$ is $o(1)$. In particular, the probability 
of $D_{(2)}$ being in this interval is $o(1)$.
\end{proofof}

Lemma~\ref{lem:order-statistic-gap}, which establishes the outlier in the initial population 
is now crucial to show that the outlier will take over the whole population with high probability 
before the rest of the population can catch up to the outlier. The following 
lemma makes this take-over effect formal.

\begin{lemma}
\label{lem:die-out-because-of-gap}
If the best individual in a population optimising \twomax has a fitness value 
$\Omega(n^{\epsilon})$ larger than the second best for some 
$\epsilon=\Omega(1)$, then the population will be taken over by individuals from 
the best solution's branch whose fitness values are all at least as good as the 
initial best fitness in $n^{\epsilon/4} \mu \log{\mu} $ generations w.o.p. 
\end{lemma}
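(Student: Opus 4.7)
The plan is to decompose the argument into three sub-claims whose conjunction implies the lemma: (a) no descendant of a non-outlier initial individual can reach fitness $f(x^*)$ within $T = n^{\epsilon/4}\mu \log \mu$ generations; (b) the number of individuals in the population with fitness at least $f(x^*)$ grows from $1$ to $\mu$ within $T_0 = O(\mu \log \mu) \leq T$ generations; and (c) no descendant of $x^*$ crosses the valley to the opposite branch within $T$ generations. Together (a)--(c) yield that after $T_0$ steps the population consists entirely of $x^*$-descendants with fitness $\geq f(x^*)$ on $x^*$'s branch, which is the statement of the lemma.

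For (a) I would apply the family-tree technique of~\cite{WittFamily}: within $T$ steps, the depth of any initial individual's family tree is $O(T/\mu + \log T) = O(n^{\epsilon/4}\log(n\mu))$ \wop, and a union bound on the binomial tail shows that no single mutation in these $T$ steps flips more than $n^{\epsilon/2}$ bits \wop. Since each edge of the tree changes fitness by at most the number of bits flipped, along any root-to-descendant path of length $d$ starting at a non-outlier $y$ the fitness change is bounded by $d\cdot n^{\epsilon/2}=O(n^{3\epsilon/4}\log(n\mu))=o(n^{\epsilon})$; because $f(y)\le f(x^*)-\Omega(n^\epsilon)$ by assumption, no such descendant reaches $f(x^*)$. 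For (c), recall that on \twomax $f(x^*)$ equals the Hamming distance of $x^*$ to the valley, and the same bit-flip budget applied to $x^*$ gives that any $x^*$-descendant reached within $T$ steps still has distance $\geq f(x^*)-o(n^\epsilon)=\Omega(n^\epsilon)$ to the valley on $x^*$'s branch; crossing to the opposite branch in a single mutation then requires flipping $\Omega(n^\epsilon)$ prescribed bits, an event of probability $n^{-\Omega(n^\epsilon)}$, which a union bound over $T$ mutations excludes \wop.

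For (b), let $k_t$ count individuals with fitness $\geq f(x^*)$ at time $t$; by elitism $k_0\geq 1$ and $k_t$ is non-decreasing as long as $k_t<\mu$, since then the worst individual has fitness $<f(x^*)$ and is the one removed (using (a) to rule out non-outlier descendants reaching $f(x^*)$). Selecting one of the $k_t$ good individuals as parent (probability $k_t/\mu$) and performing no bit flip (probability $(1-1/n)^n\geq 1/e$) produces a surviving copy, so $\prob{k_{t+1}=k_t+1\mid k_t<\mu}\geq k_t/(e\mu)$. The expected hitting time of $\{k=\mu\}$ is therefore at most $\sum_{j=1}^{\mu-1} e\mu/j=O(\mu\log\mu)$, and Theorem~\ref{theo:fitness-tail} with $\delta=\Theta(n^{\epsilon/4}\mu\log\mu)$ gives failure probability $e^{-\Omega(n^{\epsilon/4}\log\mu)}$, \ie, \wop. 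The main obstacle I anticipate is making the family-tree depth bound in~(a) genuinely \wop\ rather than merely with high probability; the standard statements in~\cite{WittFamily} may need to be sharpened or the argument iterated over independent phases, and some care is needed because a non-outlier's subtree can grow transiently before the outlier's descendants sweep the population.
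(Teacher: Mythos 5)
Your proposal is correct and follows essentially the same route as the paper: a standard takeover-time argument for the spread of the outlier's descendants, combined with the family-tree technique of \cite{WittFamily} to show that neither a competing lineage nor the outlier's own lineage can change its fitness by more than $o(n^{\epsilon})$ within $n^{\epsilon/4}\mu\log\mu$ generations. The only cosmetic differences are that you bound the progress along a lineage by capping each single mutation at $n^{\epsilon/2}$ flipped bits and multiplying by the tree depth (rather than the paper's direct union bound over ``bad paths'' whose total Hamming displacement exceeds $3n^{\epsilon/2}$), and that you derive the tail bound on the takeover time from Theorem~\ref{theo:fitness-tail} instead of iterating Markov's inequality over $n^{\epsilon/4}$ phases; the obstacle you anticipate does not arise, since the depth bound of Lemma~2 in \cite{WittFamily} already fails only with probability $2^{-\Omega(t/\mu)}=2^{-\Omega(n^{\epsilon/4}\log\mu)}$, which is \wop for $\epsilon=\Omega(1)$.
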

\begin{proof}
Using a standard analysis of take-over times (\eg, \cite{FriedrichOSWECJ09,WittFamily}) it takes an expected number of  
$\Theta(\mu \log \mu)$ 
generations until the population is taken 
over by solutions from the best individual's branch, unless an equal or better search point 
on the opposite branch is sampled. Using Markov's inequality and analyzing the probability 
of failing to take over within $n^{\epsilon/4}$ phases of length $\Theta(\mu \log \mu)$ each, 
we obtain that the take-over time is $O(n^{\epsilon/4} \mu \log{\mu})$ w.o.p. 

We now use the well-known family tree technique \cite{WittFamily} for the analysis of the 
\mupoea to show that the offspring of any other individual than the initially best 
with high probability cannot catch up to the fitness of the initially best. 
The family tree of any initial solution will not 
reach a depth larger than $n^{\epsilon/2}$ in 
$n^{\epsilon/4} \mu \log{\mu}$ generations w.o.p. 
according to Lemma~2 in \cite{WittFamily}. 
Let us call a path with any node which has Hamming 
distance at least $3 n^{\epsilon/2}$ to its root a 
{\it bad path}. The probability of observing a particular bad path 
of length $\ell$ in a tree of depth smaller than 
$n^{\epsilon/2}$ is at most $(1/\mu)^\ell e^{- 3n^{\epsilon/2}}$. Using 
a union bound over all paths of length at most $n^{\epsilon/2}$ 
that can be observed in $n^{\epsilon/4} \mu \log{\mu}$ generations of the \mupoea, 
we upper bound the probability of sampling a solution which has $3n^{\epsilon/2}$ more 1-bits than its root by,
\[
\sum^{n^{\epsilon/2}}_{\ell=1} \binom{n^{\epsilon/4} \mu \log{\mu}}{\ell} (1/\mu)^\ell e^{- 3n^{\epsilon/2}}\leq n^{\epsilon/2} e^{n^{\epsilon/3}} e^{- 3n^{\epsilon/2}}=2^{-\Omega(n)}.
\]

The probability that a bad path will not be observed in the family tree 
of any solution in the initial population can be shown to be exponentially 
small by using a union bound once again. Hence, no offspring of the individuals 
different from the best one will reach the fitness of the best with high probability.
\end{proof}

We put everything together to prove our main result:

\begin{proofof}{Theorem~\ref{theo:negative-result-standard-muone}}
According to Lemma~\ref{lem:order-statistic-gap}, 
$D_{(1)}-D_{(2)}=\Omega(n^{1/2-\epsilon})$ with probability $1-o(1)$. 
We assume this to happen. Hence, Lemma~\ref{lem:die-out-because-of-gap} can be applied, 
such that  after 
$n^{1/8} \mu \log{\mu} $ generations one branch will have died with probability $1-o(1)$. Moreover, 
the lemma shows that the whole population after this time will be at least as good as the initially best one, which proves 
that the worst  $\twomax$-value will be at least $n/2+D_{(1)} = n/2+\Omega(n^{1/2-\epsilon})$  at the time one branch dies out. 

Finally, using the family tree technique in the same way as 
for Lemma~\ref{lem:die-out-because-of-gap}, we note 
 that the best \twomax-value in the population 
will not increase by more than $n^{1/8}\log \mu$ in $n^{1/8} \mu \log{\mu} $ generations 
with probability~$1-o(1)$. Using that 
 $D_{(1)}=O(\sqrt{n}(\log \mu+\log n))$ with probability $1-o(1)$ (which follows 
from simple Chernoff and union bounds), we obtain 
that with probability $1-o(1)$,  the best $\twomax$-value is $O(n/2+c\sqrt{n}(\log n+\log \mu))$ at the time 
one branch dies out, as claimed in the theorem statement.
%
%
\end{proofof}


We can straightforwardly 
generalize Theorem~\ref{theo:negative-result-standard-muone} to \textsc{TruncatedTwoMax} if the maximum fitness of the 
truncated branch lies well above the best fitness reached after a branch has taken over. The theorem considers a \twomax-value of $n/2+c \sqrt{n}\log (\mu n)$, so 
in principle we only have to choose $k$ slightly larger than the offset $c \sqrt{n}\log (\mu n)$. For convenience, we formulate 
our corollary with $k = \omega(\sqrt{n}\log n)$.

\begin{corollary} 
Let $\mu\le n^{k'}$ for an arbitrary constant $k'\ge 0$ as well 
as $\mu=\omega(1)$ and consider the \mupoea on \textsc{TruncatedTwoMax} with $k = \omega(\sqrt{n}\log n)$. Let $\epsilon>0$ be 
an arbitrary constant. Then with probability 
$1/2-o(1)$ the whole population is on the truncated branch before the global optimum is reached, and 
conditional on this event the optimisation time is $2^{\Omega(n^{1/2-\epsilon})}$ \wop
\end{corollary}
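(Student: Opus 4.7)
The plan is to reduce the corollary to Theorem~\ref{theo:negative-result-standard-muone} by showing that on \textsc{TruncatedTwoMax} the initial population looks almost surely indistinguishable from a \twomax initialisation, and then to invoke a direct fitness-distance argument to bound the escape probability after takeover.

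First I would verify that the analysis of the initial population from Theorem~\ref{theo:negative-result-standard-muone} carries over unchanged. Since $k=\omega(\sqrt{n}\log n)$ and $\mu=n^{O(1)}$, a Chernoff bound shows that the probability of any one of the $\mu$ uniformly sampled individuals having $\ones{x}<n/2-k$ is $\mu\exp(-2k^2/n)=o(1)$. Hence \wop no initial individual lies in the truncated region, and the $D_i=\abs{\ones{x_i}-n/2}$ distribution used in Lemmas~\ref{lem:d-one-greater-xstar} and~\ref{lem:order-statistic-gap} is faithfully preserved. In particular, the outlier established there has $D_{(1)}=O(\sqrt{n}(\log n+\log\mu))=o(k)$, so its fitness coincides with the \twomax value and, if it sits on the left branch, it is genuinely above the truncation threshold.

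Next I would argue that with probability $1/2-o(1)$ the outlier lies on the truncated (left) branch. Since initialisation is symmetric and $D_{(1)}$ is strictly larger than $D_{(2)}$ \wop (Lemma~\ref{lem:order-statistic-gap} together with Lemma~\ref{lem:order-statistic-gap-aux}), the sign of $\ones{x_{(1)}}-n/2$ is an unbiased coin flip up to the $o(1)$ error already absorbed above. Condition on this event. Lemma~\ref{lem:die-out-because-of-gap} is stated for a generic fitness advantage and can be applied verbatim on the non-truncated portion of the left branch: within $n^{\epsilon/4}\mu\log\mu$ generations, the whole population is taken over by descendants of the left-branch outlier whose fitness is at least $D_{(1)}=\Omega(n^{1/2-\epsilon})$, and (by the family-tree bound used in Theorem~\ref{theo:negative-result-standard-muone}) the best fitness cannot have grown beyond $O(\sqrt{n}(\log n+\log\mu))\ll k$. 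Consequently, on this event the global optimum, which has fitness $n/2$, has not been reached by takeover time.

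Finally, conditional on takeover on the truncated branch, I would bound the escape time. After takeover, every surviving individual $x$ satisfies $\ones{x}\le n/2-\Omega(n^{1/2-\epsilon})$ and its fitness $n/2-\ones{x}$ is at least $\Omega(n^{1/2-\epsilon})$; by elitism, the worst population fitness only grows over time. To produce an accepted right-branch individual $y$, we need $\ones{y}\ge n/2+\Omega(n^{1/2-\epsilon})$, so a single mutation must flip at least $\Omega(n^{1/2-\epsilon})$ bits in the right direction. The probability that standard bit-flip mutation with rate $1/n$ flips that many bits is at most $\binom{n}{\Omega(n^{1/2-\epsilon})}n^{-\Omega(n^{1/2-\epsilon})}=2^{-\Omega(n^{1/2-\epsilon}\log n)}$. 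A union bound over $2^{c\,n^{1/2-\epsilon}}$ generations (for a suitably small constant $c$) then gives the claimed failure probability $2^{-\Omega(n^{1/2-\epsilon})}$.

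The only non-routine step is the symmetry argument in the second paragraph: one must check that the joint distribution of the signs of $\ones{x_i}-n/2$ and the $D_i$'s is unaffected by the truncation and by the conditioning on a unique maximum. Both issues are handled by the $o(1)$ bounds already established, so after that accounting the rest is bookkeeping on the fitness levels and a single large-deviation estimate for the mutation jump.
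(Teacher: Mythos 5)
Your proposal is correct and follows essentially the same route as the paper: the paper justifies the corollary with exactly this observation, namely that the best fitness reached by takeover time is $O(\sqrt{n}\log(\mu n)) = o(k)$, so the truncation is never encountered before one branch dies out, and symmetry of initialisation places the outlier on the truncated branch with probability $1/2-o(1)$. Your write-up merely fills in the details (Chernoff bound on the initial population, reuse of Lemmas~\ref{lem:order-statistic-gap} and~\ref{lem:die-out-because-of-gap}, and the large-jump estimate for escaping) that the paper leaves as a ``straightforward generalisation''.
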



\subsection{RLS Mutation}

We can obtain a result analogous to Theorem~\ref{theo:negative-result-standard-muone}
also for the 
\muporls which differs from the \mupoea only in the mutation operator. Basically, 
the only difference is in the analysis of take-over times, which originally 
refer to the time until the whole population descends from a certain individual and 
is at least as good as it. For the \mupoea using 
standard-bit mutation, 
the $O(\mu \log \mu)$ bound is based on the fact that an individual can produce 
a copy of itself with probability $(1-1/n)^n \approx e^{-1}$. This is not the case 
for the one-bit mutation of the \muporls, which either improves or degrades individuals' fitness. Nevertheless, we can use the same type of analysis 
if the best and second-best individual are sufficiently large apart and the offspring of the best 
do not become worse than the offspring of the second-best within a certain period of time.

More formally, let $x^*$ be the initially best individual and let after~$t$ generations 
the set $D_t$ denote the individuals in the population at time~$t$ that are offspring of~$x^*$. 
Given $\card{D_t}=k<\mu$, the probability of choosing from $D_t$ is $k/\mu$ and, if this happens and the offspring 
is accepted, it holds $\card{D_{t+1}}=k+1$. Hence, assuming all these offspring to be accepted, the time 
until the whole population descends from $x^*$ has expected value $O(\mu \log \mu)$. Note that this does not imply 
that all descendants are at least as good as~$x^*$ in fitness.

Let $R^*$ be the other $\mu-1$ individuals from the initial population. 
Offspring of~$x^*$ are guaranteed to be accepted in the proposed time of expected $O(\mu\log \mu)$ generations 
if 
\begin{itemize}
\item no offspring of individuals in $R^*$ reaches fitness at least $(D_{(1)} + D_{(2)})/2$
\item no offspring of~$x^*$ has fitness less than $(D_{(1)}+D_{(2)})/2$
\end{itemize}

For either event to fail, the  progress must be at least~$(D_{(1)}- D_{(2)})/2$ in at least one lineage 
of one of the~$\mu$ family trees stemming from the individuals of the initial population. Using Lemma~\ref{lem:die-out-because-of-gap}
to bound the depth of each family tree by $n^{\epsilon/2}$ and observing that the progress along any lineage 
due to one-bit mutation is bounded by the length of the lineage, we obtain a progress of at most 
$n^{\epsilon/2}$ with probability~$1-o(1)$ during the takeover time. This is clearly smaller than 
the bound $(D_{(1)}- D_{(2)})/2 = n^{\epsilon/2}$ that results from uniform initialization 
according to Lemma~\ref{lem:order-statistic-gap} with probability~$1-o(1)$. Hence, we have obtained the following result:

\begin{theorem}
\label{theo:negative-result-standard-muonerls}
Let $\mu\le n^{k}$ for an arbitrary constant $k\ge 0$ as well 
as $\mu=\omega(1)$ and consider the \muporls on \twomax. 
With probability $1-o(1)$, all individuals are on one branch 
before the $\twomax$-value reaches $n/2+c\sqrt{n}(\log n+\log \mu)$, for 
some sufficiently large constant~$c>0$, and 
the worst $\twomax$-value  in the population 
is $n/2+\Omega(n^{1/2-\epsilon})$ 
at the time one branch dies out. If this happens, individuals 
from the other branch cannot be generated any more.
%
%
\end{theorem}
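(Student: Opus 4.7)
The plan is to mirror the proof of Theorem~\ref{theo:negative-result-standard-muone} as closely as possible, exploiting that the only algorithmic difference from the \mupoea lies in the mutation operator. First, I would inherit Lemmas~\ref{lem:d-one-greater-xstar}, \ref{lem:order-statistic-gap-aux}, and~\ref{lem:order-statistic-gap} verbatim, since they concern only the distribution of $\Bin(n,1/2)$ order statistics induced by uniform initialization and are oblivious to how offspring are produced. This already hands us an outlier $x^*$ in the initial population whose fitness gap to the next-best initial individual satisfies $D_{(1)}-D_{(2)} = \Omega(n^{1/2-\epsilon})$ with probability $1-o(1)$.

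The main obstacle is adapting the take-over argument from Lemma~\ref{lem:die-out-because-of-gap}, since the standard bound of $O(\mu\log\mu)$ on the \mupoea take-over time relies on the copy creation probability $(1-1/n)^n = \Theta(1)$, which drops to~$0$ under one-bit mutation. My plan, following the informal discussion preceding the theorem, is to argue that even without copies, selecting a descendant of $x^*$ expands the descendant set whenever the resulting offspring is accepted, so the classical coupon-collector coupling remains valid provided all descendants of $x^*$ stay strictly better than all offspring of the remaining initial individuals throughout the take-over window. Concretely, this reduces to showing that within $O(n^{\epsilon/4}\mu\log\mu)$ generations neither (i) does some offspring of an individual in $R^*=P\setminus\{x^*\}$ reach \twomax-value at least $(D_{(1)}+D_{(2)})/2$, nor (ii) does some descendant of $x^*$ drop below this threshold.

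To bound (i) and (ii), I would invoke the family-tree technique of \cite{WittFamily} exactly as in the original proof to bound each family tree's depth by $n^{\epsilon/2}$ w.o.p., then exploit the deterministic fact that one-bit mutation changes $\abs{\onenorm{x}-n/2}$ by at most~$1$ per generation. Hence the absolute fitness change along any lineage is bounded by its length and thus by $n^{\epsilon/2}$, while the gap to be overcome is $\Omega(n^{1/2-\epsilon})$, which is asymptotically larger for $\epsilon$ sufficiently small. A union bound over the $\mu=\mathrm{poly}(n)$ family trees then makes both (i) and (ii) fail with probability at most $o(1)$, and the take-over bound $n^{\epsilon/4}\mu\log\mu$ follows by Markov's inequality as in the original proof.

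Once take-over is established, the worst \twomax-value in the population is at least $n/2+\Omega(n^{1/2-\epsilon})$, and I claim that creating an individual on the opposite branch is now impossible rather than merely superpolynomially unlikely, which is a strictly stronger conclusion than in Theorem~\ref{theo:negative-result-standard-muone}. Indeed, a single one-bit flip of a search point with $\ones{x}>n/2$ yields an offspring with $\ones{y}\in\{\ones{x}-1,\ones{x}+1\}$, so $y$ either stays on the right branch or lands at $\ones{y}=\ones{x}-1\ge n/2+\Omega(n^{1/2-\epsilon})-1$, which is still on the right branch of \twomax. Elitist selection moreover never accepts a drop to the valley, so the left branch can never be reached, yielding the ``cannot be generated any more'' statement in the theorem.
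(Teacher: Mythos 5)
Your proposal follows essentially the same route as the paper: you inherit the initialization lemmas unchanged, replace the copy-based takeover bound by the observation that the descendant set of $x^*$ grows whenever a descendant is selected and its offspring accepted, reduce this to the same two conditions on the threshold $(D_{(1)}+D_{(2)})/2$, and bound lineage progress by family-tree depth combined with the fact that one-bit mutation changes $\abs{\onenorm{x}-n/2}$ by at most one per step. Your explicit argument for why the opposite branch becomes unreachable (rather than merely exponentially unlikely) is a welcome detail the paper leaves implicit, but it does not constitute a different approach.
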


%
%

\section{Low Selective Pressure Succeeds}\label{sec:low}
In this section we will show that reducing the selection pressure to
its minimum by always picking the worst parent for reproduction makes the \mupoea
efficient at locating both optima of \twomax with reasonable population sizes.
In the following section we will argue why such a strategy is not sufficient for efficient global optimisation in general, and propose necessary corrections.
We begin by analysing the standard bit mutation version, the \mupoea, and afterwards we consider the 
RLS version. At the core of both proofs is the analysis of the stochastic process $D_t$ which keeps 
track of the difference between the number of individuals with more $1$-bits and more $0$-bits. 
The crucial observation is that, with inverse elitist parent selection, $D_t$ performs a fair 
random walk which starts close to $0$ and, for large enough $\mu$, cannot reach w.o.p. any of the two 
boundaries ($\mu$ or $-\mu$) before both optima are found.


\begin{theorem}\label{thm:uppersbm}
The \mupoea with $\mu=\Omega(n^{1+2\epsilon})$ and inversely elitist parent selection 
finds both 
optima of \twomax in $O(\mu n^{1+\epsilon})$ fitness function evaluations with 
probability 
$1-e^{-\Omega(n^{\epsilon})}$ for any  positive constant $\epsilon$ such that $1 
- \epsilon = \Theta(1)$.
	\end{theorem}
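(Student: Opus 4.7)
My plan is to track the signed imbalance $D_t := R_t - L_t$, where $R_t$ and $L_t$ count the individuals in the population at time $t$ lying on the right branch ($|x|_1 > n/2$) and left branch ($|x|_1 < n/2$) of \twomax respectively. The strategy is to show that $D_t$ behaves like a near-unbiased random walk with bounded increments and to combine this with a fitness-level argument on each branch. The initial value satisfies $|D_0| = O(\sqrt{\mu \log \mu})$ with probability $1 - \mu^{-\omega(1)}$ by a Chernoff bound on the symmetric binomial initialisation, with Corollary~\ref{cor:prob-symm-binomial} controlling the $O(\mu/\sqrt{n})$ valley fraction separately.

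For the drift analysis I would exploit the bit-complement symmetry of \twomax together with the uniform tie-breaking in both the inverse-elitist parent selection and the worst-individual removal. Coupling the process with its bit-complemented counterpart shows that the joint law of $(R_t, L_t)$ is invariant under coordinate swapping, so $\E[D_{t+1} - D_t \mid \mathcal{F}_t]$ depends on the current state only through a function that flips sign under the mirror involution. Since the single-step increment $|D_{t+1} - D_t|$ is bounded by $2$ (the removed individual changes one branch count, and the accepted offspring possibly changes another), an Azuma-type concentration applied over horizon $T = c \mu n^{1+\epsilon}$ yields $\max_{t \le T} |D_t - D_0| = O(\sqrt{T n^\epsilon})$ with probability $1 - e^{-\Omega(n^\epsilon)}$. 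The hypothesis $\mu = \Omega(n^{1+2\epsilon})$ is exactly what is needed for this deviation to be $o(\mu)$, so that $\max_{t \le T} |D_t| \le \mu/3$ with the claimed probability. Consequently both branches carry $\Omega(\mu)$ individuals throughout the horizon, after the at most $O(\sqrt{\mu})$ initial valley individuals are rapidly evicted by the inverse-elitist rule, and noting that new valley individuals are only created by two-bit mutations of branch individuals, whose expected total count over $T$ is $o(\mu)$.

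The final step is to bound the number of iterations needed to discover both optima. I would apply Theorem~\ref{theo:fitness-tail} to the minimum fitness value on each branch. Conditional on $R_t, L_t = \Omega(\mu)$, the inverse-elitist rule always picks one of the $\Theta(\mu)$ current-worst individuals on whichever branch is currently lagging, and a one-bit mutation of such an individual at min level $j$ raises its fitness to $j+1$ with probability $\Omega((n/2-j)/n)$. Summing over all $n/2$ levels on each branch and the $O(\mu)$ individuals that must be evicted from each level gives expected $\sum_{j=0}^{n/2-1} \mu \cdot n/(n/2-j) = O(\mu n \log n)$ iterations, comfortably inside the budget $O(\mu n^{1+\epsilon})$, and Theorem~\ref{theo:fitness-tail} converts this into the desired $e^{-\Omega(n^\epsilon)}$ tail bound using the $n^\epsilon$ slack.

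The main obstacle will be making the drift analysis in the second paragraph rigorous: the symmetry of \twomax makes the zero-drift intuition compelling, but one must carefully verify it in asymmetric regimes where the two branches differ in their internal fitness distributions, and in particular control the contribution of rare cross-branch mutations so that a clean martingale (or a supermartingale on $D_t^2$) is available for the Azuma--Hoeffding concentration to apply unchanged.
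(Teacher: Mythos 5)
Your overall architecture coincides with the paper's proof of Theorem~\ref{thm:uppersbm}: track the branch-count imbalance $D_t$, show $|D_0|$ is small, bound the optimisation horizon by $O(\mu n^{1+\epsilon})$ via Theorem~\ref{theo:fitness-tail}, and show the walk cannot travel distance $\mu$ within that horizon. However, the step you yourself flag as ``the main obstacle'' is a genuine gap, and the tool you propose for it cannot close it. The bit-complement coupling only shows that the one-step drift is an \emph{odd} function under the mirror involution, i.e.\ $\E[D_{t+1}-D_t\mid s]=-\E[D_{t+1}-D_t\mid \bar s]$, together with the distributional symmetry $D_t\overset{d}{=}-D_t$. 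Both facts are fully consistent with the takeover instability you are trying to rule out: a process in which $|D_t|$ has strictly positive drift away from $0$ (whichever branch is ahead tends to absorb the other) has an odd drift function and a symmetric law --- the symmetry merely says each branch is equally likely to be the one that wins the takeover. So no martingale is delivered by that argument, and the Azuma step has nothing to act on.

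What actually makes $D_t$ a fair walk is a \emph{local, state-by-state} exchangeability argument, which is the paper's key observation. Both the inverse elitist parent selection and the survivor selection act on the set of minimum-fitness individuals (of $P$, respectively of $P\cup\{y\}$) with uniform tie-breaking, and on \twomax a left-branch and a right-branch individual at the same fitness level have identical offspring-fitness distributions (same probability of producing a copy, an improvement, or a worsening). Hence if the current worst level contains $a$ individuals on one branch and $b$ on the other, the probability that a step moves $D$ by $+2$ (offspring lands on the first branch, removed individual comes from the second) and the probability of $-2$ are both equal to $\tfrac{ab}{a+b}$ times the same copy/improvement factors; so conditional on a relevant step the increment is $\pm2$ with probability $1/2$ each \emph{in every state}, not merely on average over mirrored states. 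From this, the deviation bound over the at most $e\mu n^{1+\epsilon}$ relevant steps follows from a Chernoff bound on $\Bin(k,1/2)$ and a union bound over $k$, exactly as you intend, and the hypothesis $\mu=\Omega(n^{1+2\epsilon})$ enters precisely as you compute. The remaining ingredients of your outline (concentration of $D_0$, the fitness-level time bound with its $n^{\epsilon}$ slack, the separate treatment of valley individuals) match the paper and are sound.
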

	\begin{proof}
	We consider the population $P_t$ at time (\ie, iteration) $t$ and its 
mutually exclusive 
sub-populations $P_{0}^{t}$ ($P_{1}^{t}$) which consist of individuals with 
more 0-bits (1-bits) in their bitstrings. Let $D_t:=|P_{0}^{t}|-|P_{1}^{t}|$ 
denote 
the difference between the sizes of the sub-populations at time $t$ and 
$\mathcal{T}$ 
be the smallest $t$ such that $P_t$ consists only of $1^n$ or $0^n$ 
bit-strings. 
We note here that $|D_t|=\mu$ if and only if  $P_t=P_{0}^{t}$ or 
$P_t=P_{1}^{t}$. 
Consequently, $|D_{\mathcal{T}}|< \mu $ implies that both optima are found at 
time $\mathcal{T}$.

 First of all, we will prove the upper bound $\mathcal{T} < \mu e 
n^{1+\epsilon}$ 
which holds with probability at least $1-e^{-\Omega(n^\epsilon)}$ for any 
constant $\epsilon>0$. 
Then, in order to show that $|D_t|<\mu$ for any  $t \in [\mu e n^{1+\epsilon}]$, 
we will first show that, $|D_0| < \mu^{\frac{1+\epsilon}{2}}$ with probability 
at 
least $1-e^{-\Omega(\mu^{2\epsilon})}$, and then finally show that for any 
$t \in [\mu e n^{1+\epsilon}]$ the probability that $|D_t|-|D_0|> \mu 
-\mu^{\frac{1+\epsilon}{2}}$ 
is at most  $e^{-\Omega(n^\epsilon)}$. Using a union bound over (1) the 
probability 
that $\mathcal{T}>\mu e n^{1+\epsilon}$, (2) the probability that the initial 
difference 
$|D_0| > \mu^{\frac{1+\epsilon}{2}}$ (3) all the probabilities 
$|D_t|-|D_0|> \mu -\mu^{\frac{1+\epsilon}{2}}$ for $t \in [\mu e 
n^{1+\epsilon}]$; 
we will obtain our claim.

Given that the best individual in the population has fitness value~$i$, 
the probability that a solution with fitness at least $i+1$ will be sampled in 
the next iteration is at least $(n-i)/(\mu en)$. Since 
$\sum_{i=n/2}^{n-1}p_{i}^{-1}\leq \sum_{i=n/2}^{n-1}\frac{\mu en}{n-i}\leq \mu 
e n \ln{n}$, we can use Theorem~\ref{theo:fitness-tail} to bound the 
probability that $\mathcal{T}$ is at least $\mu e n^{1+\epsilon}$ from 
below.
\begin{align*}
& \prob{\mathcal{T}>\mu e 
n^{1+\epsilon}}\leq \prob{\mathcal{T}>\sum_{i=n/2}^{n-1}p_{i}^{-1} + \mu  
e n^{1+\epsilon}- \mu e n \ln{n}}\\
&\leq \exp{\left(-\frac{ \mu e n \left( n^{\epsilon}-\ln{n} \right) 
}{n}\right)}= \exp(-\Omega(\mu n^{ \epsilon}))
\end{align*}
%
%
%
%
%
  From now on, we will
assume  that $\mathcal{T}\leq\mu e n^{1+\epsilon}$ and account for the case of 
$\mathcal{T}>\mu e n^{1+\epsilon}$ as part of the overall failure probability 
$e^{-\Omega(n^\epsilon)}$.

	Next, we will show that, with probability $e^{-\Omega(n^\epsilon)}$, 
$\mu e 
n^{1+\epsilon}$ iterations will not be enough for one of the branches to take over the population.  $D_t$ changes only when a solution from $P_{0}^{t}$ 
($P_{1}^{t}$) is selected for mutation and then an individual from the 
opposite sub-population $P_{1}^{t}$ ($P_{0}^{t}$) is selected for removal. 
Since 
the parent and the environmental selections are identical and independent, the 
probability $p^+$ of selecting from $P_{0}^{t}$ as parent and from $P_{1}^{t}$ 
for removing is equal to the probability $p^-$ of selecting the parent from 
the $P_{1}^{t}$ and then removing from $P_{0}^{t}$.  Moreover, all parent 
candidates have the 
same probability of being improved and the same probability of being copied, 
thus, 
\[
\prob{D_{t+1}=D_{t}+2 \mid D_{t+1}\neq D_{t}}= \prob{D_{t+1}=D_{t}-2 \mid 
D_{t+1}\neq D_{t}}=1/2 
\]
Let a relevant step be any $t$ such that $D_t\neq D_{t+1}$ and $\pi_k \in [T]$
 denote the actual iteration number of the $k$th relevant step.
 Then,  $D_{\pi_k+1}-D_{\pi_k} \sim 4 \cdot 
\Bernoulli(\frac{1}{2})-2$ and consequently $D_{\pi_k} - D_0 \sim 4 \cdot 
\Bin(k,\frac{1}{2})-2\cdot k$. 
		Since the initial population consists of individuals sampled 
uniformly at random, the initial difference $D_0 \sim 2\cdot \Bin(\mu,1/2) - 
\mu$ 
and $\prob{D_0 > \mu^{\frac{1+\epsilon}{2}}}=\exp\left( 
-\Omega(\mu^{\epsilon})\right)$. Symmetrically we also have $\prob{D_0 < 
-\mu^{\frac{1+\epsilon}{2}}}=\exp\left( -\Omega(\mu^{\epsilon})\right)$ and 
can use the union bound to obtain $\prob{|D_0| < 
\mu^{\frac{1+\epsilon}{2}}}=\exp\left( -\Omega(\mu^{\epsilon})\right)$. 
Therefore, for any relevant step $k$ and the random variable $X \sim 
\Bin\left(k,\frac{1}{2}\right)$; 
\begin{align*}
			&\prob{|D_{\pi_k}|=\mu \quad \big| \quad |D_0| < 
\mu^{\frac{1+\epsilon}{2}}}\leq \prob{|D_{\pi_k}|-|D_0|\geq 
\mu-\mu^{\frac{1+\epsilon}{2}}}\leq \prob{D_{\pi_k}-D_0\geq 
\mu-\mu^{\frac{1+\epsilon}{2}}}\\
				&= \prob{4\cdot X -2\cdot k\geq 
\mu-\mu^{\frac{1+\epsilon}{2}}} = \prob{ X \geq 
\frac{\mu-\mu^{\frac{1+\epsilon}{2}}+2k}{4}}\\
								&= \prob{ X 
\geq 
\frac{\mu-\mu^{\frac{1+\epsilon}{2}}}{4}+E[X]}\leq \exp\left( 
 - \frac{2 \cdot 
\left(\mu-\mu^{\frac{1+\epsilon}{2}}\right)^2}{16 k}\right)
\end{align*}
where in the last step we used Theorem~\ref{thm:cheradd}. Taking the assumptions 
$1-\epsilon=\Theta(1)$, $\mu=\Omega(n^{1+2\epsilon})$ and $k \leq \mu e 
n^{1+\epsilon}$, it holds that
\[
 \exp\left( 
 - \frac{2 \cdot 
\left(\mu-\mu^{\frac{1+\epsilon}{2}}\right)^2}{16 k}\right)=\exp\left( 
 - \Omega\left(\frac{\mu^2 }{k}\right)\right)=\exp\left( 
 - \Omega\left(\frac{\mu }{n^{1+\epsilon}}\right)\right)=\exp\left( 
 - \Omega\left(n^{\epsilon}\right)\right).
\]

Note that the above probability is the probability that a single $k \in [\mu e 
n^{1+\epsilon}]$ 
would have $|D_{\pi_k}|\geq \mu$. 
To get our final results, we will use the union bound to aggregate the 
probabilities 
of failure for all such $k$  along with the probability that $\mathcal{T} > e 
\mu n^{1+\epsilon} $ 
and that $|D_0|>\mu^{\frac{1+\epsilon}{2}}$.
\begin{align*}
			&\prob{|D_{\mathcal{T}}|\neq \mu} > 1 - e \mu 
n^{1+\epsilon}  \exp\left(-\Omega\left(n^{\epsilon}\right)\right)- 
\exp\left(-\Omega\left(n^{\epsilon}\right)\right) - 
\exp\left(-\Omega\left(\mu n^{\epsilon}\right)\right)\\ &= 1- 
\exp\left(-\Omega\left(n^{\epsilon}\right)\right).
\end{align*}
\end{proof}	
\begin{theorem}
	 For any arbitrarily small constant $\epsilon>0$, the
	  \muporls with inversely elitist parent selection and 
$\mu=\Omega(n^{1+\epsilon})$ finds both optima of \twomax in $O(\mu n\log{n})$ fitness 
function evaluations with probability $1-e^{-\Omega(n^{\epsilon})}$.
	\end{theorem}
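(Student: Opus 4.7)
The plan is to mirror the structure of the proof of Theorem~\ref{thm:uppersbm}, defining $D_t:=|P_0^t|-|P_1^t|$ and additionally tracking $V_t$, the number of valley individuals (those with $\onenorm{x}=n/2$). I will establish three tail events, each failing with probability at most $e^{-\Omega(n^\epsilon)}$: (i) the time~$\mathcal{T}$ to produce a monomorphic population exceeds $c\mu n\log n$ for some constant~$c$; (ii) the initial imbalance satisfies $|D_0|>\mu^{(1+\epsilon)/2}$; and (iii) at some $t\le\mathcal{T}$, $|D_t|=\mu$, meaning one branch has died out. A union bound then yields the claim that both optima are identified within $O(\mu n\log n)$ function evaluations with probability $1-e^{-\Omega(n^\epsilon)}$.

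For (i), I apply Theorem~\ref{theo:fitness-tail} with artificial levels indexed by pairs $(\ell,k)$, where $\ell$ is the current minimum fitness in the population and $k$ is the number of individuals attaining it. Under inverse elitist selection, a min-fitness individual is always selected as parent, and the one-bit flip improves its fitness with probability exactly $(n/2-\ell)/n$ (by flipping a bit pointing away from the valley); the resulting offspring is always accepted, causing $k$ to decrease by one (and $\ell$ to advance once $k$ reaches $1$). Since $\sum 1/p_{(\ell,k)}\le\mu\sum_{\ell=0}^{n/2-1}n/(n/2-\ell)=O(\mu n\log n)$, choosing slack $\delta=\Theta(\mu n\log n)$ in Theorem~\ref{theo:fitness-tail} yields a failure probability exponentially small in~$\mu$, hence at most $e^{-\Omega(n^\epsilon)}$ under the assumption $\mu=\Omega(n^{1+\epsilon})$.

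For (ii), conditional on $V_0$, one has $D_0=2\,\Bin(\mu-V_0,1/2)-(\mu-V_0)$, and standard Chernoff bounds give $|D_0|\le\mu^{(1+\epsilon)/2}$ with probability $1-e^{-\Omega(\mu^\epsilon)}$, exactly as in the \mupoea case. For (iii), the key structural observation is that one-bit mutation changes $\onenorm{\cdot}$ by exactly one, so an individual can only switch branches by transiting through the valley. Therefore $D_t$ changes only in two types of iterations: (a) a valley individual is selected and its offspring enters one of the two branches with probability~$1/2$ each (a symmetric $\pm 1$ step), and (b) the current minimum fitness is achieved on both branches simultaneously and an accepted improvement is combined with a uniform tie-breaking removal from the opposite branch (a symmetric $\pm 2$ step). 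In case~(a), $V_t$ decreases by one, so this occurs at most $V_0$ times; since $V_0\sim\Bin(\mu,\Theta(1/\sqrt n))$, we have $V_0=O(\mu/\sqrt n)$ with overwhelming probability. Case~(b) consumes one of the at most $\mu n/2$ successful pushes available over the entire run, and so occurs at most $O(\mu n)$ times. Applying Hoeffding's inequality to the resulting fair random walk of bounded step size, over $N=O(\mu n)$ steps, yields $\prob{|D_t-D_0|\ge\mu/2}\le e^{-\Omega(\mu^2/N)}=e^{-\Omega(\mu/n)}=e^{-\Omega(n^\epsilon)}$.

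The main obstacle is the coupling between the runtime bound and the non-death of both branches, since the fitness-level analysis tacitly assumes both branches remain populated so pushes can occur on each, whereas the persistence of both branches is exactly the conclusion of~(iii). The resolution, as in Theorem~\ref{thm:uppersbm}, is to treat the two analyses as a joint process and combine the failure probabilities via a union bound. Ultimately, the RLS-specific restriction on how branches can be swapped is what permits the weaker population-size requirement $\mu=\Omega(n^{1+\epsilon})$ compared to the SBM case's $\mu=\Omega(n^{1+2\epsilon})$, because the number of $D_t$-changing iterations is only $O(\mu n)$ rather than $O(\mu n^{1+\epsilon})$.
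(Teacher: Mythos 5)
Your proposal is correct and follows essentially the same route as the paper's proof: a fair random walk on $D_t$ whose number of relevant steps is bounded by $O(\mu n)$ because each such step consumes one of the at most $\mu n$ fitness improvements, combined with a Chernoff/Azuma bound on the walk, a union bound over steps, and a fitness-level argument for the $O(\mu n\log n)$ time bound. Your explicit treatment of the valley individuals (the symmetric $\pm 1$ steps, the fact that $V_t$ is non-increasing, and the conditioning of $D_0$ on $V_0$) is a careful refinement of details the paper glosses over, but it does not change the approach.
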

	\begin{proof}
	We will follow the same proof idea and the definition of relevant step 
as in Theorem~\ref{thm:uppersbm}, however, since RLS cannot create copies, the 
number $\mathcal{K}$ of relevant steps in a run  is bounded from above by 
$\mu\cdot n$ because a solution is improved in each relevant step and each 
individual can be improved at most $n$ times. We will denote the actual 
iteration number of the $k$th relevant step as $\pi_k \in [T]$.	At every 
relevant step $k$, the difference $D_k:=|P_{0}^{\pi_k}|-|P_{1}^{\pi_k}|$ either 
increases or decreases by two with equal probability $1/2$. Formally, 
$D_{k+1}-D_{k} \sim 2 \cdot \Bernoulli(\frac{1}{2})-1$ and consequently $D_k - 
D_0 \sim 2 \cdot \Bin(k,\frac{1}{2})-k$. 
 As in the \mupoea, the initial $D_0$ satisfies $\prob{|D_0| < 
\mu^{\frac{1}{2}+\epsilon}}=\exp\left( -\Omega(\mu^{2\epsilon})\right)$. 
Therefore, for any relevant step $k$, 
	\begin{align*}
&\prob{|D_k|=\mu \quad \big| \quad |D_0| < 
\mu^{\frac{1}{2}+\epsilon}}\leq \prob{|D_k|-|D_0|\geq 
\mu-\mu^{\frac{1}{2}+\epsilon}}\leq \prob{D_k-D_0\geq \mu-\mu^{\frac{1}{2}+\epsilon}}\\
&= \prob{2\cdot X -k\geq 
\mu-\mu^{\frac{1}{2}+\epsilon}}= \prob{ X \geq \frac{\mu-\mu^{\frac{1}{2}+\epsilon}+k}{2}}= \prob{ X 
\geq 
\frac{\mu-\mu^{\frac{1}{2}+\epsilon}}{2}+E[X]}\\
&\leq\exp\left(-\frac{2\cdot\left(\mu-\mu^{\frac{1}{2}+\epsilon}\right)^2}
{\frac{k}{2 
}}\right)=\exp\left(-\Omega\left(\frac{\left(n^{1+\epsilon}\right)^2}{n^{
1+\epsilon}} \right)\right)=\exp\left(-\Omega\left(n^{\epsilon}\right)\right)
	\end{align*} 
		where $X\sim \Bin(k,\frac{1}{2})$ and in the last step we  used 
the observation that $k\leq \mu\cdot n$ and the assumption that 
$\mu=\Omega(n^{1+\epsilon})$. Using the above bound and the union bound over 
$k\in[\mathcal{K}]$ we get our final result:
%
\[
\prob{\forall k \in [\mathcal{K}] \quad |D_k|\neq \mu 
\quad \big| \quad |D_0|  < \mu^{\frac{1}{2}+\epsilon}}\geq 1- \mu \cdot n  
\cdot \exp\left(-\Omega\left(n^{\epsilon}\right)\right)= 1- 
\exp\left(-\Omega\left(n^{\epsilon}\right)\right)
\]
Thus, with probability $1- 
\exp\left(-\Omega\left(n^{\epsilon}\right)\right)$,  $|D_k|<\mu$ for all $k$ and 
the population is not taken over by any of the branches before all the solutions 
reach the optimal fitness.
			\end{proof}

\section{Not Too High, Nor Too Low Pressure: Experimental Supplements}\label{sec:mid}
It is easy to see that the Inverse Elitist selection operator considered in the previous section
may struggle to identify the global optimum if the worst individual is trapped on a low quality local optimum. In such a case it will be selected until it escapes which may require prohibitive time.

In this section we present an experimental analysis of the \mupoea using the selection operators considered in the previous sections (i.e., too high and extremely low selective pressure) together with a range of intermediate selective pressures that are higher than Inverse Elitist but still considerably lower than Uniform selection. To achieve such pressures we implement {\it Inverse Tournament selection} as defined in Section 2 with small tournament sizes\peter{\footnote{Naturally the same selection probabilities can be achieved with rank selection.}}. 
Our aim is to show that the latter operator is preferable to both Uniform and Inversely Elitist selection because it has low premature convergence for \twomax and it can effectively escape from local optima as experiments for \truncatedtwomax will show.  

The results presented in this section are based on performing 1000 independent
runs for each combination of problem size and algorithm parameters, and
observing whether the ($\mu$+1)~EA is able to construct both optima during the
search process. The presented figures estimate the probability that the
algorithms find both optima by the proportion of the independent runs which do
so. Additionally, a 95\% confidence interval is displayed as a shaded region
around the estimated probabilities.

\begin{figure}[t]
	\centering
\centering
\begin{tikzpicture}
\begin{semilogxaxis}[
width=0.75\linewidth,
height=0.40\linewidth,
legend style={nodes={scale=0.8, transform shape}, at={(1.02,0.5)}, anchor=west},
legend columns=1,
xlabel={Population size ($\mu$)},
ylabel={P(both peaks are found)},
ymin = -0.05, ymax=1.05,
try min ticks=5, log ticks with fixed point,
mark options={solid},mark repeat=5,
]
\plotwitherr{cyan}{mark=o}{mu}{UNIF/LOWM/RLS/n100}{1000}{pBoth-basic.tsv}
\plotwitherx{cyan}{mark=o}{mu}{UNIF/LOWM/SBM/n100}{1000}{pBoth-basic.tsv}
\plotwitherr{cyan}{mark=triangle}{mu}{UNIF/LOWM/RLS/n200}{1000}{pBoth-basic.tsv}
\plotwitherx{cyan}{mark=triangle}{mu}{UNIF/LOWM/SBM/n200}{1000}{pBoth-basic.tsv}
\plotwitherr{cyan}{mark=square}{mu}{UNIF/LOWM/RLS/n500}{1000}{pBoth-basic.tsv}
\plotwitherx{cyan}{mark=square}{mu}{UNIF/LOWM/SBM/n500}{1000}{pBoth-basic.tsv}
\plotwitherr{orange}{mark=*,mark size=1.5pt}{mu}{UNIF/UNIF/RLS/n100}{1000}{pBoth-basic.tsv}
\plotwitherx{orange}{mark=*,mark size=1.5pt}{mu}{UNIF/UNIF/SBM/n100}{1000}{pBoth-basic.tsv}
\plotwitherr{orange}{mark=triangle*,mark size=1.5pt}{mu}{UNIF/UNIF/RLS/n200}{1000}{pBoth-basic.tsv}
\plotwitherx{orange}{mark=triangle*,mark size=1.5pt}{mu}{UNIF/UNIF/SBM/n200}{1000}{pBoth-basic.tsv}
\plotwitherr{orange}{mark=square*,mark size=1.5pt}{mu}{UNIF/UNIF/RLS/n500}{1000}{pBoth-basic.tsv}
\plotwitherx{orange}{mark=square*,mark size=1.5pt}{mu}{UNIF/UNIF/SBM/n500}{1000}{pBoth-basic.tsv}

\legend{Inverse Elitist $n=100$,Inverse Elitist $n=200$,Inverse Elitist $n=500$, Uniform $n=100$, Uniform $n=200$, Uniform $n=500$}
\end{semilogxaxis}
\end{tikzpicture}
	\caption{Proportion of independent runs finding both peaks on \twomax for various combinations of parent selection
	operators, mutation operators (solid: RLS$_1$, dotted: standard bit mutation), and problem sizes.}
	\label{fig:experiments-inverse-and-uniform}
\end{figure}
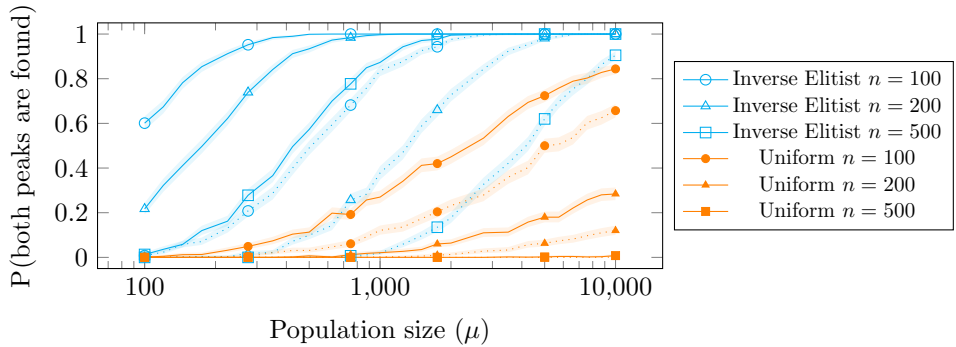

We start by examining the success probability of the analysed selection
operators on \twomax for small problem sizes.
Figure~\ref{fig:experiments-inverse-and-uniform} compares the performance of
Inverse Elitist and Uniform parent selection operators for various population
and problem sizes. As expected, Uniform parent selection requires a larger
population size than Inverse Elitist parent selection in order to achieve the
same probability of finding both peaks; for the considered problem sizes, $\mu
= n^{1.5}$ appears sufficient for the combination of Inverse Elitist parent
selection and RLS mutation to achieve a high probability of discovering both
peaks, while SBM mutation requires a slightly larger population size, and
Uniform parent selection requires a significantly larger population size.

\begin{figure}[t]
	\centering
\centering
\begin{tikzpicture}
\begin{semilogxaxis}[
width=0.75\linewidth,
height=0.40\linewidth,
legend style={nodes={scale=0.8, transform shape}, at={(1.02,0.5)}, anchor=west}, 
legend columns=1,
xlabel={Population size ($\mu$)},
ylabel={P(both peaks are found)},
ymin = -0.05, ymax=1.05,
try min ticks=5, log ticks with fixed point,
mark options={solid},mark repeat=5,
]
\plotwitherr{cyan}{thick,mark=*, mark size=1.5pt}{mu}{UNIF/LOWM/RLS/n200}{1000}{pBoth-basic.tsv}
\plotwitherx{cyan}{thick,mark=*, mark size=1.5pt}{mu}{UNIF/LOWM/SBM/n200}{1000}{pBoth-basic.tsv}
\plotwitherr{blue}{mark=triangle}{mu}{UNIF/TOURL/RLS/k2/n200}{1000}{pBoth-tournament.tsv}
\plotwitherx{blue}{mark=triangle}{mu}{UNIF/TOURL/SBM/k2/n200}{1000}{pBoth-tournament.tsv}
\plotwitherr{orange}{mark=square}{mu}{UNIF/TOURL/RLS/k3/n200}{1000}{pBoth-tournament.tsv};
\plotwitherx{orange}{mark=square}{mu}{UNIF/TOURL/SBM/k3/n200}{1000}{pBoth-tournament.tsv};
\plotwitherr{green!60!black}{mark=o}{mu}{UNIF/TOURL/RLS/k5/n200}{1000}{pBoth-tournament.tsv}
\plotwitherx{green!60!black}{mark=o}{mu}{UNIF/TOURL/SBM/k5/n200}{1000}{pBoth-tournament.tsv}
\plotwitherr{purple}{thick,mark=triangle*, mark size=1pt}{mu}{UNIF/UNIF/RLS/n200}{1000}{pBoth-basic.tsv}
\plotwitherx{purple}{thick,mark=triangle*, mark size=1pt}{mu}{UNIF/UNIF/SBM/n200}{1000}{pBoth-basic.tsv}
\plotwitherr{blue!60!black}{mark=square*, mark size=1pt}{mu}{UNIF/TOURH/RLS/k2/n200}{1000}{pBoth-tournament.tsv}
\plotwitherx{blue!60!black}{mark=square*, mark size=1pt}{mu}{UNIF/TOURH/SBM/k2/n200}{1000}{pBoth-tournament.tsv}

\legend{Inverse Elitist,Inv. 2-Tournament, Inv. 3-Tournament, Inv. 5-Tournament,
Uniform, 2-Tournament}
\end{semilogxaxis}
\end{tikzpicture}
	\caption{Proportion of independent runs finding both peaks on \twomax ($n=200$) for various combinations of parent selection
	and mutation operators (solid: RLS$_1$, dotted: standard bit mutation).}
	\label{fig:experiments-inverse-tournament}
\end{figure}
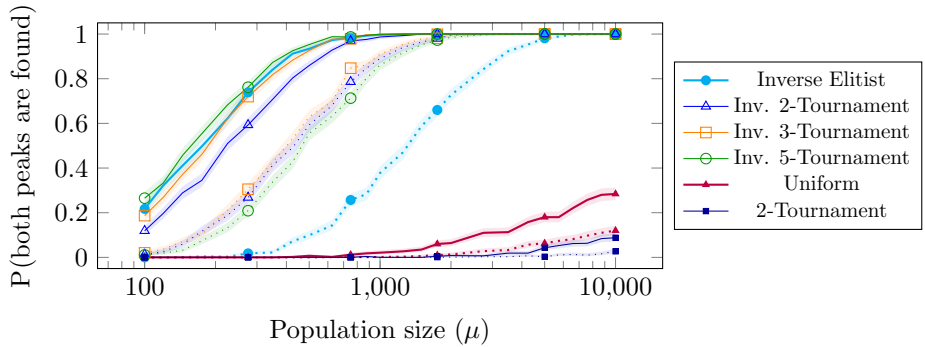

Figure~\ref{fig:experiments-inverse-tournament} compares the performance of
Tournament and Inverse Tournament parent selection operators with that of
Inverse Elitist and Uniform parent selection operators for a fixed problem size
and varying population sizes. Generally, modest Inverse Tournament sizes (such
as $k=3$) are sufficient to achieve similar-or-better probability of
discovering both optima compared to Inverse Elitist parent selection when using
RLS mutation, and can even outperform Inverse Elitist parent selection when
using standard bit mutation. As expected, the higher selective pressure of the
traditional Tournament selection results in faster takeover by a single
branch, and hence a lower probability of finding both optima.

\begin{figure}[t]
	\centering
\centering
\begin{tikzpicture}
\begin{semilogxaxis}[
width=0.75\linewidth,
height=0.40\linewidth,
legend style={nodes={scale=0.8, transform shape}, at={(1.02,0.5)}, anchor=west},
legend columns=1,
xlabel={Population size ($\mu$)},
ylabel={P(both peaks are found)},
ymin = -0.05, ymax=1.05,
try min ticks=5, log ticks with fixed point,
mark options={solid},mark repeat=5,
]
\plotwitherr{cyan}{thick,mark=*,mark size=1.5pt}{mu}{UNIF/LOWM/RLS/n200}{1000}{pBoth-truncated.tsv}
\plotwitherx{cyan}{thick,mark=*,mark size=1.5pt}{mu}{UNIF/LOWM/SBM/n200}{1000}{pBoth-truncated.tsv}
\plotwitherr{blue}{mark=triangle}{mu}{UNIF/TOURL/RLS/k2/n200}{1000}{pBoth-truncated.tsv}
\plotwitherx{blue}{mark=triangle}{mu}{UNIF/TOURL/SBM/k2/n200}{1000}{pBoth-truncated.tsv}
\plotwitherr{orange}{mark=square}{mu}{UNIF/TOURL/RLS/k3/n200}{1000}{pBoth-truncated.tsv}
\plotwitherx{orange}{mark=square}{mu}{UNIF/TOURL/SBM/k3/n200}{1000}{pBoth-truncated.tsv}
\plotwitherr{green!60!black}{mark=o}{mu}{UNIF/TOURL/RLS/k5/n200}{1000}{pBoth-truncated.tsv}
\plotwitherx{green!60!black}{mark=o}{mu}{UNIF/TOURL/SBM/k5/n200}{1000}{pBoth-truncated.tsv}
\plotwitherr{purple}{thick,mark=triangle*,mark size=1pt}{mu}{UNIF/UNIF/RLS/n200}{1000}{pBoth-truncated.tsv}
\plotwitherx{purple}{thick,mark=triangle*,mark size=1pt}{mu}{UNIF/UNIF/SBM/n200}{1000}{pBoth-truncated.tsv}
\plotwitherr{blue!60!black}{mark=square*,mark size=1pt}{mu}{UNIF/TOURH/RLS/k2/n200}{1000}{pBoth-truncated.tsv}
\plotwitherx{blue!60!black}{mark=square*,mark size=1pt}{mu}{UNIF/TOURH/SBM/k2/n200}{1000}{pBoth-truncated.tsv}

\legend{Inverse Elitist,Inv. 2-Tournament, Inv. 3-Tournament, Inv. 5-Tournament,
Uniform, 2-Tournament}
\end{semilogxaxis}
\end{tikzpicture}
	\caption{Proportion of independent runs finding both peaks on \truncatedtwomax ($n=200$, $k=80$) for various combinations of parent selection
	operators and mutation operators (solid: RLS$_1$, dotted: standard bit mutation).}
	\label{fig:experiments-truncated-twomax}
\end{figure}
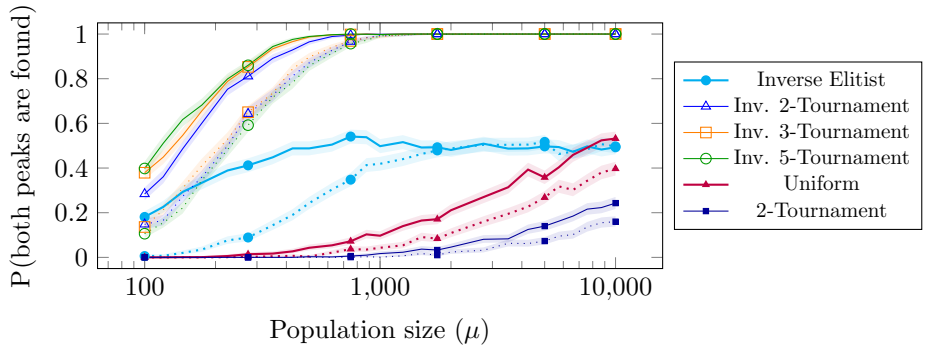

For \truncatedtwomax, where the left branch terminates at an earlier local
optimum, Figure~\ref{fig:experiments-truncated-twomax} shows that with
sufficiently high population sizes, Inverse Elitist parent selection is only
able to find both peaks with probability at most $1/2$, while the other parent
selection mechanisms are able to achieve higher probabilities of finding both
peaks compared to the non-truncated setting of
Figure~\ref{fig:experiments-inverse-tournament}. Additionally, Inverse
Tournament Selection suffers a runtime penalty in this setting: when the time
comes to remove from the population the local optimum on the truncated branch, 
larger tournament sizes result in higher waiting times to select a
suitable parent to mutate. This is particularly pronounced when the population
size only provides a moderate (rather than high) probability of finding both
peaks. If the population is almost entirely on the truncated branch when that
local optimum is the lowest-fitness individual in the population, selecting a
parent from the other branch could require up to $O(\mu^{s})$ iterations, where
$s$ is the employed Inverse Tournament size, and would need to be repeated many
times to restore a reasonable probability of an iteration making progress
toward the global optimum. We conclude that small (i.e., constant) inverse tournament 
sizes should be used in practice.

For the problems considered up to now, the use of single trajectory algorithms coupled with restart strategies suffice for efficient optimisation.
In the following sections we consider problems from the literature where restart strategies are ineffective.

\section{Where Populations are Essential Because Restart Strategies Fail} \label{sec:norestarts}
In the previous sections we have shown that decreasing the selective pressure below uniform, increases the exploration capabilities of the \mupoea. Essentially lower selective pressure increases the takeover time, effectively allowing the population to explore the two slopes of \twomax. However, when multiple slopes can be identified with high probability at initialisation as for \twomax, then single trajectory algorithms with appropriate restart strategies can be just as, or even more, effective.

In this section we consider two benchmark function classes from the literature originally designed 
to highlight circumstances where populations are essential and single trajectory algorithms fail w.o.p.
The two function classes have different characteristics. The first 
one,  \textsc{SufSamp}, was introduced to show situations when offspring populations outperform 
single trajectory algorithms \cite{JansenSufSamp}. The function consists of a main steep path 
leading to the global optimum with many branching points located at regular intervals. At each such 
point a branch leading towards a different local optimum may be followed. While single 
trajectory algorithms are tricked into following one of the branches, algorithms using large enough 
offspring populations can identify the steeper slope by sampling many solutions around each 
branching point.  We will refer to this function using a more descriptive name: \sufsamp.
The other function class, \textsc{TwoGradients}, is inspired by one designed to highlight when the 
use of parent populations is essential~\cite{WittFamily}. It consists of two different paths: an 
easier one for hillclimbers leading to local optima and a slower one which leads to the global 
optimum. While single trajectory algorithms quickly optimise the easier path, the slower progress of 
parent populations on the easy path allows the \mupoea to follow the not-so-steep gradient to the 
global optimum.
Since for both function classes, 
single trajectory algorithms are attracted w.o.p. 
to the local optima, even restarts do not allow them to identify the global optimum in polynomial time. Apart from proving that the \mupoea with the inverse selection operators optimises the problems efficiently, in this section we will also show, either through rigorous proofs or by experimental evidence, that it outperforms the same algorithm using uniform selection: the inverse selection algorithms can identify all the optima with high probability, hence optimise the functions efficiently independently of which local optimum is designated to be the global one. On the other hand, single trajectory algorithms and the uniform selection \mupoea can be trapped w.o.p. 
according to which optimum is the global one and even restarts cannot help them to optimise the functions efficiently.


\subsection{A Function Class with Many Slopes}


In this section we aim to highlight an important exploration characteristic of the \mupoea with inverse tournament selection that single trajectory algorithms using restarts or population-based EAs with deterministic crowding do not have. In the latter algorithms each individual explores the search space independently and no communication occurs between individuals that explore different trajectories.  
Differently, 
inverse selection does not completely isolate the individuals in the population from each other. If individuals are located on different slopes,
it is more likely that slopes of lower fitness are explored first. Once the local optima of the slopes are identified, individuals on slopes of higher fitness will eventually take over and the search may proceed in other unexplored directions.
This property allows an EA with inverse selection to exhaust the improvement potential of many of the gradients it encounters with good probability.  
In particular, it allows the algorithm to explore gradients which more greedy search strategies ({\color{black}e.g.}, using offspring populations or parent populations with low take-over times) are likely to ignore, thus identify optima that popular algorithms will struggle to find.
To demonstrate this behaviour, we consider 
the \sufsamp function which has many slopes leading to different local optima. It was originally 
introduced  to highlight problem characteristics where offspring populations are useful 
\cite{JansenSufSamp}. 
It is defined as follows: 

\begin{definition}
For $n\in \mathbb{N}$ we define $k:= \lfloor \sqrt{n} \rfloor$. We use $|x|=\textsc{OneMax}(x)$ and 
define the function $g:\{0,1\}^n \rightarrow \mathbb{N}$ for all $x\in \{0,1\}^n$ 
by
\label{def-sufsampcore}
\[
g(x):=
\begin{cases}
(i+3)n +|x| &\text{if } (x=0^{n-i}1^{i} \text{ with } 0\leq i \leq n ) \text{ or } \\
&(x=y0^{n-i-k}1^{i} \text{ with } \\&i \in \{k, 2k, \ldots, (k-2)k\}, \\&y\in \{0,1\}^k )
\\
0 & \text{otherwise,}
\end{cases}
\]
for all $x=x_1x_2\cdots x_n \in \{0,1\}^n$  with $x':= x_1x_2\cdots x_{m^{'}}\in \{0,1\}^{m^{'}}$  and  $x':= x_1x_2\cdots x_{m^{''}}\in \{0,1\}^{m^{''}}$.
\end{definition}

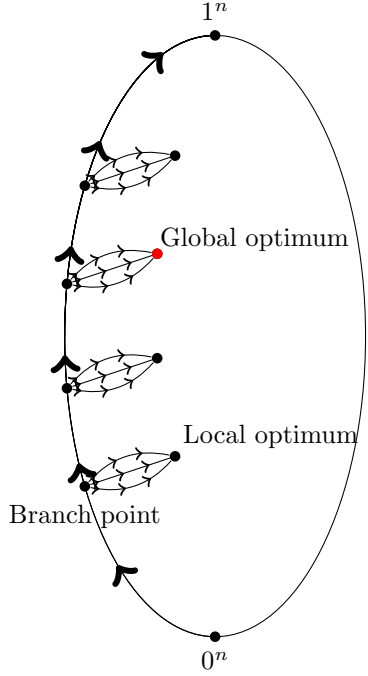
\begin{figure} 
\caption{Visual representation of \sufsamp. Arrows indicate the direction of increasing fitness while multiple edges represent different ways  the branch points can be improved towards the local optima.}
\begin{tikzpicture}

\draw (0,0) node(al1) [circle, fill, inner sep=0pt,,minimum size=4pt, label=above:{$1^{n}$}] {}  arc (90:540):20mm and 40mm) ; 

\draw (al1) arc (90:270:20mm and 40mm)
	[postaction={
							decorate, decoration={
																			markings, mark=between positions 0.1 and 0.9 step 0.15 with thick {\arrow[line width=1mm]{<}}
																		}
						}
        ] node(al0)[circle, fill, inner sep=0pt,,minimum size=4pt, label=below:{$0^n$} ] {};

\foreach \t [count=\i] in {210,190,170,150}
{\draw (al1)  arc (90:\t:20mm and 40mm) node(b\i)[circle, fill, inner sep=0pt,,minimum size=4pt, label=below:{}] {};};

\foreach \t [count=\i] in {1,2,3,4}
{\draw [postaction={
							decorate, decoration={
																			markings, mark=between positions 0.1 and 0.9 step 0.3 with thick {\arrow[line width=0.3mm]{>}}
																		}
						}](b\i) -- ($(b\i)+(1.2,0.4)$) node[circle, fill, inner sep=0pt,minimum size=4pt,label={[xshift=1.0cm, yshift=0.3cm]}](l\i) {};

\draw [bend right,postaction={
							decorate, decoration={
																			markings, mark=between positions 0.1 and 0.9 step 0.3 with thick {\arrow[line width=0.3mm]{>}}
																		}
						}] (b\i) to (l\i);
\draw [bend left,postaction={
							decorate, decoration={
																			markings, mark=between positions 0.1 and 0.9 step 0.3 with thick {\arrow[line width=0.3mm]{>}}
																		}
						}] (b\i) to (l\i);						
						
 };

\draw (b1) node[label=below:{Branch point}] {};
\draw (l1) node[label={[label distance=-2mm]45:{Local optimum}}] {};
 \draw (l3) node[circle, fill, red, inner sep=0pt,,minimum size=4pt, label={[label distance=-2mm]45:{Global optimum}} ] {};

				\end{tikzpicture}
						\centering
						\label{fig:sufSamp}
				\end{figure}

We will use the function $g$ to give \sufsamp its characteristic structure. It is essentially a 
\text{Ridge} function (where for $i\in\{0,\ldots,n\}$ \emph{Ridge solutions} of the form 
$0^{n-i}1^i$ are assigned fitness values which monotonically increase with $i$) with 
special \emph{branch points} 
of the form $0^{n-j\cdot k}1^{j\cdot k}$ for any $j\in\{1,\ldots, k-1\}$ on its path at every 
$k=\lfloor \sqrt{n} \rfloor$ 
additional trailing ones. These branch points can either be improved by increasing the \onemax 
value of its prefix (thus, obtaining an \emph{off-branch} solution of the form 
$y0^{n-(j+2)k}1^{(j+1)\cdot k}$ where $y\neq 0^k$ ) or by flipping the first zero bit 
before its trailing ones. While improving the \onemax part of the function is asymptotically more 
likely,  flipping the \text{Ridge} bit increases the function value more than it is possible by 
maximising the \onemax part in the prefix. Moreover, flipping the particular \text{Ridge} bit allows 
the algorithm to follow the gradient to the next branch point. In its current form $g$ does not 
provide any guidance to the algorithm for finding the first point on the \textsc{Ridge}.  We add 
this extra gradient to obtain the \sufsamp function as defined in \cite{JansenSufSamp}.
\begin{definition}
For $n\in \mathbb{N}$ we define $m^{'}:= \lfloor n/2 \rfloor$ and $m^{''}:= \lceil n/2 \rceil$, and 
$\sufsamp:\{0,1\}^n \rightarrow \mathbb{N}$
\label{def-sufsamp}
\[
\sufsamp(x):=
\begin{cases}
n-\textsc{OM}(x^{''})&\text{if } x'\neq 0^{m^{'}} \wedge x'\neq 0^{m^{''}}  \\
2n-\textsc{OM}(x^{'})&
\text{if } x' \neq 0^{m^{'}} \wedge x'= 0^{m^{''}}
\\
g(x^{''})&\text{if } x'= 0^{m^{'}} 
\end{cases}\]
for all $x=x_1x_2\cdots x_n \in \{0,1\}^n$  ~with~  $x':= x_1x_2\cdots x_{m^{'}}\in \{0,1\}^{m^{'}}$  ~and~   $x':= x_1x_2\cdots x_{m^{''}}\in \{0,1\}^{m^{'}}$.
\end{definition}
Since it is more likely to flip a \onemax bit rather than the \text{Ridge} bit at each branching point, single trajectory individuals, such as RLS and the \ooea, get stuck on a local optimum w.o.p., 
hence restarts cannot help either. On the other hand, an offspring population algorithm such as the (1+$\lambda$)~EA with sufficiently large $\lambda$, samples  %
the \text{Ridge} bit at each branching point w.o.p., hence efficiently follows the slope towards the global optimum.

We generalise the function class to make it considerably harder for evolutionary algorithms by including $k-1$ different functions, each one differing by the position of the global optimum.
In particular, we introduce a parameter $j\in\{1, \ldots, 
k-1 \}$ and for $\sufsamp_j$ the 
optimal fitness value is assigned to the local optimum with $j\cdot k$ trailing ones. 
\textcolor{black}{The function class is illustrated in Fig.~\ref{fig:sufSamp}.}

\begin{definition}
For $k\in \mathbb{N}$, $m'=m''=k^2$, $n=m'+m''$ and $j\in\{1, \ldots, k-1 \}$, we define 
$\sufsamp_j(x):\{0,1\}^n \rightarrow \mathbb{N}$
\label{def-sufsampmod}
\[
\sufsamp_{j}(x):=
\begin{cases}
n^3 &\text{if } x=0^{m'}1^{k}0^{m'-(j+1)\cdot k}1^{j\cdot k}  \\
\sufsamp(x)&
\text{otherwise } 
\end{cases}\]
\end{definition}

As a result we obtain a function class that none of the traditional EAs, the restart based 
approaches or the inverse elitist selection \mupoea can optimise efficiently. The 
$(1+\lambda)$~EA with an offspring population of size $\Omega(n\log{n})$ ignores the milder gradients 
and with probability arbitrarily close to $1$ finds a local optimum (\emph{i.e.} the global optimum of the original \sufsamp)
 \cite{JansenSufSamp}.  When the offspring population 
size is slightly increased to $\Omega(n^{1+\epsilon})$ for any arbitrarily small constant 
$\epsilon$, then the probability that  $(1+\lambda)$~EA follows any of the milder gradients 
becomes exponentially small and the global optimum of $\sufsamp_{j}$ cannot be found in 
polynomial time for any $j\neq k-1$ w.o.p. On the other hand, EAs with small 
populations (whether it is a single run or a restart based implementation) have a high probability 
of following any of the mild gradients they encounter and thus have an exponentially small 
probability of finding the optima  that require a super-constant number of mild gradients to be 
ignored.  
\textcolor{black}{If the population is large enough to ignore the mild gradients with high probability, then the \mupoea is very likely to end up on the original global optimum at the end of the Ridge.}
Noting that $\sufsamp_{k-1}$ is identical to the original 
function, 
we first extend the negative result presented in 
\cite{JansenSufSamp} {\color{black} with a proof sketch that adapts it }
to $\sufsamp_{j}$.
\begin{theorem}
 \label{thm-sufsamp}
 The probability that the \ooea optimises the function $\sufsamp_{j}$ for any $j>k/2$
within $n^{O(1)}$ function evaluations is bounded above by $2^{-\Omega(\sqrt{n})}$.
\end{theorem}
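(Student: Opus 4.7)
The plan is to decompose the \ooea's run on $\sufsamp_j$ into three phases and argue that, for $j > k/2$, the cumulative probability of reaching the global optimum within $n^{O(1)}$ evaluations is $2^{-\Omega(\sqrt{n})}$.

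In the first phase the algorithm follows the two \onemax-style gradients of $\sufsamp$ outside the $g$-region: standard drift arguments show that, with probability $1 - 2^{-\Omega(n)}$, $x''$ is first driven to $0^{m''}$ and then $x'$ to $0^{m'}$, so the algorithm enters the $g$-region at the Ridge origin $0^n$ within $O(n\log n)$ evaluations. Any shortcut that bypasses this sequence and directly enters the $g$-region at a Ridge point with $i>0$ or on an off-branch requires flipping $\Omega(\sqrt{n})$ specific bits at once, an event of probability $n^{-\Omega(\sqrt{n})}$ over any polynomial number of steps.

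The second phase is the crux. Once on the Ridge, I would analyse the algorithm's behaviour segment by segment. Between two consecutive branch points, the only accepted single-bit improvement is flipping the next Ridge bit, since any state of the off-branch form $y0^{n-i-k}1^{i}$ with $i$ not a multiple of $k$ has fitness $0$; so the Ridge is climbed bit by bit and all other improving moves are multi-bit Ridge jumps of probability $O(1/n^2)$, absorbed in a lower-order error term. At a branch point $0^{n-ik}1^{ik}$ with $i\in\{1,\dots,j-1\}$ there are only two relevant accepted mutation types: (i) a Ridge step where exactly the Ridge bit at position $n-ik$ is flipped, with probability $(1\pm o(1))/(en)$, and (ii) an off-branch step where at least one of the first $k$ bits is flipped and nothing outside the first $k$ bits, with probability $(1\pm o(1))k/(en)$. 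Conditioning on the next improvement, Ridge passage therefore has probability at most $(1+o(1))/(k+1)$, and by the strong Markov property the $j-1$ branch-point decisions are independent. Hence the probability of reaching branch point $j$ is at most $(2/k)^{j-1} \le 2^{-\Omega(\sqrt{n}\log n)}$, since $j>k/2=\Theta(\sqrt{n})$.

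The third phase handles the off-branch traps: once an off-branch is chosen at some $i<j$, the algorithm hill-climbs the prefix~$y$ to reach the local optimum $1^{k} 0^{n-(i+1)k} 1^{ik}$. From there, any strictly better state (either the Ridge point $0^{n-ik-1}1^{ik+1}$ or any higher-level off-branch, in particular the global optimum at level~$j$) is at Hamming distance at least $k$, so the escape probability per step is $n^{-\Omega(\sqrt{n})}$; a union bound over $n^{O(1)}$ steps still gives $2^{-\Omega(\sqrt{n})}$. Combining the three phases via a union bound yields the claimed success probability of at most $2^{-\Omega(\sqrt{n})}$. I expect the main obstacle to be making Phase~2 rigorous: between branch points the algorithm must climb $k-1$ intermediate Ridge bits, and one needs to verify that conditioned on reaching the next branch point the algorithm arrives precisely at the Ridge state $0^{n-(i+1)k}1^{(i+1)k}$, with all deviating trajectories contributing only an $n^{-\Omega(1)}$ slack per segment that is absorbed into the final bound.
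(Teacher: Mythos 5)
Your decomposition is essentially the paper's own argument, which is itself only a sketch deferring to Theorem~7 of Jansen et al.: the probability of passing a single branch point on the Ridge rather than being absorbed by the off-branch \onemax gradient is $O(1/k)=O(1/\sqrt{n})$, this is multiplied over the $\Omega(\sqrt{n})$ branch points that must be traversed when $j>k/2$, the off-branch local optima are inescapable without flipping $\Omega(\sqrt{n})$ specific bits, and a separate term accounts for the EA first landing on the Ridge at a point that already skips branch points. Your Phases 2 and 3 are sound modulo lower-order corrections you only gesture at (multi-bit Ridge jumps from just below a branch point, and returns from an off-branch state with $\lvert y\rvert=l$ back to the Ridge, each contributing $O(1/n)$ per branch point and hence absorbed into your $2/k$ factor); also the branch-point decisions are not literally independent, but sequential conditioning gives the same product bound. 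The step I would push back on is Phase~1: it is not true that entering the $g$-region at a Ridge point with $i>0$ requires flipping $\Omega(\sqrt{n})$ specific bits at once --- a transition from the end of the prefix-minimisation phase to Ridge level $i$ costs only about $i+1$ specific bits, so low-level entries happen with merely polynomially small probability. What rescues the argument is that such entries are harmless (they skip no branch point), whereas the harmful entries at level $i\ge k$ do cost $k+1$ specific bits. Be aware that the paper bounds the corresponding event (the first Ridge point sampled having more than $2k$ trailing ones) only by $2^{-\Omega(\sqrt{n})}$, not by $n^{-\Omega(\sqrt{n})}$, and it is exactly this term that fixes the exponent in the theorem statement; so you should either verify your stronger claim carefully against the precise definition of \sufsamp or settle for the weaker $2^{-\Omega(\sqrt{n})}$ bound for this event, which still yields the stated result.
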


We follow the proof of Theorem 7 in \cite{JansenSufSamp}  
to establish that at every branch point, the \ooea samples a locally optimal solution with $k$ 
leading ones before it samples a new ridge solution with probability $1-O(1/\sqrt{n})$. Since 
$j>k/2$ implies that the best fitness value of $\sufsamp_{j}$ is assigned to a local optimum with at 
least $m''/2$ trailing ones, we multiply 
the failure probabilities of only the first $\sqrt{n}/4$ branch points (rather than all of them as 
in \cite{JansenSufSamp}) and obtain the same asymptotic failure probability 
$2^{-\Omega(\sqrt{n}\log{n})}$. This allows us to report the same overall failure probability as 
in \cite{JansenSufSamp} which stems from the probability that the first ridge point sampled by the 
\ooea has more than $2k$ trailing ones.

The exponentially small probability given in Theorem~\ref{thm-sufsamp}  implies that at least an 
exponentially large number of restarts of the \ooea are necessary to observe at least one run that 
identifies the global optimum in polynomial time. 
Thus, any restart strategy using the \ooea as the base algorithm, whether the runs are in parallel or sequential, cannot solve \sufsamp in polynomial time. Hence, also a \mupoea using deterministic crowding as diversity mechanism fails w.o.p. 
We now present the main result of this subsection which determines that the \mupoea with inverse binary tournament selection identifies all the optima of \sufsamp efficiently, hence also the global one wherever it is designated to be. 
\begin{theorem}
 \label{thm-sufsampNeg}
 The probability that the \mupoea with polynomial population size $\mu=\omega(n^{3})$ and inverse 
binary tournament {\color{black}selection}, optimises the function $\sufsamp_{j}$ for any $j \in \{1,\ldots, k-1\}$ in 
$O(\mu^2 \sqrt{n})$ fitness function evaluations is  at least $1-2^{-\Omega(\sqrt{n})}$.
\end{theorem}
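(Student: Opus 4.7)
The plan is to argue that under inverse binary tournament selection the population remains broadly distributed over the Ridge structure, so that at every one of the $k-1 = O(\sqrt{n})$ branch points sufficiently many mutation attempts are made to guarantee that both the ridge-continuing mutation and the off-branching mutation succeed. In particular, unlike the greedy $(1+1)$~EA which at branch point $i$ commits w.o.p.\ to the off-branch before flipping the critical Ridge bit, the $(\mu+1)$~EA with $\mu = \omega(n^3)$ will eventually populate all $k-1$ branch-local-optima (including the designated global optimum of $\sufsamp_j$) because branch-point individuals, being of lower fitness than the current front, are preferentially selected by the inverse tournament. I would decompose the run into three phases and bound each, then take a union bound.

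For Phase~1, while the prefix $x'$ is not yet $0^{m'}$, the $\sufsamp$ definition provides a monotone OneMax-like gradient on the relevant half, and the worst individual is selected with probability at least $\Omega(1/\mu)$ by the inverse binary tournament. A fitness-level argument via Theorem~\ref{theo:fitness-tail} applied to the worst individual then shows that within $O(\mu n \log n)$ generations the entire population has $x' = 0^{m'}$ w.o.p., which is absorbed into the final $O(\mu^2 \sqrt{n})$ bound since $\mu = \omega(n^3)$. For Phase~2 I would track, for every Ridge level $\ell$, the number $N_\ell(t)$ of individuals currently sitting at $\ell$. The key lemma is that once $N_\ell \ge 1$ and there exists a strictly better individual, any such level-$\ell$ individual has inverse-tournament selection probability at least $\Omega(1/\mu)$, while the elitist $(\mu+1)$ removal evicts at most one individual per generation. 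Because $\mu = \omega(n^3)$ dwarfs the number of Ridge levels ($O(n)$) and the mutation rate, the levels that have been reached remain populated for $\Omega(\mu)$ further generations. At each branch point $0^{n-ik}1^{ik}$ this accumulates $\Omega(\mu)$ one-bit mutation attempts, and the probability that none of them flips one of the $k-1$ free prefix-zero-bits into the off-branch target region is $(1 - \Omega(k/(en)))^{\Omega(\mu)} = e^{-\Omega(\mu/\sqrt n)} = e^{-\Omega(n^{5/2})}$.

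For Phase~3, once an off-branch solution with $|y| \ge 1$ has been created at branch point $j$, the $g$-component's $|x|$-term gives a clean OneMax-like gradient of length $k$ on the prefix $y$, and the same inverse-selection fitness-level argument as in Phase~1 drives a copy of it to $y = 1^k$ within $O(\mu k \log k) = O(\mu \sqrt{n} \log n)$ generations w.o.p. Taking a union bound over all $k-1 = O(\sqrt n)$ Phase-2 branchings and the Phase-1/Phase-3 OneMax climbs yields a total failure probability of $2^{-\Omega(\sqrt n)}$. The total number of fitness evaluations is dominated by the $O(\sqrt n)$ branch-point waiting phases of length $O(\mu^2)$ each (waiting $\Omega(\mu)$ selection rounds of the right branch-point individual, each conditioned on a mutation probability of $\Theta(1/n)$), matching the claimed $O(\mu^2 \sqrt n)$.

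The main obstacle I anticipate is the persistence lemma in Phase~2: formally establishing that, given the $(\mu+1)$ elitist removal, copies of individuals at already-visited levels are not evicted faster than they contribute mutation attempts. The cleanest way I see is to track, for each level $\ell$ reached, an amortised potential counting both surviving copies and pending selection opportunities, and to show that the per-generation drift of this potential is $\Omega(1)$ while the population is not yet concentrated around the frontier. Combining this drift with the $\omega(n^3)$ slack in the population size and the $O(\sqrt n)$ bound on the number of distinct branch points visited before step~$t$ should close the argument, but making the dependencies between simultaneously tracked levels rigorous — rather than appealing to independence — is where the delicate combinatorial bookkeeping will live.
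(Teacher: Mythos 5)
Your plan has a genuine gap, and it starts with the ``key lemma'' on which both Phase~2 and Phase~3 rest. Under inverse binary tournament selection an individual $x$ is chosen only if it is sampled into the tournament \emph{and} loses it, so its selection probability is $\Theta(G(x)/\mu^2)$, where $G(x)$ is the number of population members at least as fit as $x$. The existence of a single strictly better individual therefore gives only $\Omega(1/\mu^2)$, not $\Omega(1/\mu)$; the $\Omega(1/\mu)$ rate requires that a constant fraction of the population be at least as fit as $x$. This breaks your accounting in two places: a branch point that has just been reached is among the fittest individuals and so supplies mutation attempts at rate $\Theta(1/\mu^2)$, far too slowly to accumulate $\Omega(\mu)$ attempts in the time you allot; and in Phase~3 a freshly created off-branch solution $y0^{\cdots}1^{jk}$ with $|y|\ge 1$ is \emph{strictly fitter} than every Ridge solution with at most $jk$ trailing ones, so its \onemax climb does not proceed at the worst-individual rate you borrow from Phase~1.

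The deeper omission is that you never analyse the takeover race that the paper's proof is built around, and which is where the quadratic suppression of top-ranked individuals is actually \emph{used} rather than being an obstacle. Off-branch solutions at branch point $j{+}1$ beat all Ridge solutions with fewer trailing ones, so under elitist replacement they threaten to take over the population before the ``safe'' Ridge solution $0^{m'}0^{m''-(j+1)k-1}1^{(j+1)k+1}$ is ever created; if that happens the algorithm must flip $\Theta(\sqrt{n})$ specific bits to resume, i.e.\ it is trapped, so your persistence bookkeeping cannot be deferred to ``delicate combinatorics'' --- it is the whole theorem. The paper resolves this by showing (i) the off-branch subpopulation grows only at rate $O((N/\mu)^2)$ because selecting an off-branch parent requires both tournament entrants to be off-branch, hence it stays of size $O(n)$ while the remaining $(1-o(1))\mu$ individuals concentrate on the branch point and create the safe solution; and (ii) symmetrically, once the safe solution exists the off-branch individuals become the low-fitness majority, are preferentially selected, and reach the local optimum $1^k0^{\cdots}1^{(j+1)k}$ before the safe solutions (now the fittest, again suppressed quadratically) take over in $O(\mu^2)$ generations --- which is also where the $O(\mu^2\sqrt{n})$ bound and the requirement $\mu=\omega(n^3)$ actually come from. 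Your proposal treats branch entry and branch ascent as independent ``enough attempts'' events and never compares the two competing takeover speeds, so as written it neither guarantees that every local optimum (in particular the designated global one) is visited nor derives the stated runtime.
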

\begin{proof}
 
 We will initially follow some steps from the proof of Theorem~\ref{thm-sufsampNeg} in 
\cite{JansenSufSamp}. A randomly initialised bit-string satisfies $x'\neq 0^{m'}$ and $m''/3 < 
\onemax(x'')< 2m''/3$ with probability $1-2^{-\Omega(\sqrt{n})}$. The probability that all 
individuals in the initial population satisfy the same conditions has the same asymptotic order of 
$1-2^{-\Omega(\sqrt{n})}$ due to the union bound. For any $j\in \{1,\ldots,k-2\}$, let $b_j$ to 
denote the $j$th branch point $0^{m'}0^{m''-j\cdot k}1^{j\cdot k}$.
Given that the 
initial population satisfies the 
conditions and no bit-strings with higher fitness are found before, the expected time until the 
first 
branch point $b_1:=0^{m'}0^{m''-k}1^{k}$ 
is sampled for the first time is less than the sum of the expected 
$O(\mu n \log{n})$ evaluations for finding the  $0^{n}$ bit-string and expected $O(\mu n^{3/2})$ 
fitness function evaluations for improving the function $g$, $2k=\Theta(\sqrt{n})$ times for each 
individual. Consequently, the probability that the actual time until $b_1$ is sampled is $O(\mu n^2 
\log{n})$ is $1-2^{-\Omega(\sqrt{n})}$.  We will next bound the probability of sampling a solution 
$y$ with $\sufsamp(y)>\sufsamp(b_1)$ before sampling $b_1$. Considering levels of strings with a 
given 
number of zero-bits, due to symmetry, each bit-string on a level have the same probability of being 
sampled before the $0^n$ bitstring, the first Ridge solution, is sampled. Between levels $m''/3$ and $m''-2k$, there are 
$2^{\Omega{(k)}}$ bit-strings in total and the probability of sampling a Ridge solution on these 
levels in the first $O(\mu n^2 \log{n})$ steps is at most $2^{-\Omega(\sqrt{n})}$ by the union 
bound. Therefore, with probability $1-2^{-\Omega(\sqrt{n})}$ our population will consist only of 
Ridge solutions with worse fitness than $b_1$ at some point during optimisation.

 Next, we will consider a phase where, 
for some 
fixed $j\in \{0,\ldots, k-2\}$, all initial individuals in the population are 
Ridge solutions  with more than $j\cdot 
k$ and less than $(j+1)\cdot k$ trailing ones. 
We will now bound the probability that the off-branch 
solutions take over the population before the Ridge solution $0^{m'}0^{m''-(j+1)\cdot 
k-1}1^{(j+1)\cdot 
k+1}$ 
is sampled for the first 
time. Since this Ridge solution has higher fitness value than all off-branch solutions 
with smaller number of trailing ones, once created, this solution (which we will refer as a \emph
{safe} solution) cannot be removed by adding more off-branch solutions to the population.

It is exponentially unlikely that an off-branch solution of the form 
$0^{m'}y0^{m''-(j+2)k}1^{(j+1)\cdot k}$ 
is created from a parent with less than $j\cdot k$ trailing ones. In particular, this 
event requires at least $\Omega(\sqrt{n})$ specific bits to be flipped which occurs with 
probability at most 
$n^{-\Omega{(\sqrt{n})}}$. We will condition on this event never happening and absorb this 
probability into our failure probability of $2^{-\Omega(\sqrt{n})}$ in our claim.  Thus, 
we can now categorise the newly 
created off-branch solutions into two types according to their parents: (1) whose parents are 
Ridge solutions with at least $j\cdot k$ trailing ones and (2) whose parents are also off-branch 
solutions with $(j+1)\cdot k$ trailing ones. From a Ridge solution, the conditional probability of 
creating a safe solution given that either an off-branch solution or a safe solution has been 
created is $\Theta(1/\sqrt{n})$ since only 
$\Theta(\sqrt{n})$ bits contribute to the additive $|x|$ term in the definition of~$g$.
Thus, with probability $1-2^{-\Omega(\sqrt{n})}$ we obtain a safe solution before obtaining $n/2$ 
type-1 off-branch solutions.

Now, we turn to the off-branch solutions whose parents are also off-branch solutions (type-2). Given that 
there are $k$ off-branch individuals in the population and no safe solutions, the probability of 
selecting an off-branch solution as parent is at most $(k/\mu)^2$ since the off-branch solution have higher 
fitness value than the rest of the population and inverse tournament selection returns the candidate 
with the lowest fitness.

 Pessimistically assuming that the off-branch sub-population initially has $n/2$ individuals mutated 
from Ridge solutions, we will bound the time until the total number of off-branch solution increases from $n/2$ to $n$, \emph{i.e.}, we 
create $n/2$ type-2 individuals. We will 
pessimistically assume that the probability of picking an off-branch individual is $(n/\mu)^2$, 
thus, using Chernoff bounds on geometric variables \cite{DoerrBookChapter}, it takes at least, 
$\frac{n \cdot (\mu/n)^2}{3} = \Omega(\mu^2/n)$ iterations with probability 
$1-2^{-\Omega(\mu^2/n)}$ 
before there are 
$n$ off-branch individuals in the population. On the other hand in $O(\mu n^{3/2})$ iterations, the 
whole population consists of individuals with at least as fit as the branch point $b_{j+1}$ since at most 
$O(\sqrt{n})$ Ridge points has to be traversed for each solution, where the improvement probability 
is exactly $1/n$. W.o.p. 
this process takes less than  $O(\mu n^2)\subset o( \mu^2/n)$ considering $\mu=\omega(n^3)$. Since 
we have already established that only $n$ individuals can be on the off-branch, at least a 
$1-o(1)$ fraction of the population is at the branch point $b_{j+1}$. Thus, a branch point is selected as parent with constant probability and it takes at most $O(n^{2})$ iterations before a safe solution is created with  probability at least $1-2^{-\Omega(\sqrt{n})}$.

Once the safe solution is created we apply the same line of argument in reverse to now show that 
off-branch individuals optimise the \onemax prefix before the safe solution takes over the population. 
However, this line of argumentation is relatively easier since the number of off-branch individuals 
are initially asymptotically larger than the number of safe solutions. Given that the population 
consists only of off-branch and safe solutions, in $O(\mu n^2 \log {n} )$ iterations the local 
optima $0^{m'}1^k0^{m''-(j+2)\cdot k}1^{(j+1)\cdot k}$ is found w.o.p. 
since a $1-o(1)$ fraction of the population is optimising the \onemax landscape. Increasing the number of safe solutions from $\sqrt{n}/2$ to $\sqrt{n}$ takes at least $\frac{\sqrt{n} \cdot (\mu/\sqrt{n})^2}{2}$ iterations w.o.p. 
due to Chernoff bounds on geometric variables. Since the takeover time is asymptotically larger than 
the \onemax optimisation time, we can conclude that at every branch point the local optima is 
sampled and the safe solution is added to the population with probability at least 
$1-2^{\Omega(\sqrt{n})}$. After the local optima is found, the off-branch solution can no longer 
improve and the progress is stagnated until the safe solution takes over the population in at most 
$O(\mu^2)$ iterations. Since this take over has to happen at all $\sqrt{n}/2$ branch points, the 
total 
takeover time is at most $O(\mu^2 \sqrt{n})$ with probability $1-2^{-\Omega(\sqrt{n})}$ and this 
term dominates the runtime. 

 
\end{proof}

\subsection{A Function Class with Two Slopes}
In the previous subsection we considered a function with many different slopes and only one global optimum.
We showed that inverse binary selection makes the \mupoea efficient for the function by allowing it to explore all the slopes one at a time, while all other considered evolutionary algorithms require exponential time w.o.p.
In this subsection we study a function class with two different slopes that have to be explored in parallel. This function class was originally introduced by
Witt 
to illustrate problem characteristics where large parent populations are essential \cite{WittFamily}. More precisely, he defined 
a function called~$f$ where the \ooea  w.o.p. 
reaches a local optimum which needs exponential 
time to escape from, 
while the classical \muoneea with uniform parent selection and population size $\mu\ge n/\!\ln(en)$ 
finds the global optimum in expected polynomial time. 
Overall, according to which of the two optima is the  global one, parent populations may be either essential or detrimental w.o.p. (i.e., restarts do not change the outcome). 
Our aim in this subsection is to show that  by using inverse tournament selection, the \mupoea can efficiently identify both optima with high probability either in a single run (large tournaments) or via restarts (smaller tournaments).
%
We first prove that the function with the original global optimum
can also be optimized efficiently by the \muoneea with inverse parent selection (here inverse binary tournament selection) 
and a sufficiently large population. Afterwards we will show experimentally that both optima can be identified already for not too large population and tournament sizes i.e., the algorithm is preferable to the \ooea and the \mupoea with uniform selection.

For simplicity, we do not use the original definition~$f$ 
from \cite{WittFamily} here, but a version that still allows for an exponential performance difference 
using considerably shorter proofs. We remark that the simplified version does not allow a proof of polynomial time 
in expectation  but still a polynomial time bound that occurs w.o.p. 
 \ie, 
efficient optimisation w.o.p. 
Although we think that the \muoneea with 
inverse binary tournament selection and appropriate~$\mu$ indeed has expected polynomial runtime on the original function~$f$, 
we do not find the additional proof overhead  that would be necessary for such a result appropriate.

The function we study is adapted from \cite{WittFamily} and is a weighted sum of 
two independent optimisation problems defined on disjoint subspaces of $\{0,1\}^n$. On the last  
$\ell\coloneqq n^{1/3}$ bits, the well-known \textsc{LeadingOnes} problem has to be solved, more precisely 
for $x=(x_1,\dots,x_n)$ 
we define $\LSO(x)\coloneqq \sum_{i=1}^{\ell} \prod_{j=1}^{i} x_{n-\ell+j}$ as the number of 
\emph{leading suffix ones} value. On the first $m\coloneqq n-\ell$ bits, a \onemax problem has be be solved, hence 
we define $\PO(x)\coloneqq \sum_{i=1}^{m} x_i$ as the number of \emph{prefix ones}. The function 
is then defined by
\[
\sufpar(x) 
\coloneqq \begin{cases}
n^2 \LSO(x) + \PO(x) & \text{if $\PO(x)\le 2m/3$}\\
n^2 \ell - m - 1 + \PO(x) & \text{otherwise}.
\end{cases}
\]

The function $\sufpar$ has a global optimum at $\LSO(x)=\ell$ and $\PO(x)=2m/3$. To reach this, it is  typically required 
that the number of leading suffix ones is maximized before the number of prefix ones grows beyond $2m/3$. Any search point 
where $\PO(x)> 2m/3$ is better than any search point with non-optimal $\LSO$-value. Hence, if number of prefix 
ones is increased faster than the leading suffix ones and a PO-value of $2m/3$ is exceeded before optimising the suffix 
then the function leads to a local optimum at $\PO(x)=m$. This local optimum is a trap since at least 
$m/3$ prefix ones have to be flipped to reach the global optimum from there.

We now show (similarly to the original result from \cite{WittFamily}) that the \ooea w.o.p. 
fails to optimize $\sufpar$ efficiently.

\begin{theorem}
With probability at least $1-2^{-\Omega(n^{1/3})}$, the \ooea needs time $2^{\Omega(n\log n)}$ to 
optimize $\sufpar$.
\end{theorem}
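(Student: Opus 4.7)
The plan is to show the optimisation is forced into a trap via a four-phase argument: (P1) initialisation, (P2) a race in which $\PO$ reaches $2m/3$ before $\LSO$ reaches $\ell$, (P3) the fitness formula changes character and drives $\PO$ to $m$ while leaving $\LSO<\ell$, and (P4) escape from the resulting local optimum requires a single mutation of Hamming distance at least $m/3$, which has per-step probability $2^{-\Omega(n\log n)}$. Each of the high-probability events will be established with failure probability at most $2^{-\Omega(n^{1/3})}$, and a union bound then yields the claim.

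For P1, Chernoff bounds give $|\PO(x_0)-m/2|=O(\sqrt{m\log n})$ and $\LSO(x_0)\le \log^2 n$ \wop, both comfortably below the critical levels $2m/3$ and $\ell$. For P2, while $\PO\le 2m/3$ the fitness equals $n^2\LSO+\PO$, so any mutation flipping a single prefix $0$-bit to $1$ and leaving the remaining bits unchanged is strictly improving and hence accepted. This yields a standard \om-style drift: the expected waiting time from $\PO=k$ to $\PO=k+1$ is at most $en/(m-k)$, summing to $O(n\log n)$, and concentration for sums of geometric variables gives the time bound with high probability. Meanwhile, $\LSO$ is non-decreasing in accepted mutations (a decrease costs at least $n^2$ in fitness, uncompensable by $\PO$) and can only increase when the bit at position $n-\ell+\LSO+1$ flips, an event of per-step probability $1/n$. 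Summing $\Omega(\ell)$ independent geometric waiting times with mean $\ge n$ and applying a lower-tail bound yields an $\Omega(n\ell)=\Omega(n^{4/3})$ lower bound on the $\LSO=\ell$ hitting time with probability $1-e^{-\Omega(\ell)}=1-2^{-\Omega(n^{1/3})}$. Since $n\log n=o(n^{4/3})$, $\LSO<\ell$ still holds when $\PO$ first exceeds $2m/3$. For P3, once $\PO>2m/3$ the fitness becomes $n^2\ell-m-1+\PO$; the boundary-crossing mutation is accepted because the fitness gap $n^2(\ell-\LSO)-m\ge n^2-m>0$, and the resulting pure \om landscape on the prefix drives $\PO$ to $m$ in $O(n\log n)$ further steps \wop

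For P4, the fitness at the local optimum is $n^2\ell-1$. I would enumerate the accepted improving mutations: those staying in the region $\PO'>2m/3$ require $\PO'\ge m$, which forces $\PO'=m$ and yields no strict improvement, while those landing in $\PO'\le 2m/3$ have fitness $n^2\LSO'+\PO'$, which strictly exceeds $n^2\ell-1$ only if $\LSO'=\ell$ (otherwise $\LSO'\le\ell-1$ gives fitness at most $n^2(\ell-1)+2m/3<n^2\ell-1$ for large $n$) \emph{and} $\PO'\le 2m/3$. The latter condition forces at least $m/3$ of the $m$ prefix $1$-bits to flip in a single mutation, an event of probability at most $\binom{m}{\lceil m/3\rceil}(1/n)^{\lceil m/3\rceil}\le (3e/n)^{m/3}=2^{-\Omega(n\log n)}$. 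A union bound over $T=2^{cn\log n}$ steps, for a sufficiently small constant $c>0$, bounds the escape probability by $T\cdot 2^{-\Omega(n\log n)}=2^{-\Omega(n\log n)}$, easily absorbed into the overall $2^{-\Omega(n^{1/3})}$ failure budget.

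The main obstacle is the \wop lower bound on the $\LSO$ hitting time in P2, since a single accepted step can in principle increase $\LSO$ by more than one when flipping the $(\LSO+1)$-th suffix bit exposes pre-existing $1$-bits further to the right. I would address this by observing that for $\LSO$ ever to exceed level $j$, the bit at position $n-\ell+j$ must either be $1$ at initialisation or be flipped at least once during the run; summing the $\Omega(\ell)$ independent geometric waiting times for the initially-$0$ positions and applying a Chernoff-type lower tail bound on their sum gives the required $\Omega(n\ell)$ lower bound with failure probability $e^{-\Omega(\ell)}=2^{-\Omega(n^{1/3})}$. Alternatively, one may invoke the standard high-probability $\Omega(n\ell)$ lower bound for the \ooea on classical \lo and adapt it to the embedding in our larger bitstring.
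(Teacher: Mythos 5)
Your overall architecture (a race in which the prefix-\om part outruns the suffix-\lo part, followed by a trap requiring an $\Omega(n)$-bit mutation) is the same as the paper's, and your endgame P4 is essentially the paper's argument with the threshold $m$ in place of the paper's $3m/4$. The genuine gap is in the key lemma of P2, the high-probability lower bound on the time for $\LSO$ to reach $\ell$. Your primary derivation takes as necessary condition that every initially-zero suffix position be flipped at least once, and then \emph{sums} the corresponding geometric waiting times to get $\Omega(n\ell)$. But those waiting times elapse in parallel, not in sequence: a suffix bit lying beyond the current block of leading ones can be set to one at any time by an accepted fitness-neutral mutation (exactly the free-rider phenomenon), so the hitting time of your necessary condition is the \emph{maximum} of the $\Theta(\ell)$ first-flip times, which concentrates around $n\ln\ell=\Theta(n\log n)$, not $\Omega(n\ell)$. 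Since you upper-bound the $\PO$-race by $O(n\log n)$ (itself loose: reaching $3m/4$ only needs $O(n)$ steps because a constant fraction of prefix bits remain zero throughout), the two sides of your race are then of the same order, and indeed the maximum of $\Theta(n^{1/3})$ geometrics with mean $n$ falls below $cn\ln n$ with constant probability already for $c\ge 1/3$ — so the separation fails with the probability the theorem requires.

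Two repairs work. Either (i) use the correct $O(n)$ bound for the $\PO$-race (a phase of $3em$ steps suffices, as in the paper), after which the max-based necessary condition gives failure probability $(1-e^{-3e}+o(1))^{\Omega(\ell)}=2^{-\Omega(n^{1/3})}$; or (ii) prove the $\Omega(n\ell)$ bound via the sequential-improvements argument you mention only as an ``alternative'': each $\LSO$-improvement requires the \emph{current} leftmost suffix zero to flip (probability $1/n$ per step, and these waits genuinely occur one after another), and $\Omega(\ell)$ improvements are needed \wop since each gains only $1+\Geometric(1/2)$ levels from free-riders. The paper follows a variant of (i): it fixes a phase of length $3em$, shows at least $m/2$ $\PO$-increasing and at most $\ell/4$ $\LSO$-increasing steps occur, and bounds the total free-rider gain by $\ell/3$; this bookkeeping also covers two points you gloss over, namely that $\PO$ can decrease in $\LSO$-increasing steps, and that $\LSO$ must stay below $\ell$ until $\PO$ is bounded \emph{away} from $2m/3$ (once $\PO>2m/3$ the fitness no longer depends on $\LSO$, so if $\LSO$ hit $\ell$ while $\PO=2m/3+O(1)$ a single-bit flip would reach the optimum). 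Finally, your initialisation bound $\LSO(x_0)\le\log^2 n$ holds only with probability $1-2^{-\log^2 n}$, weaker than the theorem's $1-2^{-\Omega(n^{1/3})}$; take the threshold $\ell/4$ instead.
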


\begin{proof}
The main idea is to show that w.o.p.
the \ooea reaches a so-called bad search point 
where $\PO(x)\ge 3m/4$ and $\LSO(x)<\ell$. From all bad search points, it is necessary to flip at least 
$m/12=\Omega(n)$ bits simultaneously to reach the global optimum. The probability of this is at most 
$1/(m/12)!=2^{-\Omega(n\log n)}$ and according to a union bound, 
with probability $1-2^{-\Omega(n\log n)}$ this does not happen in $2^{cn\log n}$ iterations if $c$ is chosen 
as a sufficiently small constant.

To show that a bad search point is reached w.o.p., 
we study the growth of 
the $\PO$-value carefully. First of all, by Chernoff bounds with probability $1-2^{-\Omega(n)}$, it
holds $\PO(x_0)\ge m/3$ for the initial search point~$x_0$. Moreover, 
$\prob{\LSO(x_0)\ge \ell/4} = 2^{-\ell/4}$ since this event requires to initialize the first $\ell/4$ suffix to all ones. 

 We assume $\PO(x_0)\ge m/3$ and $\LSO(x)\le \ell/4$ to happen, which altogether happens 
with probability $1-2^{-\Omega(n)} - 2^{-\Omega(\ell)} = 1-2^{-\Omega(n^{1/3})}$. Next, we note 
from the definition of $\sufpar$ that the $\PO$-value can only decrease in steps increasing the 
$\LSO$-value. Finally, we note that the number of leading suffix ones can increase by 
more than one when the leftmost zero in the suffix flips. More precisely, it increases by the number 
of so-called free-riders, the bits being random after the leftmost suffix zero. It is well known 
that the number of free-riders is stochastically bounded by a geometric distribution with parameter~$1/2$. 

We claim that it is 
sufficient to observe the following set of four events in a phase of length~$s\coloneqq 3em$ to create a bad search point, 
pessimistically assuming that no bad search is created before:
\begin{itemize}
\item There are at least $m/2$ $\PO$-increasing steps, 
\item there are at most $\ell/4$ $\LSO$-increasing steps, 
\item the total number of prefix bits flipped in at most $\ell/4$ $\LSO$-increasing steps is 
at most $\ell/2$, 
\item the total number of free-riders in at most $\ell/4$ $\LSO$-increasing steps is at most $\ell/3$.
\end{itemize}
If all these events together happen, the number of prefix ones increases  to 
at least $m/3 + m/2 - \ell/2 \ge 3m/4$ and the number of leading suffix ones to at most $\ell/4 + 
\ell/4 + \ell/3 < \ell$, which  corresponds to a bad search point. Hence, we only have to verify that 
each of the four events happens with probability at least $1-2^{-\Omega(n^{1/3})}$, from which the theorem 
follows via a union bound.

The first event happens with probability $1-2^{-\Omega(m)}$ according to Chernoff bounds since each of the $s=3em$ steps flips 
one of the least $m/4$ zero-bits with probability at least $(m/4)(1/n)(1-1/n)^{n-1} \ge 1/(5e)$ for $n$ large enough. The second 
event occurs with probability $1-2^{-\Omega(\ell)}$ by Chernoff bounds 
 since the probability of an $\LSO$-increasing step is $1/n$, resulting 
in an expected number of at most $3e$ such steps. The third event is also shown to occur with 
probability $1-2^{-\Omega(\ell)}$ via Chernoff bounds since each of the $m$ prefix bits is flipped independently 
with probability $1/n$. Finally, using Chernoff bounds for the geometric distribution, also the probability 
of the fourth event is bounded by $1-2^{-\Omega(\ell)}$.
\end{proof}

We now present the positive result, showing that the \muoneea both with uniform and inverse binary tournament 
selection optimize $\sufpar$ efficiently for sufficiently large~$\mu$.

\begin{theorem}
Let $\mu=\sqrt{n}$. Then the \muoneea both with uniform and with inverse binary tournament parent selection 
optimize $\sufpar$ in time $O(n^{2.15})$ with probability $1-2^{-\Omega(n^{0.15})}$.
\end{theorem}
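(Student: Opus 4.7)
The plan is to decompose the runtime into two phases. In the first phase, some individual reaches $\LSO=\ell$ before any individual's $\PO$ crosses the trap threshold $2m/3$. In the second phase the algorithm, now dominated by an $\LSO=\ell$ individual, reaches the global optimum at $\PO=2m/3$. The analysis applies with small modifications to both uniform parent selection and inverse binary tournament. The main obstacle lies in the first phase: keeping every individual's $\PO$ below $2m/3$ while $\LSO$ is being optimised to $\ell$, especially for uniform selection, where individuals are selected with probability $1/\mu$ independently of fitness.

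For initialisation, Chernoff bounds applied to the initial binomials yield, with probability $1-2^{-\Omega(n^{1/3})}$, that every initial individual satisfies $\PO\le m/2+O(\sqrt{n\log n})$ and $\LSO<\ell/4$. For the first phase I would apply Theorem~\ref{theo:fitness-tail} with the fitness levels $A_i=\{P : \max_{x\in P}\LSO(x)=i\}$. The transition probability $p_i=\Omega(1/n)$ holds per iteration once the max-$\LSO$ level $i$ has been taken over: any selected individual then has $\LSO\ge i$, and flipping the $(i{+}1)$-th suffix bit alone occurs with probability $(1/n)(1-1/n)^{i}=\Omega(1/n)$. A freshly-created max-$\LSO$ individual is taken over within $O(\mu\log\mu)$ iterations under uniform selection, and within $O(\mu^{2})=O(n)$ iterations under inverse binary tournament (both tournament picks must fall among the $k$ current max-$\LSO$ individuals, giving a growth rate $k^{2}/\mu^{2}$, and $\sum_{k=1}^{\mu}\mu^{2}/k^{2}=O(\mu^{2})$). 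Summing over the $\ell$ levels with takeover absorbed into the slack $\delta$ of the tail theorem gives first-phase completion within $T_{1}=O(n^{4/3+\epsilon})$ iterations, for any small constant $\epsilon>0$, with failure probability $2^{-\Omega(n^{1/3})}$.

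The hard part will be to show that no individual's $\PO$ crosses $2m/3$ before $\LSO=\ell$ is reached. I would choose $\epsilon<1/6$ so that $T_{1}=o(\mu n)=o(n^{3/2})$. Each individual is selected at most $2T_{1}/\mu+O(\sqrt{(T_{1}/\mu)\log n})=O(n^{5/6+\epsilon})$ times during the first phase w.h.p., by Chernoff on the number of selections (each individual is picked with probability $1/\mu$ under uniform selection and at most $2/\mu$ under inverse binary tournament, the maximum being attained by the worst individual). A further Chernoff bound on the number of prefix $0$-to-$1$ flips in those selections shows that each individual's $\PO$ grows by $O(n^{5/6+\epsilon})\ll m/6=\Omega(n)$ with probability $1-2^{-\Omega(n^{5/6+\epsilon})}$. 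A union bound over the $\mu$ individuals preserves this probability, so no $\PO$-crossing occurs before $\LSO=\ell$ is reached.

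For the second phase, once some individual has $\LSO=\ell$ and $\PO\le 2m/3$, its fitness is at least $n^{2}\ell$, strictly exceeding $n^{2}(\ell-1)+m$, the maximum fitness of any individual with $\LSO<\ell$. This individual and its $\LSO=\ell$ descendants therefore dominate the population and take it over within $O(\mu^{2})$ iterations w.h.p. After takeover, any mutation to $\PO>2m/3$ would reduce the fitness by more than $m$ and is rejected, so the algorithm effectively reduces to a \muoneea on \onemax restricted to $\PO\le 2m/3$. Applying Theorem~\ref{theo:fitness-tail} with $p_{i}\ge(m-i)/(e\mu n)$ one gets $\sum 1/p_{i}=O(\mu n)=O(n^{3/2})$, $\sum 1/p_{i}^{2}=O(\mu^{2}n^{2}/m)=O(n^{2})$ and $h=\Omega(1/\mu)=\Omega(n^{-1/2})$; choosing $\delta=\Theta(n^{2.15})$ yields failure probability $e^{-\Omega(\delta h)}=e^{-\Omega(n^{1.65})}$, comfortably stronger than $2^{-\Omega(n^{0.15})}$. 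A final union bound over the failure events of the two phases completes the proof.
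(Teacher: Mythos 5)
Your decomposition (drive $\LSO$ to $\ell$ before any $\PO$-value crosses $2m/3$, then optimise the prefix) is exactly the paper's, and you correctly identify the crux as bounding the prefix growth during the first phase. However, your argument for that crux does not go through as written. You bound the number of times \emph{each individual} is selected by roughly $2T_1/\mu$ and then claim ``each individual's $\PO$ grows by $O(n^{5/6+\epsilon})$'', finishing with a union bound over the $\mu$ individuals. Selecting an individual does not change its $\PO$; it creates an offspring, which may itself be selected later, and so on. The quantity you actually need to control is the cumulative $\PO$-increase along a \emph{lineage}, i.e.\ along a root-to-leaf path in the family tree, and the number of such paths is far larger than $\mu$, so the union bound over individuals is insufficient. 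The paper resolves this with the family tree technique: the tree depth after $s$ steps is $O(s/\mu)$ \wop (with the factor $2/\mu$ per step to account for the two tournament picks, as you note), and a union bound is taken over \emph{all} paths of length at most $d^*$, costing a factor $\sum_k \binom{s}{k}(2/\mu)^k = 2^{O(d^*)}$ which is absorbed by the $2^{-\Omega(n)}$ Chernoff bound on the prefix gain along a fixed path. Your numerical bound $T_1/\mu = O(n^{5/6+\epsilon}) = o(n)$ coincides with the correct tree-depth bound, so the conclusion is right, but the derivation needs to be replaced by (or reduced to) the family-tree argument.

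There is a second, smaller gap in the first phase: Theorem~\ref{theo:fitness-tail} requires the escape probability $p_i$ to hold in \emph{every} step on level~$i$, so you cannot use $p_i=\Omega(1/n)$ ``with takeover absorbed into the slack $\delta$''. Immediately after a new maximal $\LSO$-value appears there is a single top individual, and under inverse binary tournament it is the \emph{least} likely to be chosen (probability $1/\mu^2$), so the worst-case per-step improvement probability on a level is only $\Omega(1/(\mu^2 n))$; plugging that into the fitness-level theorem naively gives $\sum_i 1/p_i = O(\ell\mu^2 n)=O(n^{7/3})$, exceeding the claimed $O(n^{2.15})$. The paper instead bounds the \emph{expected} time per level as takeover time plus improvement time ($9\mu^2+n=O(n)$ for $\mu=\sqrt n$) and amplifies via Markov's inequality over $n^{0.15}$ subphases to get the $1-2^{-\Omega(n^{0.15})}$ guarantee; alternatively one can refine the levels by the number of top individuals. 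Either fix works, but some such device is needed. Your second phase is essentially fine (the domination and rejection arguments are correct), though the stated $p_i\ge (m-i)/(e\mu n)$ again only covers uniform selection; the pessimistic inverse-tournament bound $(m-i)/(e\mu^2 n)$ still yields $O(n^2\log n)$ total and a failure probability of $e^{-\Omega(n^{0.15})}$, matching the theorem.
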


\begin{proof}
The proof strategy is again inspired by the corresponding result for the function~$f$ in \cite{WittFamily}. We 
consider a phase of length $s\coloneqq 22\ell n^{1.15}$ and show that w.o.p. 
the number of leading suffix ones rises to its maximum~$\ell$ before 
an individual with at least $2m/3$ prefix ones is created. From there the global optimum is reachable within $O(n^{2.15})$ 
generations w.o.p.

In more detail, we first consider the {\color{black}growth} of the potential~$L$, defined as the maximum number of leading suffix ones 
within the population. During this analysis, we assume that no individual with more than $2m/3$ prefix ones is created within the phase. 
As we will prove below, 
this assumption is valid for $s$ generations with probability $1-2^{-\Omega(n^{0.98{\color{black}\overline{3}}})}$. Given a current potential~$L$, we call an individual 
top if it has $L$ \emph{or more} leading suffix ones. We consider the grow of the number of top individuals, noting that this number is non-decreasing 
since only individuals of lowest fitness are removed and the factor $n^2$ in the definition of $\sufpar$  gives higher fitness to 
individuals with more leading suffix ones regardless of the prefix ones. Let $T$ be the number of top individuals. The probability 
of increasing this number by choosing and cloning such an individual is at least 
$\frac{T}{\mu}(1-1/n)^n \ge \frac{T}{2e\mu}$ for uniform parent selection and 
at least $\frac{T^2}{\mu^2}(1-1/n)^n \ge \frac{T^2}{2e\mu^2}$ for inverse binary tournament selection since it suffices to choose two top individuals 
for the tournament. Hence, by standard argument for takeover times as in \cite{WittFamily}, the expected time 
to reach $\mu$ top individuals is at most
\[
2e\mu\sum_{T=1}^\mu 1/T \le 2e\mu(\ln \mu+1)
\]
for uniform
and 
\[
2e\mu^2\sum_{T=1}^\mu 1/T^2 \le 2e\mu^2 \frac{\pi^2}{6} \le 9\mu^2
\]
for inverse binary tournament selection. We assume that $n$, and thereby~$\mu$, is large enough so that $2e\mu(\ln\mu+1) \le 9\mu^2$ holds. Given $\mu$ top 
individuals and pessimistically assuming the $L$-value has not increased in the meantime, the probability of increasing the $L$-value 
is at least $1/n$ since only the leftmost suffix zero has to flip. Hence, altogether, the expected time to increase the 
$L$-value is at most $9\mu^2+n=10n$ by our choice 
of~$\mu$ and the 
expected time to reach $\ell$ suffix ones in all individuals
 is at most $10\ell n+9\mu^2 \le 11\ell n$ (still assuming no individual with at least $2m/3$ prefix ones). By applying Markov's inequality 
and repeating this argumentation for $n^{0.15}$ subphases of length $20\ell n$, we obtain 
that the probability of reaching $\ell$ suffix ones for all 
individuals within 
$22\ell n^{1.15} = O(n^{1.484})$ generations is at least  $1-2^{-\Omega(n^{0.15})}$. From this point on the algorithm proceeds as if 
it was optimising \onemax on the $m$ prefix bits only. Adjusting the preceding analysis (using $m$ instead of $\ell$ improvements, each pessimistically 
happening with probability at least $1/(2en)$), 
the remaining time to reach the global 
optimum is easily bounded by $O(m n^{1.15})=O(n^{2.15})$ with probability $1-2^{-\Omega(n^{0.15})}$.

It remains to prove that within $s$ generations no individual with at least $2m/3$ prefix ones is created w.o.p.
For this purpose, 
we use a straightforward generalization of the family tree argument from \cite{WittFamily}, pessimistically assuming that each vertex in the tree at time~$t$ twice 
has the chance of~$1/\mu$ to receive a new child. This caters for the inverse 
binary tournament selection and increases the bound on the depth from Lemma~2 in \cite{WittFamily} 
only by a factor of~$2$; hence the probability that the depth $D(t)$ at time~$t$ is $6t/\mu$ or bigger is $2^{-\Omega(t/\mu)}$. We use this statement 
with $t\coloneqq s$, implying that the probability of the depth being at least $d^*\coloneqq 22\ell n^{1.15}/\mu = 22n^{0.98\overline{3}}$ is  $2^{-\Omega(n^{0.98{\color{black}\overline{3}}})}$. Hence, 
in order to reach $2m/3$ prefix ones within $s$ steps, either the tree must have depth greater than $d^*$, or it must contain a so-called bad path of length 
at most~$d^*$ along which the number of prefix ones is increased by at most $m/12$. Here we assume 
that all initial individuals have at most $7m/12$ prefix ones, which 
happens with probability $1-2^{-\Omega(n)}$ by Chernoff bounds and a union bound over $\sqrt{n}$ individuals.

 The probability 
of increasing the number of ones by at least $m/12 = \Omega(n)$ 
on a fixed path of length $d^*=o(n)$ in the family tree is $2^{-\Omega(n)}$ by Chernoff bounds since such a path 
is an independent sequence of mutations without selection. Only if this increase happens, the path is bad. 
By considering all possible paths in a union bound, 
the probability of a bad path emerging in~$s$ generations is at most 
\begin{align*}
\sum_{k=1}^{d^*} \binom{s}{k} \left(\frac{2}{\mu}\right)^k 2^{-\Omega(n)} & = \sum_{k=1}^{d^*} \binom{d^*\mu}{k} \left(\frac{2}{\mu}\right)^k 
2^{-\Omega(n)} \\ 
& \le \sum_{k=1}^{d^*} \left(\frac{ed^*\mu}{k}\right)^k \left(\frac{2}{\mu}\right)^k 2^{-\Omega(n)} = \sum_{k=1}^{d^*} \left(\frac{2ed^*}{k}\right)^k 2^{-\Omega(n)}  \\
& \le d^* \left(\frac{2ed^*}{d^*}\right)^{d^*} 2^{-\Omega(n)}  \le 2^{O(d^*)} 2^{-\Omega(n)} = 2^{-\Omega(n)},
\end{align*}
where the first inequality used the well-known estimate $\binom{n}{k}\le (en/k)^k$, the second one 
that the terms of the sum are monotone increasing within the considered range of~$k$ and the final 
estimation that $d^*=o(n)$. 
Summing up all failure probabilities in a union bound completes the proof.
\end{proof}


\newcommand{\maxsat}{\textsc{MaxSat}\xspace}
\newcommand{\mknap}{\textsc{MKP}\xspace}

We now present some experimental results to provide evidence that the inverse selection operators allow to identify both optima with reasonably high probability,
while uniform selection can only identify the local optimum for small population sizes (including the \ooea) and only the global optimum for large enough population sizes.

To illustrate the behaviour of the considered selection mechanisms, we have
performed 100 independent runs of each mechanism and population size on $\sufpar$
with $n=1000$ (and hence $\ell=10$).

We define four possible outcomes for the runs: converging on the global optimum
(OPT) without having constructed the local optimum (LOC), converging on the
local optimum without having constructed the global optimum, constructing both
optima before converging, or exceeding the computational budget\footnote{For
practical reasons, we consider the global optimum to be unreachable by mutation
of the local optimum, and sample waiting times for inverse tournament selection
to select a viable ancestor when the majority of the population is composed of
individuals on the local optimum. A timeout is reported when these waiting
times cause the total optimisation time to exceed $2^{63}-1$, or when the
inverse elitist mechanism has constructed the local optimum while
individuals with higher fitness on the branch leading to the global optimum
still exist in the population (as these can neither be removed nor chosen for
reproduction).}
without converging (a possibility for the Inverse Elitist and Inverse
Tournament selection mechanisms).

Figure~\ref{fig:sufpar-n1000a} shows the proportion of the runs achieving
specific outcomes. As expected, the (1+1)~EA fails to find the global optimum
in all 100 runs, while increasing the population size allows both uniform and
inverse tournament mechanisms to find the global optima. Additionally, larger
inverse tournament sizes enable the algorithm to construct both optima.
For large enough population and tournament sizes (see eg., $\mu=2n$ and $s=10$) both optima are identified in each run.
For smaller population sizes either of the optima can be identified with reasonable probability (see eg., $\mu=50$ and $s=6$), hence a restart strategy is effective.

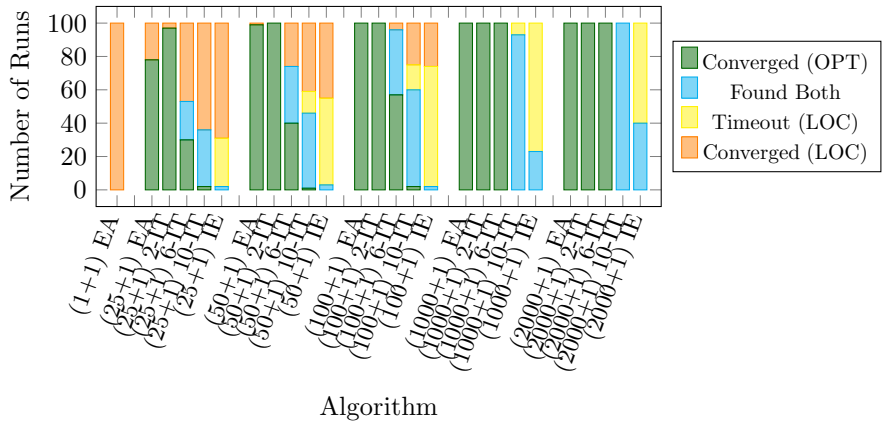
\begin{figure}[t]
\centering
\begin{tikzpicture}
\def\dataTSV{\plotdatapath{sufpar-n1000a.tsv}}
\pgfplotstableread[col sep=tab]{\dataTSV}{\datatable}
\begin{axis}[
width=0.75\linewidth,
height=0.35\linewidth,
legend style={nodes={scale=0.8, transform shape}, at={(1.02,0.5)}, anchor=west},
legend columns=1,
xlabel={Algorithm},
ylabel={Number of Runs},
ybar stacked=plus,
try min ticks=5,
mark size=1.5pt,mark options={solid},mark repeat=100,
xtick=data,
xticklabels from table={\datatable}{[index]0},
x tick label style={rotate=70,anchor=east, font=\footnotesize},
bar width=5pt,
every axis plot/.append style={fill,fill opacity=0.5},
enlarge x limits=0.04,
every x tick label/.append style={font=\small},
]
\addplot[green!40!black] table[x expr=\coordindex,y index=1] {\datatable};
\addplot[cyan] table[x expr=\coordindex,y index=3] {\datatable};
\addplot[yellow] table[x expr=\coordindex,y index=4] {\datatable};
\addplot[orange] table[x expr=\coordindex,y index=5] {\datatable};

\legend{Converged (OPT), Found Both, Timeout (LOC), Converged (LOC)}
\end{axis}
\end{tikzpicture}
	\caption{Outcomes of 100 runs of algorithms using various selection operators and population sizes on $\sufpar$ with $n=1000$ (and $\ell = 10$).}
	\label{fig:sufpar-n1000a}
\end{figure}

\begin{figure}[t]
\centering
\begin{tikzpicture}
\def\dataTSV{\plotdatapath{sufpar-n1000t-both-u.tsv}}
\pgfplotstableread[col sep=tab]{\dataTSV}{\databoth}
\def\dataTSV{\plotdatapath{sufpar-n1000t-best-u.tsv}}
\pgfplotstableread[col sep=tab]{\dataTSV}{\databest}
\def\dataTSV{\plotdatapath{sufpar-n1000t-local-u.tsv}}
\pgfplotstableread[col sep=tab]{\dataTSV}{\dataloc}


\begin{axis}[
width=0.75\linewidth,
height=0.40\linewidth,
legend style={nodes={scale=0.8, transform shape}, at={(1.02,0.5)}, anchor=west},
legend columns=1,
xmode=log,
xlabel={Time (Iterations)},
ylabel={Proportion of Runs},
enlarge x limits=0,
gopt/.style={dashed, forget plot},
lopt/.style={dotted,thick,forget plot},
]

\addplot[red]      table[x index=0,y index=1] {\databoth};
\addplot[red,gopt] table[x index=0,y index=1] {\databest};
\addplot[red,lopt] table[x index=0,y index=1] {\dataloc};

\addplot[orange]      table[x index=0,y index=2] {\databoth};
\addplot[orange,gopt] table[x index=0,y index=2] {\databest};
\addplot[orange,lopt] table[x index=0,y index=2] {\dataloc};

\addplot[green!40!black]      table[x index=0,y index=4] {\databoth};
\addplot[green!40!black,gopt] table[x index=0,y index=4] {\databest};
\addplot[green!40!black,lopt] table[x index=0,y index=4] {\dataloc};

\addplot[cyan]      table[x index=0,y index=9] {\databoth};
\addplot[cyan,gopt] table[x index=0,y index=9] {\databest};
\addplot[cyan,lopt] table[x index=0,y index=9] {\dataloc};

\addplot[blue]      table[x index=0,y index=10] {\databoth};
\addplot[blue,gopt] table[x index=0,y index=10] {\databest};
\addplot[blue,lopt] table[x index=0,y index=10] {\dataloc};

\addplot[purple]      table[x index=0,y index=12] {\databoth};
\addplot[purple,gopt] table[x index=0,y index=12] {\databest};
\addplot[purple,lopt] table[x index=0,y index=12] {\dataloc};

\legend{(100+1) EA, (100+1) 2-IT, (100+1) 10-IT, (2000+1) EA, (2000+1) 2-IT, (2000+1) 10-IT}
\end{axis}
\end{tikzpicture}
\caption{Proportion of 1000 runs which find both the local and global optima
(solid), at least the global optimum (dashed), or at least the local optimum
(dotted) within a certain number of iterations on $\sufpar$ with $n=1000$ (and
$\ell = 10$) for various selection operators and population sizes.}
\label{fig:sufpar-n1000b}
\end{figure}

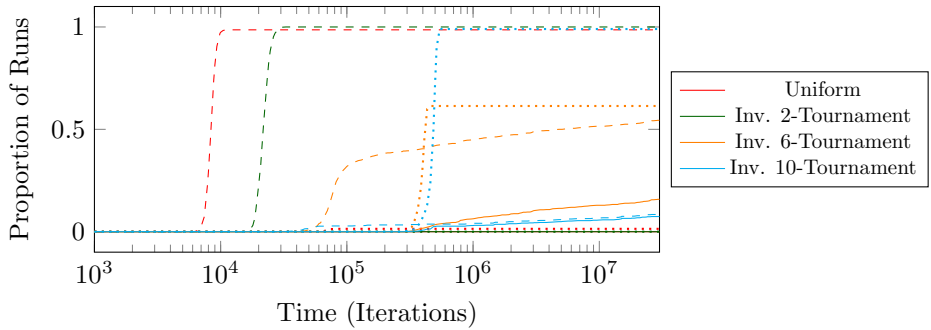
\begin{figure}[t]
\centering
\begin{tikzpicture}
\def\dataTSV{\plotdatapath{sufpar-n1000t-both-u.tsv}}
\pgfplotstableread[col sep=tab]{\dataTSV}{\databoth}
\def\dataTSV{\plotdatapath{sufpar-n1000t-best-u.tsv}}
\pgfplotstableread[col sep=tab]{\dataTSV}{\databest}
\def\dataTSV{\plotdatapath{sufpar-n1000t-local-u.tsv}}
\pgfplotstableread[col sep=tab]{\dataTSV}{\dataloc}


\begin{axis}[
width=0.75\linewidth,
height=0.40\linewidth,
legend style={nodes={scale=0.8, transform shape}, at={(1.02,0.5)}, anchor=west},
legend columns=1,
xmode=log,
xlabel={Time (Iterations)},
ylabel={Proportion of Runs},
enlarge x limits=0,
gopt/.style={dashed, forget plot},
lopt/.style={dotted,thick,forget plot},
]
\addplot[red]      table[x index=0,y index=5] {\databoth};
\addplot[red,gopt] table[x index=0,y index=5] {\databest};
\addplot[red,lopt] table[x index=0,y index=5] {\dataloc};

\addplot[green!40!black]      table[x index=0,y index=6] {\databoth};
\addplot[green!40!black,gopt] table[x index=0,y index=6] {\databest};
\addplot[green!40!black,lopt] table[x index=0,y index=6] {\dataloc};

\addplot[orange]      table[x index=0,y index=7] {\databoth};
\addplot[orange,gopt] table[x index=0,y index=7] {\databest};
\addplot[orange,lopt] table[x index=0,y index=7] {\dataloc};

\addplot[cyan]      table[x index=0,y index=8] {\databoth};
\addplot[cyan,gopt] table[x index=0,y index=8] {\databest};
\addplot[cyan,lopt] table[x index=0,y index=8] {\dataloc};

\legend{Uniform, Inv. 2-Tournament, Inv. 6-Tournament, Inv. 10-Tournament}
\end{axis}
\end{tikzpicture}
\caption{Proportion of 1000 runs which find both the local and global optima
(solid), at least the global optimum (dashed), or at least the local optimum
(dotted) within a certain number of iterations on $\sufpar$ with $n=1000$ (and
$\ell = 10$) for various selection operators with $\mu=50$.}
\label{fig:sufpar-n1000b-mu50}
\end{figure}

Figures \ref{fig:sufpar-n1000b} and \ref{fig:sufpar-n1000b-mu50} give an
estimate of the probability that a particular type of optimum (or both optima)
have been constructed after a certain number of fitness iterations. With
sufficiently large population and tournament sizes, the inverse tournament
selection mechanism is shown to be able to find both the local and global
optima with reasonable probability. This indicates that it could beneficially
be combined with a restart strategy to reliably find both optima, while our
theoretical results show that the (1+1) EA would require an exponential number
of restarts to reliably find the global optimum. Additionally, our experimental
results show that the $(\mu+1)$ EA does not 
find the local optimum
when $\mu$ is large. 
Hence, if the fitness values of the
optima were exchanged, the $(\mu+1)$~EA would not have found the global optimum in any of the runs.

\section{
NP-Complete Problems} \label{sec:npcomplete}
In the previous section we have shown good explorative performance of the \mupoea with lower selective pressure than uniform on various multimodal  pseudo-Boolean benchmark functions with different characteristics.
In this section we verify the global exploration effectiveness of lowering the selective pressure 
for two
classical NP-Complete
combinatorial optimisation problems, \maxsat and the multidimensional knapsack
problem (\mknap).

\subsection{MaxSat}
In the \maxsat problem, a set of clauses over $n$ Boolean variables is
given, and the objective is to find a truth-value assignment which maximises
the number of satisfied clauses. 

To evaluate the performance of Inverse Elitist and Inverse Tournament selection
mechanisms, we use the uniform Random-3-SAT instances from SATLIB, which ``tend
to be particularly hard for both systematic SAT solvers and stochastic local
search algorithms'' \cite{Hoos2000satlib}.
Solutions are represented as an $n$-bit string, and interpreted as truth-value
assignments by setting $x_i$ to true if and only if bit $i$ is set to 1. The
fitness value of a solution is simply the number of clauses it satisfies.

In particular, we focus on the instances in the \texttt{uf100} and
\texttt{uf250} classes (composed of 1000 and 100 instances, respectively),
which are satisfiable uniform Random-3-SAT instances with $n=100$ and $n=250$
variables respectively. We perform 100 independent runs of each algorithm on
each instance in the problem class, and record the number of iterations it
takes to produce a solution which satisfies all clauses.

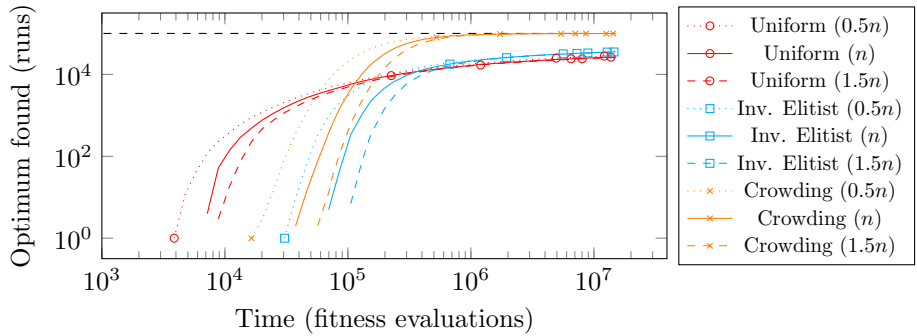
\begin{figure}[t]
\centering
\begin{tikzpicture}
\def\dataTSV{\plotdatapath{maxsat-uf100-solvedruns.tsv}}
\begin{semilogxaxis}[ymode=log,
width=0.75\linewidth,
height=0.40\linewidth,
legend style={nodes={scale=0.8, transform shape}, at={(1.02,0.5)}, anchor=west},
legend columns=1,
xlabel={Time (fitness evaluations)},
ylabel={Optimum found (runs)},
try min ticks=5,
xmin=1000,
mark size=1.5pt,mark options={solid},mark repeat=100,
]
\draw [dashed] (0,10^5) -- (10^7,10^5);

\addplot[mark phase=00,red,mark=o,dotted] table[x index=0,y index=1] {\dataTSV};
\addplot[mark phase=20,red,mark=o] table[x index=0,y index=9] {\dataTSV};
\addplot[mark phase=40,red,mark=o,dashed] table[x index=0,y index=17] {\dataTSV};

\addplot[mark phase=00,cyan,mark=square,dotted] table[x index=0,y index=8] {\dataTSV};
\addplot[mark phase=20,cyan,mark=square] table[x index=0,y index=16] {\dataTSV};
\addplot[mark phase=40,cyan,mark=square,dashed] table[x index=0,y index=24] {\dataTSV};

\addplot[mark phase=00,orange,mark=x,dotted] table[x index=0,y index=2] {\dataTSV};
\addplot[mark phase=20,orange,mark=x] table[x index=0,y index=10] {\dataTSV};
\addplot[mark phase=40,orange,mark=x,dashed] table[x index=0,y index=18] {\dataTSV};

\legend{Uniform ($0.5n$),Uniform ($n$),Uniform ($1.5n$),
Inv. Elitist ($0.5n$),Inv. Elitist ($n$),Inv. Elitist ($1.5n$),
Crowding ($0.5n$),Crowding ($n$),Crowding ($1.5n$)}
\end{semilogxaxis}
\end{tikzpicture}
	\caption{Number of runs finding a satisfying solution for \maxsat (combined over 100 independent runs on each of the 1000 instances in the SATLIB \texttt{uf100} problem class) as a function of the elapsed number of fitness evaluations using various parent selection mechanisms and population sizes.}
	\label{fig:maxsat-uf100b}
\end{figure}

Figure~\ref{fig:maxsat-uf100b} compares the number of successful runs (i.e.
those that find a satisfying truth-value assignment) of the Uniform and Inverse
Elitist selection mechanisms, as well as Deterministic Crowding, 
for varying population sizes
as a function of the available fitness evaluation budget on the \texttt{uf100}
problem class.

Uniform selection is the quickest mechanism to produce satisfying solutions for
some instances. However, as the number of fitness evaluations increases, it
achieves fewer successful runs than both Inverse Elitist selection and
Deterministic Crowding. Additionally, while smaller populations (i.e.
$\mu=n/2$) generally produce satisfying solutions faster, all three considered
population sizes seem to converge to similar values for each algorithm as the
fitness evaluation budget increases. This suggests that there are many initial
search points from which the a global optimum can be reached within the
allocated number of fitness evaluations, hence an explanation of why the parallel (1+1)~EAs due to deterministic crowding perform best.

\begin{figure}[t]
\centering
\begin{tikzpicture}
\def\dataTSV{\plotdatapath{maxsat-uf100-solvedruns.tsv}}
\begin{semilogxaxis}[
width=0.75\linewidth,
height=0.40\linewidth,
legend style={nodes={scale=0.8, transform shape}, at={(1.02,0.5)}, anchor=west},
legend columns=1,
xlabel={Time (fitness evaluations)},
ylabel={Optimum found (runs)},
scaled y ticks = false, 
try min ticks=5,
xmin=10000, 
mark size=1.5pt,mark options={solid},mark repeat=100,
]

\addplot[mark phase=10,red,mark=o] table[x index=0,y index=9] {\dataTSV};
\addplot[mark phase=10,orange,mark=x] table[x index=0,y index=11] {\dataTSV};
\addplot[mark phase=10,green!60!black,mark=+] table[x index=0,y index=13] {\dataTSV};
\addplot[mark phase=20,cyan,mark=*] table[x index=0,y index=14] {\dataTSV};
\addplot[mark phase=20,blue,mark=triangle] table[x index=0,y index=15] {\dataTSV};
\addplot[mark phase=20,purple,mark=square] table[x index=0,y index=16] {\dataTSV};

\legend{Uniform, Inv. 2-Tournament, Inv. 4-Tournament, Inv. 6-Tournament, Inv. 10-Tournament, Inverse Elitist}
\end{semilogxaxis}
\end{tikzpicture}
	\caption{Number of runs finding a satisfying solution for \maxsat (combined over 100 independent runs on each of the 1000 instances in the SATLIB \texttt{uf100} problem class) as a function of the elapsed number of fitness evaluations using various parent selection mechanisms and $\mu = n = 100$.}
	\label{fig:maxsat-uf100a}
\end{figure}

\begin{figure}[t]
\centering
\begin{tikzpicture}
\def\dataTSV{\plotdatapath{maxsat-uf250-solvedruns.tsv}}
\begin{axis}[xmode=log,
width=0.75\linewidth,
height=0.40\linewidth,
legend style={nodes={scale=0.8, transform shape}, at={(1.02,0.5)}, anchor=west},
legend columns=1,
xlabel={Time (fitness evaluations)},
ylabel={Optimum found (runs)},
try min ticks=5,
xmin=100000, 
mark size=1.5pt,mark options={solid},mark repeat=100,
each nth point=4,
]

\addplot[mark phase=20,red,mark=o] table[x index=0,y index=9] {\dataTSV};
\addplot[mark phase=10,orange,mark=x] table[x index=0,y index=11] {\dataTSV};
\addplot[mark phase=20,green!60!black,mark=+] table[x index=0,y index=13] {\dataTSV};
\addplot[mark phase=25,cyan,mark=*] table[x index=0,y index=14] {\dataTSV};
\addplot[mark phase=30,blue,mark=triangle] table[x index=0,y index=15] {\dataTSV};
\addplot[mark phase=40,purple,mark=square] table[x index=0,y index=16] {\dataTSV};

\legend{Uniform, Inv. 2-Tournament, Inv. 4-Tournament, Inv. 6-Tournament, Inv. 10-Tournament, Inverse Elitist}
\end{axis}
\end{tikzpicture}
	\caption{Number of runs finding a satisfying solution for \maxsat (combined over 100 independent runs on each of the 100 instances in the SATLIB \texttt{uf250} problem class) as a function of the elapsed number of fitness evaluations using various parent selection mechanisms and $\mu = n = 250$.}
	\label{fig:maxsat-uf250a}
\end{figure}

Figures~\ref{fig:maxsat-uf100a} and \ref{fig:maxsat-uf250a} respectively compare the
performance of the uniform, inverse tournament, and inverse elitist selection
mechanisms with $\mu=n$ on the \texttt{uf100} and \texttt{uf250} problem
classes (i.e. instances with $n=100$ and $n=250$ respectively).
With a sufficient fitness evaluation budget, the inverse
tournament and inverse elitist selection mechanisms outperform uniform
selection, being able to find the satisfying assignments more frequently. As
expected, large inverse tournaments behave similarly to the deterministic
Inverse Elitist selection mechanism (being slower to find the first satisfying
assignments, but eventually solving more instances), while smaller inverse
tournaments behave similarly to the Uniform selection mechanism (quicker to
find solutions on easier instances, but less successful in the longer runs).
Notably, there exists a range of fitness evaluation budgets where the inverse
tournament mechanisms outperform both the Uniform and Inverse Elitist
mechanisms.
This suggests that on these problem instances, the inverse selection mechanisms
are able to preserve a greater amount of solution diversity during the
optimisation process at the cost of a reduced hillclimbing speed. This
increased diversity enables the global optimum to be found more reliably
compared to the uniform selection mechanism, which may converge on an
initially-promising branch leading to a local optimum that is difficult to
escape from sooner in the optimisation process.

To confirm that the inverse selection mechanisms perform better than Uniform
selection on the \maxsat instances, and that the difference observed in the
experiments is statistically significant, we apply the Wilcoxon signed-rank
test \cite{Wilcoxon45}, and use the Holm-Bonferroni method \cite{Holm79} to
account for multiple comparisons with a significance level $\alpha = 0.01$. We
consider two algorithm performance metrics: the fitness of the best solution
constructed within the allocated budget ($B=15,\!000,\!000$ and
$B=50,\!000,\!000$ fitness evaluations for the \texttt{uf100} and
\texttt{uf250} instance classes respectively), and the time taken to find a
satisfying variable assignment (using the budget as an upper bound if no such
solution is found during a run). When performing the comparisons between the
performance of two algorithms on a problem class (i.e. \texttt{uf100} and
\texttt{uf250}), runs on all instances within the problem class are used, and
runs on the same instance are paired; thus each comparison on the
\texttt{uf100} problem class uses the results of $100 \times 1000 = 10^5$ runs
of each algorithm, and each comparison on the \texttt{uf250} problem class uses
the results of $100 \times 100 = 10^4$ runs of each algorithm. For both problem
classes, and all three considered population sizes ($\mu \in \{n/2, n,
3n/2\}$), the performed two-sided statistical tests confirm that there was a
significant difference in both best solution fitness and average runtime
between the Uniform selection mechanism and other selection mechanisms
considered, while one-sided statistical tests confirm that the inverse
tournament selection mechanisms outperform the Uniform selection mechanism (i.e. find better solutions after at
most $B$ iterations, and find the optimum solutions faster within a budget of
$B$ iterations).


\subsection{Multidimensional Knapskack}

In the Multidimensional Knapsack  problem (\mknap), the objective is to select a
subset of the $n$ available items which maximises the value of the selected
{\color{black}items}, subject to constraints in $m$ dimensions. More formally, given
capacities $b_i$ ($1\leq i \leq m$), item rewards $p_j$ ($1\leq j\leq n$), and
weights $r_{ij}$, the goal is to choose $x_j\in\{0,1\}$ to maximise
$\sum_{i=1}^n p_j x_j$ subject to $\sum_{j=1}^{n} r_{ij} x_j \leq b_i$ for all
$i=1,\ldots, m$.

We represent solutions as an $n$-bit string, with bit $j$ determining whether
item $j$ is included in the selection or not. To guide the EAs toward valid
solutions (i.e. those that satisfy the capacity constraints), we define the
fitness of the solutions as follows: let $W = 1+\sum_{j=1}^n p_j$, then
$$ f(x) = \left(\sum_{j=1}^n x_j p_j\right) + W\cdot \sum_{i=1}^{m} \min\left(0, b_i - \sum_{j=1}^{n} r_{ij}{x_j}\right) . $$
The fitness of solutions which satisfy the constraints is the sum of the
rewards $p_j$ of the selected items. If any constraint is violated, the
fitness will be negative (as the rightmost sum will produce a negative value),
and will improve if an item which contributes to a violating constraint is
removed.

To evaluate the performance of the selection mechanisms on \mknap, we use the
\texttt{mknap} instances from the OR-Library \cite{beasley1990or}, originally
constructed by Chu and Beasley \cite{Chu1998} to evaluate the performance of a
genetic algorithm on the problem. This set contains 270 instances in total,
and consists of 10 instances for each combination of $m \in \{5, 10, 30\}$, $n\in\{100, 250, 500\}$
and constraint tightness ratios $\alpha \in \{0.25, 0.5, 0.75\}$. As the
instances were randomly generated, the fitness value of the global optimum is
not known. Hence, instead of considering the time required to reach the global
optimum, we record the fitness value of a solution found after a certain number
of fitness evaluations, and normalise these values by dividing by the fitness
value of the best known solution for the instance (combining the best fitness
values listed in OR-Library with those found during the runs).

Figures~\ref{fig:mknap-group-m5}, \ref{fig:mknap-group-m10}, \ref{fig:mknap-group-m30}
show the results aggregated over 100 independent
runs for each algorithm and problem instance, giving all algorithms a budget of
$30,\!000,\!000$ fitness evaluations. We note that the larger Inverse Tournament sizes,
such as the Inverse-10-Tournament tend to produce solutions of higher fitness
than the standard $(1+1)$ EA, the standard $(\mu+1)$ EA, and $\mu$ parallel
runs of the standard $(1+1)$ EA (each given $30,\!000,\!000/\mu$ fitness evaluations).
The fully Inverse Elitist selection mechanism does not perform as well.
This may be explained by the mechanism requiring a larger mutation to occur in
order to escape from any local optimum it constructs before any higher-fitness
individuals can be considered.

\begin{figure*}
	\input{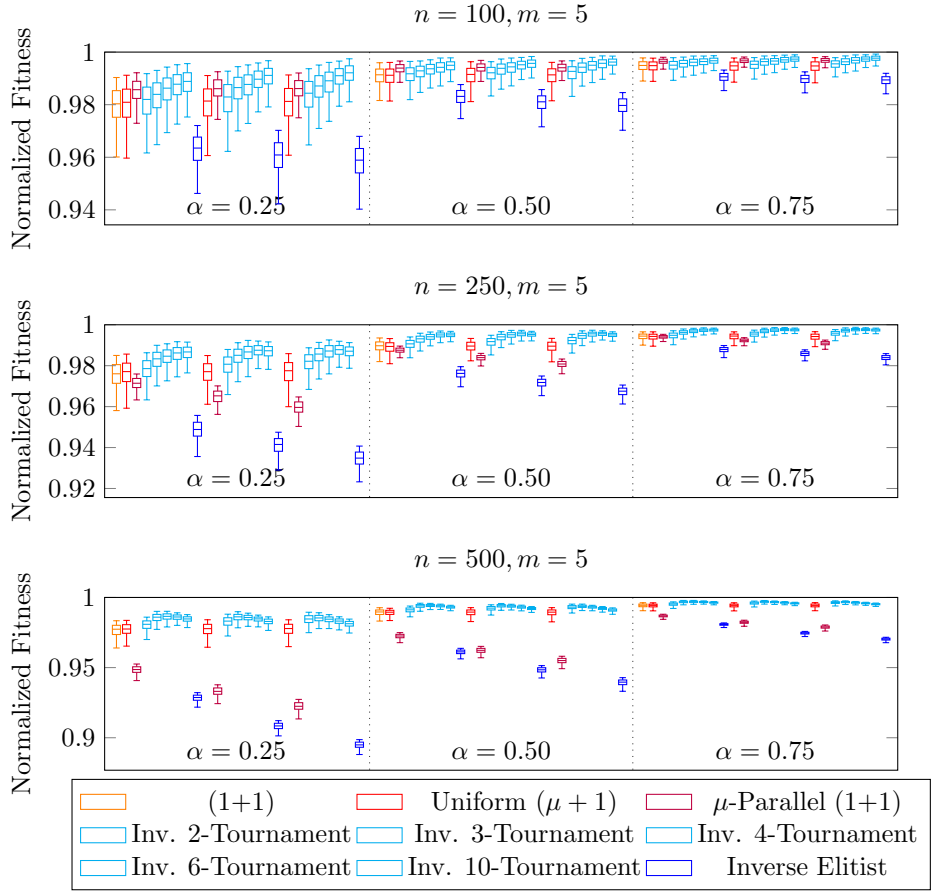}
\caption{Solution quality produced by a $(\mu+1)$~EA after $30,\!000,\!000$
iterations, using various selection mechanisms and choices of $\mu$ on
5-dimensional \mknap instances from OR-LIB with varying constraint
tightness $\alpha$. Each block repeats the population-employing algorithms
using $\mu=n/2$, $\mu=n$, and $\mu=3n/2$ in that order.}
\label{fig:mknap-group-m5}
\end{figure*}
\begin{figure*}
	\input{figures/mknap-grouped-m10.tex}
\caption{Solution quality produced by a $(\mu+1)$~EA after $30,\!000,\!000$
iterations, using various selection mechanisms and choices of $\mu$ on
10-dimensional \mknap instances from OR-LIB with varying constraint
tightness $\alpha$. Each block repeats the population-employing algorithms
using $\mu=n/2$, $\mu=n$, and $\mu=3n/2$ in that order.}
\label{fig:mknap-group-m10}
\end{figure*}
\begin{figure*}
	\input{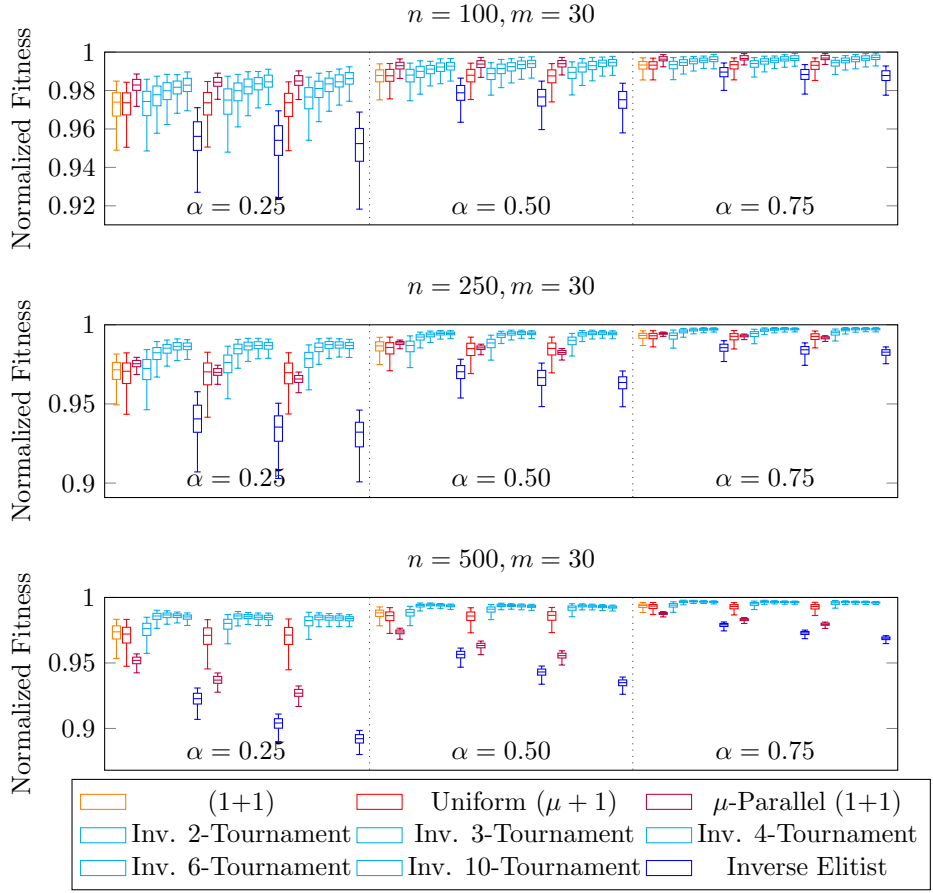}
\caption{Solution quality produced by a $(\mu+1)$~EA after $30,\!000,\!000$
iterations, using various selection mechanisms and choices of $\mu$ on
30-dimensional \mknap instances from OR-LIB with varying constraint
tightness $\alpha$. Each block repeats the population-employing algorithms
using $\mu=n/2$, $\mu=n$, and $\mu=3n/2$ in that order.}
\label{fig:mknap-group-m30}
\end{figure*}

To evaluate whether the observed differences in solution fitness after
$30,\!000,\!000$ iterations are significant, and to confirm that the Inverse
Tournament selection mechanisms outperform Uniform selection and $\mu$ Parallel
runs mechanisms, we apply the same Wilcoxon signed-rank test \cite{Wilcoxon45}, and Holm-Bonferroni method \cite{Holm79} 
as in the previous section for \maxsat again using a significance level of $\alpha = 0.01$. 
When performing the
comparisons between the performance of two algorithms, runs on all available
instances are used, and runs on the same instance are paired; thus, each
comparison is based on the results of $100 \times 270 = 27,\!000$ runs of each
algorithm. The performed two-sided tests confirm that for all considered
population sizes ($\mu \in \{n/2, n, 3n/2\}$) there is a significant difference
between the solution fitness produced when using Uniform selection and all
other algorithms, as well as between the $\mu$ parallel runs of a $(1+1)$ EA
and all other algorithms, while the performed one-sided tests confirm that the
Inverse Tournament mechanisms produce solutions with higher fitness (compared
to Uniform selection and $\mu$ parallel (1+1) runs) at the end of the
$B$-iteration budget.

For these problem instances and fitness evaluation budget, the poor performance
of the $\mu$ parallel runs of the $(1+1)$ EA, particularly as $\mu$ increases,
may be explained by the existence of many basins of attraction toward local
optima that are difficult to escape from. As the $\mu$ $(1+1)$ EAs do not
interact during the optimisation process, once a $(1+1)$ EA has reached a local
optimum, its remaining computational budget is spent waiting for a mutation
that escapes the local optimum. If a large mutation is required to escape,
waiting for it to occur may be wasteful when compared to using fitness
evaluations to explore the regions of the search space around higher-fitness
solutions found by the other search trajectories.

The Inverse Tournament selection mechanisms are able to remove local optima
from the population by selecting a better-fitness individual for reproduction,
re-allocating the remaining fitness evaluations to explore a higher-fitness
region of the search space, enabling the mechanism to more gracefully cope with
increasing population sizes. The size of the inverse tournament balances the
ease of such re-allocations with the diversity-preserving effects of inverse
selection: small tournaments are more likely to converge the population on a
single trajectory sooner, while larger tournaments may use more fitness
evaluations to explore the region around a local optimum before ejecting it
from the population.

\section{Discussion}

\textcolor{black}{In the previous sections we have shown several advantages of using lower selective pressure than uniform for standard steady-state EAs i.e., that only use mutation.
In this section we present a discussion of how previous literature has argued that uniform parent selection induces too high selective pressure also in steady-state GAs that use crossover as well as mutation.} 

Striving for a good balance between exploration and exploitation has arguably been the hottest topic in evolutionary computation since its conception.
Already in his seminal book, John Holland introduced and analysed two forms of reproductive schemes to serve as a basis for the field of genetic algorithms~\cite{Holland1992}.
One was the canonical generational model, where $\mu$ individuals were selected according to fitness proportional selection and the offspring replaced the entire population, independent of their quality.
The other scheme used {\it overlapping generations} where only one new offspring was created in each generation: an individual was selected for reproduction with fitness proportional selection and an individual was removed uniformly at random
from the parent population to make space for the new offspring. Hence, individuals would have an expected lifetime of $\mu$ generations rather than only one generation as in generational models. The idea was that if the average fitness of the population didn't change much over the lifetime of an individual $i$, then the expected number of offspring of that individual would be the same as in the generational model (i.e., $f(i) / \bar{f}$, which was the basis for Holland's schema theorem).

In practice, the average fitness of the population changes  over the lifetime of individuals in the overlapping model. An empirical study by Kenneth De Jong showed how the use of overlapping generations introduced genetic drift (i.e., faster takeover times) which increased in severity with the decrease in size of the offspring population (i.e., $\lambda=1$ leads to the most severe genetic drift)~\cite{DeJongSarma1993}. As a result, the generational model was preferred by researchers and became the standard as well as the backbone of the canonical genetic algorithm~\cite{Goldberg1989}. Since fitness proportional selection was applied to select parents, fitness-scaling functions had to be used to distinguish between individuals and avoid premature convergence once all the population had similar fitness values.

In 1985 James Baker reported experiments where the selection probability of individuals was allocated according to their rank in the population rather than according to their fitness relative to the population's average fitness.
Rank-based selection not only resolved the scaling problems due to fitness proportional selection (which had previously also been avoided by Brindle's Tournament selection~\cite{DebGoldberg1991}) but, more importantly, allowed for an easy way to control the selective pressure and keep it constant throughout the optimisation process~\cite{Whitley1989}. Rank selection allows to explicitly set the expected number of offspring of each individual as a function of their rank in the population. 
Baker's intention was to slow down convergence to achieve more accurate optimisation. With the introduction of the Genitor (GENetic ImplemenTOR) algorithm, the first of what Syswerda termed ``steady-state'' GAs~\cite{Syswerda1991}, 
the opportunity to use steady-state schemes to gain control over the conflict between exploration and exploitation was also envisaged. Whitley states that ``It can be argued that there are only two primary factors in genetic search: population diversity and selective pressure. These two factors are inversely related [...] As selective pressure is increased, the search focuses on the top individuals in the population, but because of this ``exploitation'' genetic diversity is lost. Reducing the selective pressure (or using larger population sizes) increases ``exploration'' because more genotypes and thus more schemata are involved in the search [...] [Hence] selective pressure and population diversity should be controlled as directly as possible [...] Selective pressure can be {\it simply} controlled by allocating reproductive trials according to rank''~\cite{Whitley1989}. 

However, the envisaged balance between exploration and exploitation was never achieved with the Genitor algorithm or other steady-state GAs because
the same parent selection operators as used in generational schemes were adopted (typically rank selection with a bias of at least 1.5 in Genitor and fitness-proportional selection by Syswerda)
while at the same time low quality individuals were deleted for replacement (the worst in Genitor, exponential inverse ranking in Syswerda's version). 
Naturally, these choices led to considerably greater genetic drift than that witnessed by De Jong (when, in his overlapping GA, he deleted random individuals) 
as pointed out by an excellent theoretical analysis of the takeover times of various selection operators by Goldberg and Deb~\cite{DebGoldberg1991}. 
In this paper we have shown how the steady-state scheme can indeed be used to largely reduce the exploration/exploitation conflict. 
 While decreasing the selective pressure to uniform or even lower is indeed impractical (if not impossible) in generational schemes~\cite{LehreEtAl2018}, it can be easily implemented in steady-state algorithms.

In particular, the artificially introduced {\it elitism} in steady-state GAs {\it protects} the best individuals so long as better ones are not identified. Hence, there is no hurry to exploit the best individuals before they disappear: as long as they really are good, they will not be removed from the population.  
This simple insight sheds light on how the conflict between exploration and exploitation may be largely reduced with steady-state schemes: there is no need for a perfect balance between exploration and exploitation. Exploration may proceed for as long as desired because the best individuals will always be waiting to be exploited as long as better solutions are not identified first. 
In particular, steady-state algorithms allow for the selective pressure to be {\it reduced} to low levels that are impossible to achieve with the generational model. 
This is indeed the distinguishing power of steady-state EAs and what makes overlapping generations special. 
In this paper we have provided theoretical and empirical evidence of the advantages that can be gained by exploiting this insight.
\textcolor{black}{ 
To the best of our knowledge no previous studies of reducing the parent selective pressure below uniform have been undertaken.
We have recently discovered that the insight presented in this paper to avoid premature convergence (i.e., preventing the best individuals from reproducing) was briefly mentioned by B{\"a}ck and Hoffmeister in 1991 and termed "left extinctive selection"~\cite{BackHoffmeisterICGA1991}. However, they deemed that the idea might not have any practical relevance and, to the best of our knowledge, was abandoned (the idea is indeed impractical in the context of the ($\mu,\lambda)$ generational GAs	they were using in this particular work: a steady-state scheme is required). On the other hand,  studies on replacement strategies to reduce genetic drift have been undertaken which led to smaller takeover times than in standard steady-state GAs~\cite{Smith2007,BrankeCutaiaDold1999}.}

\section{Conclusion}
In this paper we have analysed the effects of decreasing the parent selection pressure of a standard steady-state EA well below the commonly used uniform selection and have shown the benefits of this decrease without requiring any further algorithmic sophistication.
In particular, we first 
proved that the \mupoea and \muporls evolving populations of arbitrary polynomial size 
via uniform parent selection, or via operators with higher selective pressure, fail
to identify both optima of the \twomax benchmark function with high probability.
On the other hand, we proved that the algorithms are efficient at locating both optima for reasonable population sizes
if Inverse Elitist selection is used instead. An experimental analysis explains why Inverse Tournament selection with small tournament sizes
is even more powerful for global optimisation: for the tested problem sizes it has higher success probabilities for \twomax and is capable of 
escaping local optima of \truncatedtwomax. In particular, in order to escape from local optima some probability of not selecting the worst individuals is crucial.
%

Afterwards we analysed the impressive performance of the low selective pressure algorithms for more sophisticated multi-modal problems generalised from  previously used benchmark functions from the literature which we named \textsc{RidgeWithBranches} and \textsc{TwoGradients}.
The two functions have different characteristics. The former consists of a series of local optima that should be identified one at a time, while the latter consists of two gradients with different steepnesses which should be optimised in parallel.
While the Inverse Tournament \mupoea is efficient for all the problems, we are not aware of any other algorithm, with or without diversity, that can efficiently optimise all the three function classes \twomax, \textsc{RidgeWithBranches} and \textsc{TwoGradients}. In fact we are not aware of any other algorithm that is effective for \textsc{RidgeWithBranches} alone. The latter is an exemplary benchmark function to highlight the explorative capabilities of steady-state EAs with low selective pressure: the algorithm explores one branch at a time, identifying one local optimum after another, and only abandoning each one if it does not turn out to be the global optimum. 

The last section of the paper presented an experimental analysis of the algorithms for the classical \textsc{MaxSat} and Multidimensional Knapsack problems.
For both problems the inverse selection operators outperform uniform selection. However, for the MaxSat instances single trajectory algorithms with restarts outperform all the population based EAs, while for the knapsack instances the opposite is true. 

We conclude with a final comment regarding the optimisation time of the inverse selection algorithms. 
We have presented very effective algorithms for multi-modal optimisation with minimal sophistication (i.e., no additional parameters compared to basic EAs).
For all the problems considered in this paper, we see no reason to use a selection operator with higher pressure than inverse tournaments of constant size.
For difficult multi-modal problems with many optima, extremely fast global optimisation cannot reasonably be expected.
For easier unimodal problems, the selective pressure may be increased by simply decreasing the population size (i.e., if the problem is easy, then large populations are not necessary).
For instance, consider the simple OneMax problem. The expected runtime of the inverse tournament \mupoea is $O(\mu n \ln n)$ which is the same asymptotic expected runtime of the algorithm using uniform selection and deterministic crowding as diversity mechanism~\cite{FriedrichOSWECJ09}.
The expected runtime can be reduced to the best achievable via standard bit mutation-only (i.e.,   $O(n \ln n)$) simply by reducing the population size to a constant (i.e., effectively increasing the selective pressure).
Furthermore, we believe that the standard ($\mu$+1)~GA (i.e., with crossover)  coupled with inverse tournament selection with constant tournament and population sizes is faster than any standard bit mutation-only evolutionary algorithm for OneMax. 
This result may be proven by simply re-calculating the takeover times in the analysis of~\cite{CorusOliveto2019}. In particular, also the result that the algorithm is faster than any unary unbiased 
search heuristic, if identical copies of parents are not re-evaluated, holds.
\textcolor{black}{Indeed, future work should explore theoretically and experimentally the effects of lower selective pressure on more sophisticated algorithms using crossover, more advanced mutation operators~\cite{CorusOlivetoYazdaniFASTAIS20018,CorusOlivetoYazdaniAIJ2019,DoerrGecco2017}, and problem-tailored ones~\cite{GSAT,WalkSAT}.}

{\bf Acknowledgments} 

The authors would like to thank the members of the EC-Theory group in the UK for initial discussions regarding the analysis of 
steady-state EAs with large population sizes for \textsc{TwoMax}.
A special thank you also goes to Christian Gie{\ss}en for his collaboration in the preliminary versions of this paper.
Part of this work was funded by the EPSRC under grant agreement N. EP/M004252/1.


\bibliographystyle{plain}
  \bibliography{references}

\end{document}